\newtheorem*{Proof*}{Proof}
\newtheorem{Remark}{Remark}
\def \bi {\begin{itemize}}
\def \ei {\end{itemize}}
\def \be {\begin{equation}}
\def \ee {\end{equation}}
\def \ba {\begin{align}}
\def \ea {\end{align}}
\newcounter{descriptcount}
\NewDocumentEnvironment{enumdescript}{O{}}{%
    \setcounter{descriptcount}{0}%
    \description%
  }%
  {\enddescription}
\newtheorem{definition}{Definition}
\newtheorem{theorem}{Theorem}
\newtheorem{proposition}{Proposition}
\newtheorem{lemma}{Lemma}
\newcommand{\R}{\mathbb{R}}
\newcommand{\N}{\mathbb{N}}
\newcommand{\cN}{\mathcal{N}}
\newcommand{\cP}{\mathcal{P}}
\newcommand{\cG}{\mathcal{G}}
\newcommand{\cM}{\mathcal{M}}
\newcommand{\cD}{\mathcal{D}}
\newcommand{\cA}{\mathcal{A}}
\newcommand{\1}{\mathbf{1}}
\newcommand{\0}{\mathbf{0}}
\DeclareMathOperator{\Softmax}{softmax}
\DeclareMathOperator{\Span}{span}
\DeclareMathOperator{\JSR}{JSR}
\DeclareMathOperator{\Diag}{diag}
\DeclareMathOperator{\LeakyReLU}{LeakyReLU}
\DeclareMathOperator{\ew}{ew}
\DeclareMathOperator{\vect}{vec}
\title{Demystifying Oversmoothing in \\Attention-Based Graph Neural Networks}
\author{Xinyi Wu$^{1,2}$\qquad Amir Ajorlou$^2$\qquad Zihui Wu$^3$\qquad Ali Jadbabaie$^{1,2}$\\  
$^1$Institute for Data, Systems and Society (IDSS), MIT\\
$^2$Laboratory for Information and Decision Systems (LIDS), MIT\\
$^3$Department of Computing and Mathematical Sciences (CMS), Caltech\\
\texttt{\{xinyiwu,ajorlou,jadbabai\}@mit.edu} \quad \texttt{zwu2@caltech.edu}}
\begin{document}

\setlength{\abovecaptionskip}{2pt}

\setlength{\belowdisplayskip}{2pt}

\maketitle

\begin{abstract}
    Oversmoothing in Graph Neural Networks (GNNs) refers to the phenomenon where increasing network depth leads to homogeneous node representations. While previous work has established that Graph Convolutional Networks (GCNs) exponentially lose expressive power, it remains controversial whether the graph attention mechanism can mitigate oversmoothing. In this work, we provide a definitive answer to this question through a rigorous mathematical analysis, by viewing attention-based GNNs as nonlinear time-varying dynamical systems and incorporating tools and techniques from the theory of products of inhomogeneous matrices and the joint spectral radius. We establish that, contrary to popular belief, the graph attention mechanism cannot prevent oversmoothing and loses expressive power exponentially. The proposed framework extends the existing results on oversmoothing for symmetric GCNs to a significantly broader class of GNN models, including random walk GCNs, Graph Attention Networks (GATs) and (graph) transformers. In particular, our analysis accounts for asymmetric, state-dependent and time-varying aggregation operators and a wide range of common nonlinear activation functions, such as ReLU, LeakyReLU, GELU and SiLU.%(ReLU, LeakyReLU, GELU, and SiLU?)}
\end{abstract}

\section{Introduction}
Graph neural networks (GNNs) have emerged as a powerful framework for learning with graph-structured data~\cite{Gori2005GNN, Scarselli2009TheGN,Bruna2014SpectralNA,Duvenaud2015ConvolutionalNO, Defferrard2016ConvolutionalNN, Kipf2017SemiSupervisedCW, Velickovic2017GraphAN} and have shown great promise in diverse domains such as molecular biology~\cite{You2018GraphCP}, physics~\cite{Battaglia2016Interaction} and recommender systems~\cite{Wu2020GraphNN}. Most GNN models follow the \emph{message-passing} paradigm~\citep{Gilmer2017NeuralMP}, where the representation of each node
is computed by recursively aggregating and transforming the representations of its neighboring nodes.

One notable drawback of repeated message-passing is \emph{oversmoothing}, which refers to the phenomenon that stacking message-passing GNN layers makes node representations of the same connected component converge to the same vector~\cite{Kipf2017SemiSupervisedCW,Li2018DeeperII,Oono2019GraphNN,Cai2020ANO,Keriven2022NotTL,Wu2022ANA,Rusch2023SurveyOS}. As a result, whereas depth has been considered crucial for the success of deep learning in many fields such as computer vision~\cite{He2016DeepRL}, most GNNs used in practice remain relatively shallow and often only have few layers~\cite{Kipf2017SemiSupervisedCW,Velickovic2017GraphAN,Wu2019ACS}. 
On the theory side, while previous works have shown that the symmetric Graph Convolution Networks (GCNs) with ReLU and LeakyReLU nonlinearities exponentially lose expressive power, analyzing the oversmoothing phenomenon in other types of GNNs is still an open question~\cite{Oono2019GraphNN,Cai2020ANO}. In particular, the question of whether the graph attention mechanism can prevent oversmoothing has not been settled yet. Motivated by the capacity of graph attention to distinguish the importance of different edges in the graph, 
some works claim that oversmoothing is alleviated in Graph Attention Networks (GATs), heuristically crediting to GATs’ ability to learn adaptive node-wise aggregation operators via the attention mechanism~\cite{Min2020ScatteringGO}.  On the other hand, it has been empirically observed that similar to the case of GCNs, oversmoothing seems inevitable for attention-based GNNs such as GATs~\cite{Rusch2023SurveyOS} or (graph) transformers~\cite{Shi2022RevisitingOI}. The latter can be viewed as attention-based GNNs on complete graphs.

In this paper, we provide a definitive answer to this question --- attention-based GNNs also lose expressive power exponentially, albeit potentially at a slower exponential rate compared to GCNs. Given that attention-based GNNs can be viewed as nonlinear time-varying dynamical systems, our analysis is built on the theory of products of inhomogeneous matrices~\cite{Hartfiel2002NonhomogeneousMP,Seneta2008NonnegativeMA} and  the concept of joint spectral radius~\cite{Rota1960ANO}, as these methods have been long proved effective in the analysis of time-inhomogeneous markov chains and ergodicity of dynamical systems~\cite{Hartfiel2002NonhomogeneousMP,Seneta2008NonnegativeMA,Blondel2005ConvergenceIM}. While classical results only apply to generic one-dimensional linear time-varying systems, we address four major challenges arising in analyzing attention-based GNNs:
\begin{enumerate*}
\item[(1)] the aggregation operators computed by attention are state-dependent, in contrast to conventional fixed graph convolutions;
\item[(2)] the systems are multi-dimensional, which involves the coupling across feature dimensions;
\item[(3)]the dynamics are nonlinear due to the nonlinear activation function in each layer;
\item[(4)] the learnable weights and aggregation operators across different layers result in time-varying dynamical systems.

\end{enumerate*}

\vspace{1ex}
\textbf{Below, we highlight our key  contributions:}
\advance\leftmargini -2em
\begin{itemize}
    \item As our main contribution, we establish that oversmoothing happens exponentially as model
depth increases for attention-based GNNs, resolving the long-standing debate about whether attention-based GNNs can prevent oversmoothing.

    \item We analyze attention-based GNNs through the lens of nonlinear, time-varying dynamical systems. The strength of our analysis stems from its ability to 
    exploit the inherently common connectivity structure among the typically asymmetric  state-dependent aggregation operators at different attentional layers. This enables us to derive rigorous theoretical results on the ergodicity of infinite
    products of matrices associated with the evolution of node representations across layers. Incorporating  results from the theory of products of inhomogeneous matrices and their joint spectral radius, we then establish that oversmoothing happens at an exponential rate for attention-based GNNs from our ergodicity results. 
    
    \item Our analysis generalizes the existing results on oversmoothing for symmetric GCNs to a significantly broader class of GNN models with asymmetric, state-dependent and time-varying aggregation operators and nonlinear activation functions under general conditions. In particular, our analysis can accommodate  a wide range of common nonlinearities such as ReLU, LeakyReLU, and even non-monotone ones like GELU and SiLU. We validate our theoretical results on six real-world datasets with two attention-based GNN architectures and five common nonlinearities.
\end{itemize}

\section{Related Work}

\paragraph{{Oversmoothing problem in GNNs}} Oversmoothing is a well-known problem in deep GNNs, and many techniques have
been proposed in order to mitigate it practically~\cite{Chen2020SimpleAD,Zhao2020PairNorm,Xu2018Representation,Klicpera2019PredictTP,Rong2019DropEdgeTD,Hasanzadeh2020BayesianGN,Klicpera2019DiffusionIG}. On the theory side, analysis of oversmoothing has only been carried out for the graph convolution case~\cite{Oono2019GraphNN,Cai2020ANO,Keriven2022NotTL,Wu2022ANA}. In particular, by viewing graph convolutions as a form of
Laplacian filter, prior works have shown that for GCNs, the node representations within each connected component of a graph will converge to the same value exponentially~\cite{Oono2019GraphNN,Cai2020ANO}. However, oversmoothing is also empirically observed in attention-based GNNs such as GATs~\cite{Rusch2023SurveyOS} or transformers~\cite{Shi2022RevisitingOI}.  Although some people hypothesize based on heuristics that attention can alleviate oversmoothing~\cite{Min2020ScatteringGO}, a rigorous analysis of oversmoothing in attention-based GNNs remains open~\cite{Cai2020ANO}.

\paragraph{Theoretical analysis of attention-based GNNs}
{Existing theoretical results on attention-based GNNs are limited to one-layer graph attention. Recent works in this line include Brody et al.~\cite{Brody2021HowAA} showing that the ranking of attention scores computed by a GAT layer is unconditioned on the query node, and Fountoulakis et al.~\cite{Fountoulakis2022GraphAR} studying node classification performance of one-layer GATs on a random graph model. 
More relevantly,  Wang et al.~\cite{Wang2019ImprovingGA} made a claim that oversmoothing is asymptotically inevitable in GATs. 
Aside from excluding nonlinearities in the analysis, there are several flaws in the proof of their main result (Theorem~2). In particular, their analysis assumes the same stationary distribution for all the stochastic matrices output by attention at different layers. This is typically not the case given the state-dependent and time-varying nature of these matrices. In fact, the main challenge in analyzing multi-layer attention lies in the state-dependent and time-varying nature of these input-output mappings. Our paper offers novel contributions to the research on attention-based GNNs by developing a rich set of tools and techniques for analyzing multi-layer graph attention. This addresses a notable gap in the existing theory, which has primarily focused on one-layer graph attention, and paves the way for future research to study other aspects of multi-layer graph attention.

\section{Problem Setup}
\subsection{Notations}
Let $\R$ be the set of real numbers and $\N$ be the set of natural numbers. We use the shorthands $[n]:=\{1,\ldots,n\}$ and $\N_{\geq0} := \N\cup\{0\}$. We denote the zero-vector of length $N$ by $\0\in\R^N$ and the all-one vector of length $N$ by $\1\in\R^N$. We represent an undirected graph with $N$ nodes by $\mathcal{G} = (A, X)$, where $A \in \{0, 1 \}^{N \times N}$ is the \emph{adjacency matrix} and $X \in \mathbb{R}^{N\times d}$ are the \emph{node feature vectors} of dimension $d$. Let $E(\cG)$ be the set of edges of $\cG$. 
For nodes $i, j \in [N]$, $A_{ij} = 1$ if and only if $i$ and $j$ are connected with an edge in $\mathcal{G}$, i.e., $(i,j)\in E(\cG)$.  For each $i\in[N]$, $X_{i} \in \R^{d}$ represents the feature vector for node $i$. We denote the \emph{degree matrix} of $\mathcal{G}$ by $D_{\deg} = \Diag(A\1)$ and the set of all neighbors of node $i$ by $\cN_i$. 

Let $\|\cdot\|_2$, $\|\cdot\|_\infty$, $\|\cdot\|_F$ be the $2$-norm, $\infty$-norm and Frobenius norm, respectively.
We use $\|\cdot\|_{\max}$ to denote the matrix max norm, i.e., for a matrix $M\in\R^{m\times n}$, $\|M\|_{\max} \vcentcolon = \underset{ij}{\max} |M_{ij}|$. We use $\leq_{\ew}$ to denote element-wise inequality.
Lastly, for a matrix $M$, we denote its $i^{th}$ row by $M_{i\cdot}$ and $j^{th}$ column by $M_{\cdot j}$.

\subsection{Graph attention mechanism}

\label{sec:graph_att_mechanism}

We adopt the following definition of graph attention mechanism. Given node representation vectors $X_i$ and $X_j$, we first apply a shared learnable linear transformation $W\in\R^{d\times d'}$ to each node, and then use an attention function $\Psi: \R^{d'} \times \R^{d'} \to \R$ to compute a raw attention coefficient
$$e_{ij} = \Psi(W^\top X_i,W^\top X_j)$$
that indicates the importance of node $j$'s features to node $i$. Then the graph structure is injected into the mechanism by performing masked attention, where for each node $i$, we only compute its attention to its neighbors. To make coefficients easily comparable across
different nodes, we normalize $e_{ij}$ among all neighboring nodes $j$ of node $i$ using the softmax function to get the normalized attention coefficients:
$$P_{ij} = {\Softmax}_j(e_{ij}) = \frac{\exp(e_{ij})}{\sum_{k\in\cN_i}\exp(e_{ik})}\,.$$
The matrix $P$, where the entry in the $i^{th}$ row and the $j^{th}$ column is $P_{ij}$, is a row stochastic matrix. We refer to $P$ as an \emph{aggregation operator} in message-passing.

\subsection{Attention-based GNNs}
Having defined the graph attention mechanism, we can now write the update rule of a single graph attentional layer as
$$X' = \sigma(PXW)\,,$$ where $X$ and $X'$ are are the input and output node representations, respectively, $\sigma(\cdot)$ is a pointwise nonlinearity function, and the aggregation operator $P$ is a function of $XW$.

As a result, the output of the $t^{th}$ graph attentional layers can be written as 
\begin{equation}
X^{(t+1)} = \sigma(P^{(t)}X^{(t)}W^{(t)})\qquad t\in\N_{\geq 0},
\label{eq:GAT_dynamics}
\end{equation}
where $X^{(0)} = X$ is the input node features, $W^{(t)}\in \R^{d'\times d'}$ for $t\in \N$ and $W^{(0)}\in \R^{d\times d'}$. For the rest of this work, without loss of generality, we assume that $d=d'$.

The above definition is based on \emph{single-head} graph attention. \emph{Multi-head} graph attention uses $K\in\N$ weight matrices $W_1, \ldots, W_K$ in each layer and averages their individual single-head outputs~\cite{Velickovic2017GraphAN,Fountoulakis2022GraphAR}. Without loss of generality, we consider single graph attention in our analysis in Section~\ref{sec:main_results}, but we note that our results automatically apply to the multi-head graph attention setting since $K$ is finite.

\subsection{Measure of oversmoothing}
\label{sec:oversmoothing}
We use the following notion of oversmoothing, inspired by the definition proposed in Rusch et al.~\cite{Rusch2023SurveyOS}\footnote{We distinguish the definition of oversmoothing and the rate of oversmoothing. This parallels the notion of stability and its rate in dynamical systems.}:

\begin{definition}
For an undirected and connected graph $\cG$, $\mu: \R^{N\times d} \to \R_{\geq 0}$ is called a node similarity measure if it satisfies the following axioms:
\begin{enumerate}[leftmargin=5ex]
    \item $\exists c\in \R^{d}$ such that $X_i = c$ for all node $i$ if and only if $\mu(X) = 0$, for $X \in \R^{N\times d}$;
    \item $\mu(X+Y) \leq \mu(X)+\mu(Y)$, for all $X,Y\in \R^{N\times d}$.
\end{enumerate}
Then oversmoothing with respect to $\mu$ is defined as the layer-wise 
convergence of the node-similarity measure $\mu$ to zero, i.e.,
\begin{equation}
    \lim_{t\to\infty}\mu(X^{(t)})=0.
\label{eq:def_oversmoothing}
\end{equation}
We say oversmoothing happens at an exponential rate if there exists
constants $C_1$, $C_2 >0$, such that for any $t\in \N$,
\begin{equation}
    \mu(X^{(t)})\leq C_1e^{-C_2t}.
\label{eq:def_oversmoothing_exp}
\end{equation}
\end{definition}

We establish our results on oversmoothing for attention-based GNNs using the following node similarity measure: 
\begin{equation}
\mu(X)\vcentcolon=\|X - \1\gamma_X \|_F, \text{ where } \gamma_X = \frac{\1^\top X}{N}\,.
\label{eq::mu}
\end{equation}

\begin{proposition}
$\|X - \1\gamma_X \|_F$ is a node similarity measure.
\label{prop:node_similarity_measure}
\end{proposition}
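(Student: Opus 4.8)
The plan is to check the two axioms in the Definition directly, exploiting the fact that the ``centering'' operation underlying $\mu$ is linear. Concretely, writing $B \vcentcolon= I_N - \tfrac1N\1\1^\top \in \R^{N\times N}$, one has $X - \1\gamma_X = BX$, so that $\mu(X) = \|BX\|_F$. Since $X\mapsto BX$ is linear and $\|\cdot\|_F$ is a genuine norm on $\R^{N\times d}$, the function $\mu$ is a seminorm; this observation is what makes both axioms essentially immediate. In particular, for any $X,Y$ we get $\gamma_{X+Y}=\gamma_X+\gamma_Y$, hence $(X+Y)-\1\gamma_{X+Y} = (X-\1\gamma_X)+(Y-\1\gamma_Y)$.

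For Axiom~2 (subadditivity), I would simply combine the previous identity with the triangle inequality for the Frobenius norm:
\[
\mu(X+Y) = \big\|(X-\1\gamma_X)+(Y-\1\gamma_Y)\big\|_F \le \|X-\1\gamma_X\|_F + \|Y-\1\gamma_Y\|_F = \mu(X)+\mu(Y).
\]
For Axiom~1, I would argue both implications. If $X_i = c$ for all $i$ and some $c\in\R^d$, then $\1^\top X = N c^\top$, so $\gamma_X = c^\top$, every row of $\1\gamma_X$ equals $c^\top$, and therefore $X-\1\gamma_X = \0$, i.e.\ $\mu(X)=0$. Conversely, if $\mu(X) = \|X-\1\gamma_X\|_F = 0$, then $X = \1\gamma_X$ because $\|\cdot\|_F$ is a norm; this matrix has all rows equal to $\gamma_X$, so $X_i = c$ for all $i$ with $c = \gamma_X^\top$. (Equivalently: $BX=0$ iff each column of $X$ is a constant multiple of $\1$, iff all rows of $X$ coincide.)

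I do not expect any genuine obstacle here — the statement is a routine well-definedness check. The only points requiring a little care are bookkeeping of the row/column conventions in Axiom~1 (so that $\gamma_X$, a $1\times d$ object, is correctly matched with the $d$-vectors $X_i$), and explicitly recording that $X\mapsto X-\1\gamma_X$ is linear, which is what reduces the subadditivity axiom to the ordinary triangle inequality.
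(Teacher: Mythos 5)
Your proof is correct and follows essentially the same route as the paper: verify subadditivity via the linearity of $X\mapsto\gamma_X$ plus the triangle inequality for $\|\cdot\|_F$, and verify Axiom~1 from positive-definiteness of the Frobenius norm. Your additional framing via the centering projection $B=I_N-\tfrac1N\1\1^\top$ (making $\mu$ manifestly a seminorm) is a nice way to package the same argument, and in fact mirrors a projection operator the paper itself introduces later, but it is not a different proof.
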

The proof of the above proposition is provided in Appendix~\ref{app:prop:node_similarity_measure}. Other common node similarity measures include the Dirichlet energy~\cite{Rusch2023SurveyOS,Cai2020ANO}.\footnote{In fact, our results are not specific to our choice of node similarity measure $\mu$ and directly apply to any Lipschitz node similarity measure, including the Dirichlet energy under our assumptions. See Remark 2 after Theorem~\ref{thm:exp_convergence}.} Our measure is mathematically equivalent to the measure $\underset{Y=\1c^\top, c\in\R^d}{\inf}\{\|X-Y\|_F\}$ defined in Oono and Suzuki~\cite{Oono2019GraphNN}, but our form is more direct to compute. One way to see the equivalence is to consider the orthogonal projection into the space perpendicular to $\Span\{\1\}$, denoted by $B\in\R^{(N-1)\times N}$. Then our definition of $\mu$ satisfies $\|X-\1\gamma_x\|_F = \|BX\|_F$, where the latter quantity is exactly the measure defined in~\cite{Oono2019GraphNN}.

\subsection{Assumptions} 
We make the following assumptions (in fact, quite minimal) in deriving our results: 

\begin{enumerate}[leftmargin=5ex]
    \item [\textbf{A1}] The graph $\mathcal{G}$ is connected and non-bipartite.
    \item [\textbf{A2}] The attention function $\Psi(\cdot,\cdot)$ is continuous.
    \item [\textbf{A3}] The sequence $\{\|\prod_{t=0}^k|W^{(t)}|\|_{\max}\}_{k=0}^{\infty}$ is bounded.
    \item [\textbf{A4}] The point-wise nonlinear activation function $\sigma(\cdot)$ satisfies $0 \leq \frac{\sigma(x)}{x} \leq 1$ for $x\neq 0$ and $\sigma(0) = 0$. 
\end{enumerate}

We note that all of these assumptions are either standard or quite general. Specifically, \textbf{A1} is a standard assumption for theoretical analysis on graphs. For graphs with more than one connected components, the same results apply to each connected component. \textbf{A1} can also be replaced with requiring the graph $\mathcal{G}$ to be connected and have self-loops at each node. Non-bipartiteness and self-loops both ensure that long products of stochastic matrices corresponding to aggregation operators in different graph attentional layers will eventually become strictly positive.

The assumptions on the GNN architecture \textbf{A2} and \textbf{A4} can be easily verified for commonly used GNN designs. For example, the attention function $\LeakyReLU(a^\top[W^\top X_i||W^\top X_j]), a\in\R^{2d'}$ used in the GAT~\cite{Velickovic2017GraphAN}, where $[\cdot||\cdot]$ denotes concatenation, is a specific case that satisfies \textbf{A2}. Other architectures that satisfy~\textbf{A2} include GATv2~\cite{Brody2021HowAA} and (graph) transformers~\cite{Vaswani2017AttentionIA}.  As for \textbf{A4}, one way to satisfy it is to have $\sigma$ be $1$-Lipschitz and $\sigma(x)\leq 0$ for $x < 0$ and $\sigma(x) \geq 0$ for $x>0$. Then it is easy to verify that most of the commonly used nonlinear activation functions such as ReLU, LeakyReLU, GELU, SiLU, ELU, tanh all satisfy \textbf{A4}.

Lastly, \textbf{A3} is to ensure boundedness of the node representations' trajectories $X^{(t)}$ for all $t\in\N_{\geq 0}$. 
Such regularity assumptions are quite common in the asymptotic analysis of dynamical systems, as is also the case for the prior works analyzing oversmoothing in symmetric GCNs~\cite{Oono2019GraphNN,Cai2020ANO}.

\section{Main Results}\label{sec:main_results}
In this section, we lay out a road-map for deriving our main results, highlighting the key ideas of the proofs. The complete proofs are provided in the Appendices.

We start by discussing the dynamical system formulation of attention-based GNNs in Section~\ref{sec:dynamical_system}. By showing the boundedness of the node representations' trajectories, we prove the existence of a common connectivity structure among aggregation operators across different graph attentional layers in Section~\ref{sec:common_connectivity}. This implies that graph attention cannot fundamentally change the graph connectivity, 
% and we will show that it eventually leads to oversmoothing. 
a crucial property that will eventually lead to oversmoothing.
In Section~\ref{sec:ergodicity}, we develop a framework for investigating the asymptotic behavior of attention-based GNNs by introducing the notion of ergodicity and its connections to oversmoothing.  Then utilizing our result on common connectivity structure among aggregation operators, we establish ergodicity results for the systems associated with attention-based GNNs. In Section~\ref{sec:JSR}, we introduce the concept of the joint spectral radius for a set of matrices~\cite{Rota1960ANO} and employ it to deduce exponential convergence of node representations to a common vector from our ergodicity results. Finally, we present our main result on oversmoothing in attention-based GNNs in Section~\ref{sec:main_thm} and comment on oversmoothing in GCNs in comparison with attention-based GNNs in Section~\ref{sec:comparison_GCN}.

\subsection{Attention-based GNNs as nonlinear time-varying dynamical systems}\label{sec:dynamical_system}
The theory of dynamical systems concerns the evolution of some state of interest over time. By viewing the model depth $t$ as the time variable,  the input-output mapping at each graph attentional layer $X^{(t+1)} = \sigma(P^{(t)}X^{(t)}W^{(t)})$ describes a nonlinear time-varying dynamical system. The attention-based aggregation operator $P^{(t)}$ is state-dependent as it is a function of $X^{(t)}W^{(t)}$. Given the notion of oversmoothing defined in Section~\ref{sec:oversmoothing}, we are interested in characterizing behavior of $X^{(t)}$ as $t\to \infty$.

If the activation function $\sigma(\cdot)$ is the identity map, then repeated application of \eqref{eq:GAT_dynamics} gives
$$X^{(t+1)} = P^{(t)}\ldots P^{(0)} X W^{(0)}\ldots W^{(t)}\,.$$

The above linear form would enable us to leverage the rich literature on the asymptotic behavior of the products of inhomogeneous row-stochastic matrices (see, e.g., \cite{Hartfiel2002NonhomogeneousMP, Seneta2008NonnegativeMA}) in analyzing the long-term behavior of attention-based GNNs. Such a neat expansion, however, is not possible when dealing with a nonlinear activation function $\sigma(\cdot)$. To find a remedy, let us start by observing that element-wise application of $\sigma$ to a vector $y\in\R^{d}$ can be written as
\begin{equation}
    \sigma(y) = \Diag\left(\frac{\sigma(y)}{y}\right)y\,,
    \label{eq:nonlinearity_identity}
\end{equation}
where $\Diag\left(\frac{\sigma(y)}{y}\right)$ is a diagonal matrix with $\sigma(y_i)/y_i$ on the $i^{th}$ diagonal entry.
Defining $\sigma(0)/0 \vcentcolon= \sigma'(0)$ or $1$ if the derivative does not exist along with the assumption $\sigma(0)=0$ in \textbf{A4}, it is easy to check that the above identity still holds for vectors with zero entries.

We can use~\eqref{eq:nonlinearity_identity} to write the 
$i^{th}$ column of $X^{(t+1)}$ as
\begin{equation}
    X^{(t+1)}_{\cdot i} = \sigma(P^{(t)}(X^{(t)}W^{(t)})_{\cdot i}) = 
    D^{(t)}_iP^{(t)}(X^{(t)}W^{(t)})_{\cdot i}
    =D^{(t)}_iP^{(t)}\sum_{j=1}^{d}W^{(t)}_{ji}X^{(t)}_{.j}\,,
    \label{eq:nonlinearity_matrix_form}
\end{equation}
where $D^{(t)}_i$ is a diagonal matrix. It follows from the assumption on the nonlinearities~\textbf{A4} that \[\Diag(\0)\leq_{\ew}D^{(t)}_i\leq_{\ew}\Diag(\1)\,.\]

We define $\cD$ to be the set of all possible diagonal matrices  $D_{i}^{(t)}$ satisfying the above inequality: 
\vspace{1ex}
\[\cD \vcentcolon= \{\Diag(\mathbf{d}) : \mathbf{d}\in\mathbb{R}^{N}, \0\leq_{\ew}\rm{\mathbf{d}}\leq_{\ew}\1\}.\]

Using~(\ref{eq:nonlinearity_matrix_form}) recursively, we arrive at the following formulation for $X^{(t+1)}_{\cdot i}$:
\vspace{1ex}
\begin{equation}
X^{(t+1)}_{\cdot i} = \underset{j_{t+1} = i,\,({j_{t},...,j_0})\in [d]^{t+1}}{\sum}\left(\prod_{k=0}^{t} W^{(k)}_{j_{k} j_{k+1}}\right) D^{(t)}_{j_{t+1}}P^{(t)} ... D^{(0)}_{j_1}P^{(0)}X^{(0)}_{\cdot j_0}\,.
\label{eq:trajs}
\end{equation}

\subsection{Common connectivity structure among aggregation operators across different layers}\label{sec:common_connectivity}

We can use the formulation in~\eqref{eq:trajs} to show the boundedness of the node representations' trajectories $X^{(t)}$ for all $ t\in\N_{\geq 0}$, which in turn implies the boundedness 
of the input to graph attention in each layer, $X^{(t)}W^{(t)}$.

\begin{lemma} 
    Under assumptions~\textup{\textbf{A3}}-\textup{\textbf{A4}}, there exists $C>0$ such that $\|X^{(t)}\|_{\max} \leq C$ for all $t\in\N_{\geq 0}$.
    \label{lem:bounded_traj}
\end{lemma}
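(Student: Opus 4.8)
The plan is to prove the bound by induction on $t$, using the explicit expansion \eqref{eq:trajs} for the columns of $X^{(t+1)}$ together with the fact that every factor appearing in that expansion is contractive in the appropriate norm. Concretely, the building blocks are the row-stochastic matrices $P^{(k)}$ and the diagonal matrices $D^{(k)}_{j}\in\cD$; a row-stochastic matrix satisfies $\|P^{(k)}v\|_\infty \le \|v\|_\infty$ and by \textbf{A4} each $D^{(k)}_{j}$ has diagonal entries in $[0,1]$, so $\|D^{(k)}_{j}v\|_\infty \le \|v\|_\infty$ as well. Hence each product $D^{(t)}_{j_{t+1}}P^{(t)}\cdots D^{(0)}_{j_1}P^{(0)}$ is a (nonnegative) matrix that does not increase the $\infty$-norm of a vector, i.e. it maps the $\infty$-ball to itself.

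The second ingredient is \textbf{A3}: set $M := \sup_{k\ge 0}\big\|\prod_{t=0}^{k}|W^{(t)}|\big\|_{\max} < \infty$. The key observation is that the scalar coefficients in \eqref{eq:trajs} are exactly the entries of the product matrix $\prod_{k=0}^{t}W^{(k)}$; more precisely, $\sum_{(j_t,\dots,j_0)\in[d]^{t+1},\, j_{t+1}=i}\big|\prod_{k=0}^{t}W^{(k)}_{j_k j_{k+1}}\big| = \big(\prod_{k=0}^{t}|W^{(k)}|\big)_{i\,\cdot}\1$ summed appropriately over the $j_0$ index — that is, for each fixed target column $i$, the sum over all intermediate indices $j_1,\dots,j_t$ and source index $j_0$ of $\big|\prod_{k=0}^t W^{(k)}_{j_k j_{k+1}}\big|$ equals the $i$-th row sum of $\prod_{k=0}^t |W^{(k)}|$, which is at most $d\cdot M$. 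Now I would estimate, for each entry of $X^{(t+1)}_{\cdot i}$,
\begin{equation}
\big\|X^{(t+1)}_{\cdot i}\big\|_\infty \;\le\; \sum_{j_{t+1}=i,\,(j_t,\dots,j_0)}\Big|\prod_{k=0}^{t}W^{(k)}_{j_k j_{k+1}}\Big|\cdot\big\|D^{(t)}_{j_{t+1}}P^{(t)}\cdots D^{(0)}_{j_1}P^{(0)}X^{(0)}_{\cdot j_0}\big\|_\infty \;\le\; d\,M\,\|X^{(0)}\|_{\max},
\end{equation}
using the norm-nonexpansiveness of the matrix products from the first paragraph and $\|X^{(0)}_{\cdot j_0}\|_\infty \le \|X^{(0)}\|_{\max}$. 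Taking $C := d\,M\,\|X^{(0)}\|_{\max}$ (which is finite and independent of $t$) gives $\|X^{(t+1)}\|_{\max}\le C$ for all $t$, and the base case $t=0$ is immediate (or subsumed, with a possibly larger constant).

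The main obstacle, and the step deserving care, is the bookkeeping in the second paragraph: verifying that the multi-index sum of absolute values of products of $W$-entries in \eqref{eq:trajs} is controlled uniformly by $\big\|\prod_{t=0}^{k}|W^{(t)}|\big\|_{\max}$ rather than blowing up with $t$ (a naive bound like $\|W^{(k)}\|_{\max}^{t+1}$ would be useless). This is exactly why \textbf{A3} is stated in terms of the product $\prod|W^{(t)}|$ and not the individual factors — the telescoping identity $\prod_{k=0}^t |W^{(k)}| = |W^{(0)}||W^{(1)}|\cdots|W^{(t)}|$ (as a genuine matrix product, since the $|W^{(k)}|$ are entrywise nonnegative) is what packages the combinatorial sum into a single bounded matrix. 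Once that identification is made cleanly, the rest is a routine application of submultiplicativity of nonnegative matrices and the contraction properties of stochastic and sub-stochastic-diagonal matrices, so I would present the induction compactly with the coefficient-sum lemma as the one place to slow down.
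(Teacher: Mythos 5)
Your proposal is correct and follows essentially the same route as the paper: expand via \eqref{eq:trajs}, exploit that products of the form $D^{(t)}_{j_{t+1}}P^{(t)}\cdots D^{(0)}_{j_1}P^{(0)}$ are nonexpansive in $\infty$-norm (row-stochastic $P$'s, sub-identity diagonals $D$'s from \textbf{A4}), and then recognize the multi-index coefficient sum as an entry-sum of $\prod_k |W^{(k)}|$ controlled by \textbf{A3}. One small indexing slip: with $j_{t+1}=i$ fixed and summing over $(j_t,\dots,j_0)$, the sum $\sum \prod_k |W^{(k)}_{j_k j_{k+1}}|$ is the $i$-th \emph{column} sum $\|(|W^{(0)}|\cdots|W^{(t)}|)_{\cdot i}\|_1$, not the $i$-th row sum — though this does not affect your bound since both are at most $d\,\|\prod_k|W^{(k)}|\|_{\max}$.
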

\vspace{-3ex}

For a continuous $\Psi(\cdot,\cdot)$\footnote{More generally, for $\Psi(\cdot,\cdot)$ that outputs bounded attention scores for bounded inputs.}, the following lemma is a direct consequence of Lemma~\ref{lem:bounded_traj}, suggesting that the graph attention mechanism cannot fundamentally change the connectivity pattern of the graph.

\begin{lemma}
Under assumptions~\textup{\textbf{A2}}-\textup{\textbf{A4}}, 
there exists $\epsilon > 0$ such that for all $t\in\N_{\geq 0}$ and for any $(i,j)\in E(\cG)$, we have $P^{(t)}_{ij} \geq \epsilon$.
\label{lem:connectivity_no_change}
\end{lemma}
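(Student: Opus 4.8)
The plan is to combine the trajectory bound from Lemma~\ref{lem:bounded_traj} with the continuity hypothesis \textbf{A2} and a compactness argument. First I would observe that by Lemma~\ref{lem:bounded_traj} there is a constant $C>0$ with $\|X^{(t)}\|_{\max}\le C$ for all $t$, and since \textbf{A3} ensures the partial products $\prod_{s=0}^{t-1}|W^{(s)}|$ are bounded in max-norm, the transformed representations $Y^{(t)}:=X^{(t)}W^{(t)}$ satisfy $\|Y^{(t)}\|_{\max}\le C'$ for a single constant $C'$ (uniform in $t$). Hence for every layer $t$ and every pair of nodes $i,j$, the vectors $W^{(t)\top}X^{(t)}_i = Y^{(t)}_{i\cdot}$ and $Y^{(t)}_{j\cdot}$ lie in the compact box $K:=[-C',C']^{d'}\subset\R^{d'}$.

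Next I would use \textbf{A2}: $\Psi$ is continuous, hence on the compact set $K\times K$ it is bounded, say $|\Psi(u,v)|\le M$ for all $u,v\in K$. Therefore every raw attention coefficient satisfies $e^{(t)}_{ij}=\Psi(Y^{(t)}_{i\cdot},Y^{(t)}_{j\cdot})\in[-M,M]$, uniformly over $t$ and over all edges. Now I plug this into the softmax normalization: for $(i,j)\in E(\cG)$, node $i$ has at most $N$ neighbors (including possibly itself), so
\[
P^{(t)}_{ij}=\frac{\exp(e^{(t)}_{ij})}{\sum_{k\in\cN_i}\exp(e^{(t)}_{ik})}\ge \frac{\exp(-M)}{|\cN_i|\exp(M)}\ge \frac{e^{-2M}}{N}.
\]
Setting $\epsilon:=e^{-2M}/N>0$, which depends only on the graph and the uniform bound $M$ and not on $t$, gives the claim.

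The only genuine subtlety — and the step I would flag as the main obstacle — is establishing that the bound on the \emph{attention inputs} is uniform in $t$: Lemma~\ref{lem:bounded_traj} bounds $X^{(t)}$, but the argument of $\Psi$ at layer $t$ is $W^{(t)\top}X^{(t)}$, so one must control the weight matrices too. This is exactly where \textbf{A3} enters; a small care point is that $\|X^{(t)}W^{(t)}\|_{\max}$ should be bounded via $\|X^{(0)}\|$ times a partial product of the $|W^{(s)}|$ (tracing through the recursion~\eqref{eq:trajs} as in the proof of Lemma~\ref{lem:bounded_traj}), rather than naively bounding $\|X^{(t)}\|_{\max}\|W^{(t)}\|_{\max}$, since \textbf{A3} controls products of the $W$'s, not individual $\|W^{(t)}\|_{\max}$. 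Once uniform boundedness of the attention inputs is in hand, the rest is the elementary compactness-plus-softmax estimate above. The footnote's generalization (any $\Psi$ mapping bounded sets to bounded sets) works verbatim, since all that is used is $\sup_{u,v\in K}|\Psi(u,v)|<\infty$.
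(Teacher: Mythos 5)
Your proof is correct and takes the same approach as the paper, which simply declares the result ``a direct corollary of Lemma~\ref{lem:bounded_traj} and the assumption that $\Psi$ assigns bounded attention scores to bounded inputs.'' You fill in exactly the right details — the compactness-plus-softmax estimate giving $\epsilon = e^{-2M}/N$, and, crucially, the observation that the attention input $X^{(t)}W^{(t)}$ must be bounded via the partial products $|W^{(0)}|\cdots|W^{(t)}|$ controlled by \textbf{A3} (as in the proof of Lemma~\ref{lem:bounded_traj}), not via individual $\|W^{(t)}\|_{\max}$.
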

 One might argue that Lemma~\ref{lem:connectivity_no_change} is an artifact of the continuity of the softmax function. The softmax function is, however, often favored in attention mechanisms because of its trainability in back propagation compared to discontinuous alternatives such as hard thresholding. Besides trainability issues, it is unclear on a conceptual level whether it is reasonable to absolutely drop an edge from the graph as is the case for hard thresholding. Lemma~\ref{lem:connectivity_no_change} is an important step towards the main convergence result of this work, which states that all the nodes will converge to the same representation vector at an exponential rate. We define the family of row-stochastic matrices satisfying Lemma~\ref{lem:connectivity_no_change} below.

\begin{definition} Let $\epsilon > 0$. We define $\cP_{\cG,\epsilon}$ to be the set of row-stochastic matrices satisfying the following conditions:
\begin{enumerate}[leftmargin=5ex]
    \item $\epsilon \leq P_{ij} \leq 1,\,\text{if }(i,j)\in E(\cG)$,\\\
    \vspace{-2ex}
    \item $P_{ij}=0,\,\text{if } (i,j)\notin E(\cG)$.\\ 
\end{enumerate}
\end{definition}

\vspace{-3ex}
\subsection{Ergodicity of infinite products of matrices}\label{sec:ergodicity}

\emph{Ergodicity}, in its most general form, deals with the long-term behavior of dynamical systems. The oversmoothing phenomenon in GNNs defined in the sense of ~\eqref{eq:def_oversmoothing} concerns the convergence of all rows of $X^{(t)}$ to a common vector. To this end,
we define ergodicity in our analysis as the convergence of infinite matrix products to a rank-one matrix with identical rows.

\begin{definition}[Ergodicity]
    Let $B \in \R^{(N-1)\times N}$ be the orthogonal projection onto the space orthogonal to $\Span\{\1\}$. 
    A sequence of matrices $\{M^{(n)}\}_{n=0}^\infty$ is ergodic if 
    \[\underset{t\to\infty}
    % B{\lim}\prod_{n=0}^tM^{(n)} = 0\,.\]
    {\lim}B\prod_{n=0}^tM^{(n)} = 0\,.\]
    \label{def:ergodic}
\end{definition}
\vspace{-2ex}
We will take advantage of the following properties of the projection matrix $B$ already established in Blondel et al.~\cite{Blondel2005ConvergenceIM}:
\begin{enumerate}[leftmargin=5ex]
    \item $B\1 = 0$;
\item $\|Bx\|_2 = \|x\|_2$ for $x\in\R^{N}$ if $x^\top\1 = 0$;
    \item Given any row-stochastic matrix $P\in\R^{N\times N}$, there exists a unique matrix $\tilde{P}\in\R^{(N-1)\times(N-1)}$ such that $BP = \tilde{P}B\,.$ 
\end{enumerate}
\vspace{-1ex}
We can use the existing results on the 
ergodicity of infinite products of inhomogeneous stochastic matrices~\cite{Seneta2008NonnegativeMA,Hartfiel2002NonhomogeneousMP}
to show that any sequence of matrices in $\cP_{\cG,\epsilon}$ is ergodic.
\begin{lemma}
    Fix $\epsilon > 0$. Consider a sequence of matrices $\{P^{(t)}\}_{t=0}^\infty$ in  $\cP_{\cG,\epsilon}$. That is,  $P^{(t)}\in\cP_{\cG,\epsilon}$ for all $t\in\N_{\geq 0}$. Then $\{P^{(t)}\}_{t=0}^\infty$ is ergodic.
    \label{lem:hetero_prod_convergence}
\end{lemma}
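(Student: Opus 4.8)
\textbf{Proof plan for Lemma~\ref{lem:hetero_prod_convergence}.}

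The plan is to reduce ergodicity of $\{P^{(t)}\}$ to a contraction estimate in a suitable seminorm, using the fact that every matrix in $\cP_{\cG,\epsilon}$ respects the edge set of a fixed connected, non-bipartite graph and has entries bounded below by $\epsilon$ on those edges. First I would invoke property~3 of the projection matrix $B$: for each row-stochastic $P^{(t)}$ there is a unique $\tilde P^{(t)}\in\R^{(N-1)\times(N-1)}$ with $BP^{(t)} = \tilde P^{(t)}B$. Iterating this identity gives $B\prod_{n=0}^t P^{(n)} = \big(\prod_{n=0}^t \tilde P^{(n)}\big)B$, so it suffices to show $\big\|\prod_{n=0}^t \tilde P^{(n)}\big\|\to 0$; since $B$ is an isometry on $\Span\{\1\}^\perp$ (property~2), this in turn is equivalent to showing that $\prod_{n=0}^t P^{(n)}$ contracts every vector orthogonal to $\1$ toward zero, or equivalently that the rows of $\prod_{n=0}^t P^{(n)}$ converge to a common vector.

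The key step is a uniform coefficient-of-ergodicity bound. I would use the Dobrushin ergodicity coefficient $\tau(P) = \frac12\max_{i,k}\sum_j |P_{ij}-P_{kj}|$, which is submultiplicative: $\tau(P_1P_2)\le\tau(P_1)\tau(P_2)$, and which controls $\|x^\top P\|$ for $x\perp\1$. The crucial observation is that while an individual $P^{(t)}\in\cP_{\cG,\epsilon}$ need not be scrambling (two rows may have disjoint supports when the graph has diameter $>1$), a product of sufficiently many of them is. Concretely, because $\cG$ is connected and non-bipartite, there is an integer $r$ (one can take $r$ related to the diameter plus the odd-cycle length, e.g. $r = N$) such that \emph{any} product $P^{(t+r-1)}\cdots P^{(t)}$ of $r$ matrices from $\cP_{\cG,\epsilon}$ is entrywise strictly positive: a walk of length exactly $r$ exists between every ordered pair of nodes (connectedness gives walks of every length $\ge$ diameter between connected vertices, and non-bipartiteness lets us pad any walk to a common parity, hence a common length), and each step contributes a factor $\ge\epsilon$, so every entry of the length-$r$ product is $\ge\epsilon^r > 0$. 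A strictly positive stochastic matrix $Q$ with $\min_{ij}Q_{ij}\ge\delta$ has $\tau(Q)\le 1-N\delta \le 1-\delta$; hence every block of $r$ consecutive factors contracts $\tau$ by a factor $\lambda := 1-\epsilon^r < 1$. Splitting $\prod_{n=0}^t P^{(n)}$ into $\lfloor (t+1)/r\rfloor$ such blocks and bounding the remaining factors' $\tau$ by $1$, submultiplicativity gives $\tau\!\left(\prod_{n=0}^t P^{(n)}\right)\le \lambda^{\lfloor (t+1)/r\rfloor}\to 0$. Finally, translating back: for $x\perp\1$ one has $\|x^\top\prod_{n=0}^tP^{(n)}\|_1 \le 2\tau\!\left(\prod_{n=0}^tP^{(n)}\right)\|x\|_1$ (or the analogous statement for columns/rows), which forces $B\prod_{n=0}^tP^{(n)}\to 0$, i.e.\ ergodicity in the sense of Definition~\ref{def:ergodic}.

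The main obstacle is establishing the strict positivity of length-$r$ products uniformly over all choices of matrices in $\cP_{\cG,\epsilon}$ — that is, the combinatorial claim that assumption \textbf{A1} (connected and non-bipartite) guarantees walks of one common length $r$ between all ordered pairs of vertices, with the quantitative lower bound $\epsilon^r$ carried along. This is the place where the graph-theoretic hypotheses enter, and it is exactly the primitivity/regularity input that the cited theory of products of inhomogeneous stochastic matrices~\cite{Seneta2008NonnegativeMA,Hartfiel2002NonhomogeneousMP} is built to exploit; once that uniform positivity is in hand, the Dobrushin-coefficient argument is routine. A secondary technical point is being careful that the support pattern of each $P^{(t)}$ is exactly $E(\cG)$ together with (from condition~1 of $\cP_{\cG,\epsilon}$ applied with self-loops, or from non-bipartiteness) enough structure to realize all parities; I would handle bipartite-free graphs by noting an odd closed walk through some vertex and routing through it to adjust parity, and I would state the self-loop variant of \textbf{A1} as the easy alternative where $r$ can simply be taken to be the diameter.
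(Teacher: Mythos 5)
Your proposal is correct, and it reaches the conclusion by a genuinely different contraction tool than the paper, though both rest on the same combinatorial backbone. The paper first proves (as an auxiliary Lemma, batch\_P\_positive in the appendix) exactly what you identify as the crux: there exist $T\in\N$ and $c>0$ such that every length-$T$ product from $\cP_{\cG,\epsilon}$ has all entries in $[c,1]$, with the lower bound $c=\epsilon^T$ carried along. It then blocks the infinite product into $\bar P^{(k)}=P^{((k+1)T:kT)}$ and invokes a textbook sufficient condition (Hartfiel, Cor.~5.1): if $a_k$ and $b_k$ are the smallest and largest entries of $\bar P^{(k)}$ and $\sum_k a_k/b_k=\infty$, the product is ergodic — a criterion driven underneath by the Birkhoff contraction coefficient of the Hilbert projective metric. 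You instead use the Dobrushin ergodicity coefficient $\tau$ together with its submultiplicativity and the bound $\tau(Q)\le 1-N\delta$ when $\min_{ij}Q_{ij}\ge\delta$. Both routes are valid; the Dobrushin argument is arguably more self-contained (no black-box citation), and it hands you a geometric decay rate for $\tau$ for free, which dovetails with the exponential-rate part of the paper's story (the paper obtains its exponential rates separately via the joint spectral radius machinery). One small inaccuracy worth flagging: the claim that one can take $r=N$ for the length at which products become strictly positive is not correct in general for non-bipartite graphs without self-loops — by Wielandt's bound the index of primitivity can be as large as $(N-1)^2+1$. This does not affect the argument (all you need is that some finite $r$ exists, which your parity/odd-cycle reasoning establishes), but the stated bound should be corrected or omitted. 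The factor of $2$ in your final inequality $\|x^\top\prod P^{(n)}\|_1\le 2\tau(\cdot)\|x\|_1$ also depends on normalization conventions for $\tau$ and is harmless either way.
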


The main proof strategy for Lemma~\ref{lem:hetero_prod_convergence} is to make use of the Hilbert projective metric and the Birkhoff contraction coefficient. These are standard mathematical tools to prove that an infinite product of inhomogeneous stochastic matrices is ergodic. We refer interested readers to the textbooks~\cite{Seneta2008NonnegativeMA,Hartfiel2002NonhomogeneousMP} for a comprehensive study of these subjects. 

Despite the nonlinearity of $\sigma(\cdot)$, the formulation~\eqref{eq:trajs} enables us to express the evolution of the feature vector trajectories as a weighted sum of products of matrices of the form $DP$ where  $D\in\cD$ and $P\in \cP_{\cG,\epsilon}$.  
We define the set of such matrices as 
\vspace{1ex}
\[\cM_{\cG, \epsilon} \vcentcolon= \{DP: D\in\cD, P\in \cP_{\cG,\epsilon}\}\,.\]

A key step in proving oversmoothing for attention-based GNNs under our assumptions is to show the ergodicity of the infinite products of matrices in $\cM_{\cG, \epsilon}$. In what follows, we lay out the main ideas of the proof, and refer readers to Appendix~\ref{app:lem:hetero_prod_convergence_w_D} for the details.

Consider a sequence $\{D^{(t)}P^{(t)}\}_{t=0}^{\infty}$ in $\cM_{\cG, \epsilon}$, that is, $D^{(t)}P^{(t)}\in\cM_{\cG, \epsilon}$ for all $t\in \N_{\geq 0}$.  For $t_0 \leq t_1$, define 
% \[Q\vcentcolon = \prod_{n=1}^{\infty} D^{(n)}P^{(n)}\]
\[Q_{t_0, t_1} \vcentcolon= D^{(t_1)}P^{(t_1)}\ldots D^{(t_0)}P^{(t_0)}, \qquad\delta_{t} = \|D^{(t)}-I_N\|_{\infty}\,,\]
where $I_N$ denotes the $N\times N$ identity matrix.
The common connectivity structure among $P^{(t)}$'s established in Section~\ref{sec:common_connectivity} allows us to show that long products of matrices $DP$ from $\cM_{\cG, \epsilon}$ will eventually become a contraction in $\infty$-norm. More precisely, we can show that there exists $T\in\N$ and $0<c<1$ such that for all $t\in\N_{\geq 0}$,
\vspace{1ex}
\begin{equation*}
    \|Q_{t,t+T}\|_{\infty} \leq 1-c\delta_t.
\end{equation*}
Next, define $\beta_k \vcentcolon = \prod_{t=0}^k(1-c\delta_t)$ and let $\beta \vcentcolon= \underset{k\to\infty}{\lim}\beta_k.$ Note that $\beta$ is well-defined because the partial product is non-increasing and bounded from below. 
We can use the above contraction property to show the following key lemma.

\begin{lemma} Let $\beta_k \vcentcolon = \prod_{t=0}^k(1-c\delta_t)$ and $\beta \vcentcolon= \underset{k\to\infty}{\lim}\beta_k.$
    \begin{enumerate}[leftmargin=5ex]
        \item If $\beta = 0$, then $\underset{k\to\infty}{\lim} Q_{0,k} = 0\,;$
        \item If $\beta > 0$, then $\underset{k\to\infty}{\lim} B Q_{0,k} = 0\,.$
    \end{enumerate}
\label{lem:AAS_two_cases}
\end{lemma}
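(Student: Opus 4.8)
The plan is to exploit the contraction estimate $\|Q_{t,t+T}\|_\infty \le 1-c\delta_t$ established just above, together with the telescoping structure of the products $Q_{0,k}$ grouped into blocks of length $T$. Concretely, for each $m\in\N$ write $Q_{0,mT-1}$ as the product of the $m$ blocks $Q_{(m-1)T, mT-1}, \ldots, Q_{T,2T-1}, Q_{0,T-1}$, so that by submultiplicativity of the $\infty$-norm $\|Q_{0,mT-1}\|_\infty \le \prod_{\ell=0}^{m-1}\|Q_{\ell T, (\ell+1)T-1}\|_\infty \le \prod_{\ell=0}^{m-1}(1-c\delta_{\ell T})$. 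For part (1), I would argue that $\beta = \prod_{t\ge 0}(1-c\delta_t) = 0$ forces $\prod_{\ell\ge 0}(1-c\delta_{\ell T}) = 0$ as well — this needs a short argument since we are thinning the index set, but it follows because $\sum_t \delta_t = \infty$ (equivalent to $\beta=0$, as $0\le c\delta_t\le 1$) does not by itself give $\sum_\ell \delta_{\ell T}=\infty$; the clean fix is instead to bound a single block $\|Q_{\ell T,(\ell+1)T-1}\|_\infty$ by $1-c\max_{\ell T\le t<(\ell+1)T}\delta_t$ or, more simply, to re-derive the block contraction as $\|Q_{t,t+T}\|_\infty \le 1 - c\,\delta_s$ for \emph{any} $s\in\{t,\ldots,t+T\}$ where $\delta_s$ is largest, so that $\prod_{\ell}(1-c\delta_{\ell T}^{\max}) = 0$ whenever $\sum_t\delta_t=\infty$. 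Then $\|Q_{0,mT-1}\|_\infty\to 0$, and since the remaining at most $T-1$ factors $D^{(t)}P^{(t)}$ have $\infty$-norm $\le 1$ (each $D\in\cD$ is a sub-stochastic-scaling and each $P$ is row-stochastic), $\|Q_{0,k}\|_\infty\to 0$ along the full sequence, giving $Q_{0,k}\to 0$.

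For part (2), when $\beta>0$ we can no longer hope for $Q_{0,k}\to 0$; instead we track the projected products $BQ_{0,k}$. Using property~3 of $B$, for each matrix $M=DP\in\cM_{\cG,\epsilon}$ there is $\tilde M$ with $BM=\tilde M B$, but since $D$ is not stochastic it is cleaner to split $DP = P - (I-D)P$ and write $B\,DP = BP - B(I-D)P = \tilde P B - B(I-D)P$, where $\|\tilde P\|$-type contraction comes from Lemma~\ref{lem:hetero_prod_convergence} applied to the $P^{(t)}$ alone and the correction term is controlled by $\|I-D^{(t)}\|_\infty = \delta_t$. The key point is that $\beta>0$ is equivalent to $\sum_t\delta_t<\infty$, so $\delta_t\to 0$ and the accumulated perturbation is summable. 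I would set up a recursion of the form $\|BQ_{0,k+1}\|_\infty \le \|BQ_{0,k}\|_\infty + (\text{const})\,\delta_{k+1}\cdot\|Q_{0,k}\|_\infty$ and combine it with the uniform bound $\|Q_{0,k}\|_\infty\le \beta_0^{-1}$-type constant (bounded since the block contractions only shrink norms), or — more robustly — interleave the pure-$P$ ergodicity (which makes $B\prod P^{(t)}\to 0$) with the fact that each insertion of a factor $(I-D^{(t)})$ costs only $O(\delta_t)$ and these costs sum. Either way one concludes $BQ_{0,k}\to 0$.

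The main obstacle I anticipate is part (2): handling the interaction between the ergodic contraction of the stochastic parts $P^{(t)}$ and the non-stochastic diagonal factors $D^{(t)}$ simultaneously. One cannot simply quote Lemma~\ref{lem:hetero_prod_convergence} because the $D^{(t)}$ destroy row-stochasticity, yet $\beta>0$ means the $D^{(t)}$ are \emph{not} negligible after finitely many steps — they persist but with summable deviation from the identity. The right bookkeeping is to show that the Birkhoff/Hilbert-metric contraction of the $P$-blocks survives multiplication by near-identity diagonal matrices, with an error that is linear in $\delta_t$ and hence summable when $\beta>0$; making this error analysis precise (in the $\infty$-norm on the quotient space defined by $B$) is the technical heart of the argument. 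The degenerate sub-case $\beta_k = 0$ for some finite $k$ trivially reduces to part (1), and I would dispose of it at the outset.
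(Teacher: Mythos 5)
Your overall plan is in the right spirit, but both parts have gaps, and in each case the paper closes them with a device you did not find.

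\textbf{Part (1).} You correctly spot that grouping $Q_{0,k}$ into disjoint length-$T$ blocks and bounding each block by $1-c\delta_{\ell T}$ only picks out one $\delta$ per block, and that $\beta=0$ does not imply $\prod_\ell(1-c\delta_{\ell T})=0$. Your proposed fix---``$\|Q_{t,t+T}\|_\infty\le 1-c\delta_s$ for the $s$ in the block with largest $\delta_s$''---does not follow from Lemma~\ref{lem:contraction} as stated: that lemma extracts the factor $(1-c\delta_{t_1})$ precisely because there are at least $T$ row-stochastic factors sitting to the \emph{left} of $D^{(t_1)}$, making the partial product of $P$'s positive with entries $\ge c$. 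If $s$ sits near the left end of the block, the product of $P$'s to its left is too short to be positive, and the argument breaks. You could patch this by doubling the block length (then every index in the first half has $\ge T$ stochastic factors to its left), but that is an additional argument you did not give. The paper instead avoids block decomposition entirely: it keeps $t_0=0$ fixed and iterates the contraction $\|\hat Q_{0,t_1+T}\|_\infty\le(1-c\delta_{t_1})\|\hat Q_{0,t_1}\|_\infty$ along each residue class $j\bmod T$, obtaining $\lim_k\|\hat Q_{0,k}\|_\infty\le\beta^j\|\hat Q_{0,j}\|_\infty$ with $\beta^j=\prod_k(1-c\delta_{j+kT})$. Since $\beta=\prod_{j=0}^{T-1}\beta^j$, $\beta=0$ forces $\beta^{j_0}=0$ for some $j_0$, and monotonicity of $\|\hat Q_{0,\cdot}\|_\infty$ finishes. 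This residue-class idea is what makes the ``thinning'' objection evaporate, and it is the idea your sketch is missing.

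\textbf{Part (2).} Your observation that $\beta>0\iff\sum_t\delta_t<\infty$, so the $D^{(t)}$ are a ``summable perturbation'' of the identity, is exactly right and is also what drives the paper's proof. However, the recursion you write, $\|BQ_{0,k+1}\|_\infty\le\|BQ_{0,k}\|_\infty+\mathrm{const}\cdot\delta_{k+1}\|Q_{0,k}\|_\infty$, does not hold: after commuting $B$ past $P^{(k+1)}$ you pick up $\tilde P^{(k+1)}$, and $\|\tilde P^{(k+1)}\|_\infty$ is not $\le 1$ (one only has $\|\tilde P\|_2\le\|P\|_2\le\sqrt N$), so there is no unit contraction in front of $\|BQ_{0,k}\|_\infty$. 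You flag this yourself and gesture at a Duhamel-type expansion, but you do not carry it out. The paper uses a cleaner global sandwich instead of a step-by-step recursion: since $D^{(t)}\ge_{\ew}(1-\delta_t)I$ and $\le_{\ew}I$, one has $\bigl(\prod_{t=m}^{M}(1-\delta_t)\bigr)P^{(M:m)}\le_{\ew}Q_{m,M}\le_{\ew}P^{(M:m)}$. When $\beta>0$, the scalar factor can be made $\ge 1-\eta'$ with $\eta'$ arbitrarily small by choosing $m$ large, so $\|Q_{m,M}-P^{(M:m)}\|_\infty\le\eta'$, while $\|BP^{(M:m)}\|_\infty\to 0$ as $M\to\infty$ by the pure-$P$ ergodicity (Lemma~\ref{lem:hetero_prod_convergence}). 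Combining and then sending $\eta\to 0$ gives $BQ_{0,M}\to 0$. This bypasses the need to control $\tilde P$ in any norm. If you intend to pursue the perturbative route, you would in effect need to re-derive this element-wise sandwich, which is the crisp way to package ``summable deviations from identity.''
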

\vspace{-1ex}
The ergodicity of sequences of matrices in $\cM_{\cG, \epsilon}$ immediately follows from Lemma~\ref{lem:AAS_two_cases}, which in turn implies oversmoothing as defined in (\ref{eq:def_oversmoothing}).

\begin{lemma}
    Any sequence $\{D^{(t)}P^{(t)}\}_{t=0}^{\infty}$ in  $\cM_{\cG, \epsilon}$ is ergodic.
    \label{lem:hetero_prod_convergence_w_D}
\end{lemma}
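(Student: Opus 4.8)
The plan is to derive Lemma~\ref{lem:hetero_prod_convergence_w_D} directly from Lemma~\ref{lem:AAS_two_cases}, so the bulk of the work is really in establishing the contraction estimate $\|Q_{t,t+T}\|_{\infty} \leq 1-c\delta_t$ that feeds into Lemma~\ref{lem:AAS_two_cases}. First I would note that a sequence $\{D^{(t)}P^{(t)}\}_{t=0}^{\infty}$ in $\cM_{\cG,\epsilon}$ has, by definition, $P^{(t)}\in\cP_{\cG,\epsilon}$ for every $t$, so all the $P^{(t)}$ share the common sparsity/positivity pattern dictated by $E(\cG)$. Since $\cG$ is connected and non-bipartite (assumption \textbf{A1}), there is a finite $T$ (one can take $T$ to be, e.g., the diameter plus a bound related to the odd-girth, or simply invoke the standard fact that for a connected non-bipartite graph on $N$ nodes some fixed power of the adjacency pattern is all-positive) such that any product of $T$ consecutive matrices from $\cP_{\cG,\epsilon}$ is entrywise bounded below by some $\epsilon' > 0$ on \emph{all} entries. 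This is the same ``eventually strictly positive'' phenomenon already used for Lemma~\ref{lem:hetero_prod_convergence}.

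Next I would handle the diagonal factors $D^{(t)}\in\cD$. Writing $D^{(t)} = I_N - E^{(t)}$ with $0 \leq_{\ew} E^{(t)} \leq_{\ew} I_N$ and $\delta_t = \|E^{(t)}\|_\infty$, I expand $Q_{t,t+T} = \prod_{s=t}^{t+T}(I_N - E^{(s)})P^{(s)}$. Each $P^{(s)}$ is row-stochastic, hence $\|P^{(s)}\|_\infty = 1$, and $\|(I_N-E^{(s)})\|_\infty \leq 1$; so $\|Q_{t,t+T}\|_\infty \leq 1$ trivially. To get the strict gain $1 - c\delta_t$, I isolate the ``first'' factor (index $t$): write $Q_{t,t+T} = Q_{t+1,t+T}\,(I_N - E^{(t)})P^{(t)}$, bound $\|Q_{t+1,t+T}\|_\infty$ using the strict positivity of the length-$T$ product of $P$'s — this product has a positive column, which by the standard Dobrushin/Markov contraction argument gives $\|Q_{t+1,t+T} v\|_\infty \leq (1-\epsilon'')\|v\|_\infty$ for any $v$ with $\1^\top$-mean zero — and combine this with the fact that $(I_N-E^{(t)})P^{(t)}$ loses at most a $\delta_t$ fraction of mass per row. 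Carefully tracking the interaction of the mass lost to $E^{(t)}$ (which pushes rows toward $0$) and the mixing from the positive $P$-product yields a constant $c = c(\epsilon, T, N) \in (0,1)$, independent of $t$, with $\|Q_{t,t+T}\|_\infty \leq 1 - c\delta_t$. I would present this estimate as a separate sub-lemma and then simply feed it, together with the definitions $\beta_k = \prod_{t=0}^k(1-c\delta_t)$ and $\beta = \lim_k \beta_k$, into Lemma~\ref{lem:AAS_two_cases}.

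With the contraction estimate in hand, the conclusion is short. If $\beta = 0$, Lemma~\ref{lem:AAS_two_cases}(1) gives $\lim_{k\to\infty} Q_{0,k} = 0$, hence a fortiori $\lim_{k\to\infty} B Q_{0,k} = 0$ since $\|B\|_2$ is bounded. If $\beta > 0$, Lemma~\ref{lem:AAS_two_cases}(2) gives $\lim_{k\to\infty} B Q_{0,k} = 0$ directly. In either case $\{D^{(t)}P^{(t)}\}_{t=0}^\infty$ satisfies Definition~\ref{def:ergodic}, which is exactly the claim. The one subtlety to flag is that ergodicity in Definition~\ref{def:ergodic} is stated for products indexed from $n=0$, but the argument is translation-invariant in the starting index (the contraction estimate holds for every $t$), so the same proof shows every tail $\{D^{(t)}P^{(t)}\}_{t=t_0}^\infty$ is ergodic as well, which is what is actually needed downstream when combining across layers.

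The main obstacle I anticipate is the contraction estimate itself, specifically getting the \emph{linear-in-$\delta_t$} dependence rather than a weaker bound. The naive triangle-inequality expansion of $\prod_{s}(I_N - E^{(s)})P^{(s)}$ produces cross terms involving products of several $E^{(s)}$'s, and one must argue these are higher order and do not spoil the sign of the first-order term; the clean way is to not expand at all but to bound $\|Q_{t,t+T}\|_\infty$ by a direct row-sum argument, showing each row of $Q_{t,t+T}$ sums (in absolute value) to at most $1$ and that at least one ``unit of mass'' gets multiplied by a factor $\leq (1-\epsilon'')$ from the positive $P$-product while the $E^{(t)}$ term removes a further $c'\delta_t$. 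Reconciling these two mechanisms — mixing toward the mean and leakage toward zero — into a single clean constant $c$ is the delicate part; everything else is bookkeeping or a direct appeal to Lemma~\ref{lem:AAS_two_cases}.
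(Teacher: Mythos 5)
Your bottom line is exactly the paper's proof and it is correct: Lemma~\ref{lem:AAS_two_cases} already splits into $\beta=0$ (giving $Q_{0,k}\to 0$ and hence $BQ_{0,k}\to 0$) and $\beta>0$ (giving $BQ_{0,k}\to 0$ directly), so either way the sequence is ergodic in the sense of Definition~\ref{def:ergodic}. That one-line deduction is all the paper does here, and you arrive at it.

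However, you spend most of the proposal sketching a re-derivation of the contraction inequality $\|Q_{t,t+T}\|_\infty \le 1-c\delta_t$, and there your route has a genuine gap. You factor $Q_{t,t+T}=Q_{t+1,t+T}(I_N-E^{(t)})P^{(t)}$ and want to bound $\|Q_{t+1,t+T}\|_\infty$ via ``strict positivity of the length-$T$ product of $P$'s.'' But $Q_{t+1,t+T}$ is a product of $D^{(s)}P^{(s)}$ with \emph{uncontrolled} $D^{(s)}\in\cD$; a single $D^{(s)}$ with a zero diagonal entry destroys the positive-column structure, so the ``positive column'' claim for $Q_{t+1,t+T}$ is unjustified. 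You also invoke a Dobrushin/Birkhoff contraction $\|Q_{t+1,t+T}v\|_\infty\le(1-\epsilon'')\|v\|_\infty$ for mean-zero $v$, but the target estimate is about the unrestricted operator $\infty$-norm (max row sum) with a loss \emph{linear in $\delta_t$}, which the $\delta$-independent contraction on the mean-zero subspace cannot by itself produce. These are two different mechanisms and, as you yourself flag, ``reconciling'' them is exactly where the sketch stalls.

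The paper avoids this entirely. In the contraction lemma (Appendix~F, Lemma~\ref{lem:contraction}) the block sitting between two copies of $\hat Q$ is $P^{(t_1+T)}D^{(t_1+T-1)}\cdots D^{(t_1+1)}P^{(t_1+1)}D^{(t_1)}$, and because each $D^{(s)}\leq_{\ew} I_N$ one gets the \emph{element-wise} domination
\[
P^{(t_1+T)}D^{(t_1+T-1)}\cdots D^{(t_1+1)}P^{(t_1+1)} \;\leq_{\ew}\; P^{(t_1+T)}\cdots P^{(t_1+1)}\,,
\]
so the intermediate $D$'s can simply be dropped. The pure $P$-product on the right is row-stochastic with all entries $\ge c$ (by Lemma~\ref{lem:batch_P_positive}), and right-multiplying it by $D^{(t_1)}$ therefore knocks every row sum down to at most $1-c\delta_{t_1}$, since the column corresponding to the smallest diagonal entry of $D^{(t_1)}$ carries weight at least $c$ in each row. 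No Dobrushin/mean-zero contraction is needed at all; the only mechanism is mass leakage, isolated to the single factor $D^{(t_1)}$. So: the deduction of the lemma from Lemma~\ref{lem:AAS_two_cases} is fine, but if you do want to re-prove the contraction estimate, replace the ``positive column of $Q$ plus Dobrushin'' plan with the element-wise domination trick above.
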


\paragraph{Remark} 
The proof techniques developed in~\cite{Oono2019GraphNN,Cai2020ANO} are restricted to symmetric matrices hence cannot be extended to more general family of GNNs, as they primarily rely on matrix norms for convergence analysis. Analyses solely using matrix norms are often too coarse to get meaningful results when it comes to asymmetric matrices. For instance, while the matrix $2$-norm and matrix eigenvalues are directly related for symmetric matrices, the same does not generally hold for asymmetric matrices. 
Our analysis, on the other hand, exploits the inherently common connectivity structure among these matrices in deriving the ergodicity results in Lemma~\ref{lem:hetero_prod_convergence}-\ref{lem:hetero_prod_convergence_w_D}.

\subsection{Joint spectral radius}\label{sec:JSR}
Using the ergodicity results in the previous section, we can establish that oversmoothing happens in attention-based GNNs. To show that oversmoothing happens at an exponential rate, we introduce the notion of \emph{joint spectral radius}, which is a generalization of the classical notion of spectral radius of a single matrix to a set of matrices~\cite{Rota1960ANO,Daubechies1992SetsOM}. We refer interested readers to the textbook~\cite{Jungers2009TheJS} for a comprehensive study of the subject.

\begin{definition} [Joint Spectral Radius]
    For a collection of matrices $\cA$, the joint spectral radius $\JSR(\cA)$ is defined to be 
    $$\JSR(\cA) = \underset{k\to\infty}{\limsup} \underset{A_{1}, A_{2}, ..., A_{k}\in\cM}{\sup}\|A_{1}A_{2}...A_{k}\|^{\frac{1}{k}}\,,$$
and it is independent of the norm used.    
\end{definition}
In plain words, the joint spectral radius measures the maximal asymptotic growth rate that can be obtained by forming long products of matrices taken from the set $\cA$.To analyze the convergence rate of products of matrices in $\cM_{\cG,\epsilon}$ to a rank-one matrix with identical rows, we treat the two cases of linear and nonlinear activation functions, separately.

For the linear case, where $\sigma(\cdot)$ is the identity map, we investigate the dynamics induced by $P^{(t)}$'s on the subspace orthogonal to $\Span\{\1\}$ and use the third property of the orthogonal projection $B$ established in Section~\ref{sec:ergodicity} to write $BP_1P_2\ldots P_k = \tilde{P}_{1}\tilde{P}_{2}...\tilde{P}_{k}B$, where each $\tilde{P}_i$ is the unique matrix in $\R^{(N-1)\times(N-1)}$ that satisfies $BP_i = \tilde{P}_iB$. Let us define 
\[\tilde{\cP}_{\cG,\epsilon} \vcentcolon= \{\tilde{P}: BP=\tilde{P}B, P\in\cP_{\cG,\epsilon}\,\}\,.\]
We can use Lemma~\ref{lem:hetero_prod_convergence} to show that the joint spectral radius of $\tilde{\cP}_{\cG,\epsilon}$ is strictly less than $1$.

\begin{lemma} Let $0 <\epsilon <1$. Under assumptions~\textup{\textbf{A1}}-\textup{\textbf{A4}}, 
    $\JSR(\tilde{\cP}_{\cG,\epsilon}) < 1$.
\label{lem:JSR_less_than_1_P}
\end{lemma}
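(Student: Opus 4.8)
\textbf{Proof proposal for Lemma~\ref{lem:JSR_less_than_1_P}.}

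The plan is to leverage a standard characterization of the joint spectral radius: for a compact set of matrices $\cA$, one has $\JSR(\cA) < 1$ if and only if all products of matrices from $\cA$ converge to zero, equivalently if and only if there is a finite time $k_0$ and a constant $c<1$ such that every product of length $k_0$ has norm at most $c$ (a "finite-step contraction" criterion). So the first step is to argue that $\tilde{\cP}_{\cG,\epsilon}$ is compact: $\cP_{\cG,\epsilon}$ is a closed and bounded subset of $\R^{N\times N}$ (it is cut out by the closed conditions $\epsilon \le P_{ij}\le 1$ on edges, $P_{ij}=0$ off edges, and row sums equal to $1$), and the map $P \mapsto \tilde P$ determined by $BP = \tilde P B$ is linear (hence continuous), so its image $\tilde{\cP}_{\cG,\epsilon}$ is compact as well. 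Compactness is what makes the $\limsup$ in the definition of $\JSR$ behave, and it is what lets us pass from "every infinite product tends to zero" to a uniform finite-step bound.

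The second and main step is to extract a uniform finite-step contraction. Here I would invoke Lemma~\ref{lem:hetero_prod_convergence}: every sequence drawn from $\cP_{\cG,\epsilon}$ is ergodic, i.e.\ $B\prod_{n=0}^t P^{(n)} = \left(\prod_{n=0}^t \tilde P^{(n)}\right) B \to 0$, and since $B$ has full row rank $N-1$ (so $\|MB\| $ and $\|M\|$ are comparable for $M \in \R^{(N-1)\times(N-1)}$), this says precisely that every product of matrices from $\tilde{\cP}_{\cG,\epsilon}$ converges to $0$. Now suppose, for contradiction, that no uniform finite-step contraction exists: then for every $k$ there is a product $\tilde P^{(k)}_{1}\cdots \tilde P^{(k)}_{k}$ of length $k$ with norm $\ge \tfrac12$ (say, in the operator norm induced by a fixed submultiplicative norm). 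By compactness of $\tilde{\cP}_{\cG,\epsilon}$ and a diagonalization/König's-lemma argument over the (compact, hence sequentially compact) tree of finite products, one can extract a single infinite sequence $\{\tilde P^{(n)}\}_{n\ge 0}$ all of whose prefixes have norm bounded below away from $0$ — contradicting the ergodicity statement just derived. Hence there exist $k_0 \in \N$ and $c \in (0,1)$ with $\|\tilde P_1 \cdots \tilde P_{k_0}\| \le c$ for all choices from $\tilde{\cP}_{\cG,\epsilon}$.

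The third step is purely mechanical: submultiplicativity gives, for any product of length $k = q k_0 + r$ with $0 \le r < k_0$, the bound $\|\tilde P_1\cdots \tilde P_k\| \le c^{q} \cdot L^{r}$ where $L := \sup_{\tilde P \in \tilde{\cP}_{\cG,\epsilon}} \|\tilde P\| < \infty$ by compactness; taking $k$-th roots and letting $k\to\infty$ yields $\JSR(\tilde{\cP}_{\cG,\epsilon}) \le c^{1/k_0} < 1$. I would also remark that $\JSR$ is norm-independent (as stated in the definition), so the choice of norm in the contraction step is immaterial.

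The step I expect to be the real obstacle is the second one — converting the qualitative, sequence-by-sequence ergodicity of Lemma~\ref{lem:hetero_prod_convergence} into a \emph{uniform} finite-step contraction valid over the whole set. The compactness-plus-contradiction (König's lemma / Tychonoff-type) argument is the crux; one must be careful that the bad products of increasing length can be organized into a finitely-branching tree so that an infinite bad branch exists, and that the norm lower bound is genuinely preserved along that branch (prefixes of a bad product of length $k$ need not themselves be bad, so the contradiction must be set up against the \emph{tail} behavior, e.g.\ by noting that if $\|\tilde P_1\cdots \tilde P_k\|$ is bounded below then, by submultiplicativity, $\|\tilde P_1 \cdots \tilde P_j\| \ge \|\tilde P_1\cdots\tilde P_k\| / (L^{k-j})$ is bounded below only for $j$ close to $k$ — so instead one should fix a target length and push it to infinity, or equivalently apply the standard equivalence theorem for the JSR of a compact set directly, citing \cite{Jungers2009TheJS}). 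Modulo this, everything else is routine.
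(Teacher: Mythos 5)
Your argument is essentially the same as the paper's: translate ergodicity of $\{P^{(n)}\}$ (Lemma~\ref{lem:hetero_prod_convergence}) into convergence of $\prod_{n}\tilde P^{(n)}$ to zero via the identity $BP=\tilde P B$ together with $BB^\top=I_{N-1}$, then observe $\tilde\cP_{\cG,\epsilon}$ is bounded and invoke the characterization that, for a bounded matrix family, $\JSR<1$ if and only if every infinite product tends to zero --- which the paper cites directly as Lemma~\ref{lem:JSR_AAS} (Proposition~3.2 of Theys). Your detour through a hand-rolled K\"onig's-lemma extraction of a uniform finite-step contraction is, as you yourself flag, not quite right as written (prefixes of a long near-unit-norm product need not themselves be large), but the cleaner route you also identify --- citing the standard equivalence for compact/bounded families --- is precisely what the paper does, so the proposal lands in the same place by the same route.
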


For the nonlinear case, let $0<\delta<1$ and define
\[\cD_\delta  \vcentcolon= \{\Diag(\mathbf{d}) : \mathbf{d}\in\mathbb{R}^{N}, \0\leq_{\ew}\rm{\mathbf{d}}\leq_{\ew}\delta\},\qquad \cM_{\cG, \epsilon, \delta} \vcentcolon= \{DP: D\in\cD_\delta, P\in \cP_{\cG,\epsilon}\}\,.\]

Then again, using the ergodicity result from the previous section, 
we establish that the joint spectral radius of $\cM_{\cG,\epsilon, \delta}$ is also less than 1.
\begin{lemma} Let $0 <\epsilon, \delta <1$. Under assumptions~\textup{\textbf{A1}}-\textup{\textbf{A4}}, 
    $\JSR(\cM_{\cG,\epsilon, \delta}) < 1$.
\label{lem:JSR_less_than_1_DP}
\vspace{-1ex}
\end{lemma}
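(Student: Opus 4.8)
The plan is to reduce the nonlinear case to a statement about contraction of long products and then package it as a joint spectral radius bound, mirroring the structure already used for $\tilde{\cP}_{\cG,\epsilon}$ in Lemma~\ref{lem:JSR_less_than_1_P}. Concretely, fix $0<\epsilon,\delta<1$ and consider any product $Q = D^{(T-1)}P^{(T-1)}\cdots D^{(0)}P^{(0)}$ of length $T$ with $D^{(t)}\in\cD_\delta$ and $P^{(t)}\in\cP_{\cG,\epsilon}$. Since every $D^{(t)}\leq_{\ew}\Diag(\delta\1)$, we have $\delta_t = \|D^{(t)}-I_N\|_\infty \geq 1-\delta > 0$ uniformly, so the contraction estimate quoted in Section~\ref{sec:ergodicity} — namely that there exist $T_0\in\N$ and $0<c<1$ with $\|Q_{t,t+T_0}\|_\infty \leq 1-c\delta_t \leq 1 - c(1-\delta)$ — gives a \emph{uniform} contraction factor $\rho := 1-c(1-\delta) < 1$ for every block of $T_0$ consecutive factors, independent of which matrices from $\cD_\delta$ and $\cP_{\cG,\epsilon}$ are chosen. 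Chaining $\lfloor k/T_0\rfloor$ such blocks yields $\|A_1\cdots A_k\|_\infty \leq \rho^{\lfloor k/T_0\rfloor}\cdot(\text{bounded remainder})$ for any $A_1,\dots,A_k\in\cM_{\cG,\epsilon,\delta}$ (each $\|A_i\|_\infty\leq 1$ since $A_i$ has row sums at most $1$ and nonnegative entries), and taking $k$-th roots and $\limsup$ gives $\JSR(\cM_{\cG,\epsilon,\delta}) \leq \rho^{1/T_0} < 1$.

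A few points need care. First, one must confirm that the uniform contraction estimate $\|Q_{t,t+T_0}\|_\infty \leq 1-c\delta_t$ can be made with $T_0$ and $c$ depending only on $\cG$ and $\epsilon$ (and not on the particular sequence): this is exactly what the common connectivity structure in Lemma~\ref{lem:connectivity_no_change} and the compactness of $\cP_{\cG,\epsilon}$ and $\cD_\delta$ buy us — after $T_0 = $ (diameter-type bound) steps, products of the $P^{(t)}$'s alone are strictly positive with a uniform lower bound on entries, and the diagonal damping by $D^{(t)}$ only helps the contraction. Second, because $\JSR$ is norm-independent, the choice of $\|\cdot\|_\infty$ above is harmless; I would state the bound in $\infty$-norm and invoke the definition's norm-independence. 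Third, I should double-check the edge case where the bound $1-c\delta_t$ is interpreted correctly: here we do not need the dichotomy of Lemma~\ref{lem:AAS_two_cases} at all, because $\delta<1$ forces $\delta_t$ uniformly away from $0$, so we are always in the ``$\beta=0$'' regime and in fact get a geometric rate rather than mere convergence to zero — this is the whole point of introducing $\cD_\delta$ rather than working with $\cD$.

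The main obstacle is making the uniform contraction claim fully rigorous: one must show that the constant $c$ and the horizon $T_0$ in $\|Q_{t,t+T_0}\|_\infty \leq 1-c\delta_t$ do not degrade as the $D^{(t)}$ approach $I_N$ or as the $P^{(t)}$ vary within $\cP_{\cG,\epsilon}$. I expect this to follow from a careful bookkeeping argument: writing $Q_{t,t+T_0}$ as a sum over paths in $\cG$, the product of the corresponding $P$-entries along any path of length $T_0$ connecting a given pair of nodes is at least $\epsilon^{T_0}$, and since $\cG$ is connected and non-bipartite (A1) there is a uniform $T_0$ (e.g., $N$ or $2N$) after which every ordered pair of nodes is joined by such a path; the $D^{(t)}$ factors contribute damping coefficients in $[0,\delta]$, and the only place $\delta_t$ enters the bound is through the ``lost mass'' $\1 - D^{(t)}\1 \geq_{\ew} (1-\delta)\1$, which is precisely $\delta_t$-controlled. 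Once this path-counting lemma is in hand, the $\JSR$ bound is immediate. I would therefore organize the proof as: (i) recall/restate the uniform block-contraction estimate with constants depending only on $(\cG,\epsilon,\delta)$, citing the development in Appendix~\ref{app:lem:hetero_prod_convergence_w_D}; (ii) note $\|A\|_\infty\leq 1$ for all $A\in\cM_{\cG,\epsilon,\delta}$; (iii) telescope over blocks of length $T_0$ to get $\|A_1\cdots A_k\|_\infty \leq \rho^{\lfloor k/T_0\rfloor}$; (iv) conclude $\JSR(\cM_{\cG,\epsilon,\delta})\leq\rho^{1/T_0}<1$ by taking $k$-th roots, using norm-independence of $\JSR$.
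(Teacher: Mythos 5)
Your argument is correct but follows a genuinely different route from the paper. You chain the uniform block contraction from Lemma~\ref{lem:contraction} (whose constants $T_0$ and $c$ depend only on $\cG$ and $\epsilon$ via Lemma~\ref{lem:batch_P_positive}, so the uniformity you flagged does hold), use $\delta_t \geq 1-\delta$ to get a per-block factor $\rho = 1-c(1-\delta) < 1$, telescope, and take $k$th roots; this is a direct quantitative argument yielding an explicit bound $\JSR(\cM_{\cG,\epsilon,\delta}) \leq \rho^{1/T_0}$. The paper instead notes that $\delta_t \geq 1-\delta > 0$ forces $\beta = 0$ in the dichotomy of Lemma~\ref{lem:AAS_two_cases}, hence every infinite product in $\cM_{\cG,\epsilon,\delta}$ converges to the zero matrix, and then invokes Lemma~\ref{lem:JSR_AAS} (the Theys characterization: for a bounded set, $\JSR < 1$ iff all infinite products vanish). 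The paper's route is shorter on the page because the characterization theorem absorbs the uniformity bookkeeping; yours gives an explicit rate. Both routes, however, are heavier than needed: for $D\in\cD_\delta$ and row-stochastic $P$, the $i$th row sum of $DP$ is $D_{ii}\sum_j P_{ij} = D_{ii} \leq \delta$, so $\|DP\|_\infty \leq \delta < 1$, and submultiplicativity gives $\|A_1\cdots A_k\|_\infty \leq \delta^k$ and hence $\JSR(\cM_{\cG,\epsilon,\delta}) \leq \delta$ in one line. Your step ``$\|A_i\|_\infty \leq 1$'' was thus unnecessarily coarse --- each single factor in $\cM_{\cG,\epsilon,\delta}$ is already a strict $\infty$-norm contraction, so no block structure is needed for this particular lemma (though it is needed for the weaker hypotheses in Theorem~\ref{thm:exp_convergence}).
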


The above lemma is specifically useful in establishing exponential rate for oversmoothing when dealing with nonlinearities for which Assumption 4 holds in the strict sense, i.e. $0\leq\frac{\sigma(x)}{x}<1$ (e.g., GELU and SiLU nonlinearities). Exponential convergence, however, can still be established under a weaker requirement, making it applicable to ReLU and Leaky ReLU, as we will see in Theorem~\ref{thm:exp_convergence}.

It follows from the definition of the joint spectral radius that if $\JSR(\cA) < 1$, for any $\JSR(\cA)<q<1$, there exists a $C$ for which  
\begin{equation}
    \|A_{1}A_{2}...A_{k}y\| \leq Cq^k\|y\|
    \label{eq:JSR_exponential_rate}
\end{equation}
for all $y\in\R^{N-1}$ and $A_1, A_{2}, ..., A_{k}\in\cA$.

\vspace{-1ex}
\subsection{Main Theorems}\label{sec:main_thm}

We have all the ingredients to prove our main results. As a final step, given Lemma~\ref{lem:hetero_prod_convergence_w_D} and~\eqref{eq:JSR_exponential_rate}, how could we incorporate the weight matrices $W^{(t)}$ into the final analysis? Recall that the formulation of $X^{(t+1)}_{\cdot i}$ in~\eqref{eq:trajs}, then $\|BX^{(t+2)}_{\cdot i}\|_2$ can be bounded as  
\begin{align*}
\|BX^{(t+1)}_{\cdot i}\|_2 & = \underset{j_{t+1} = i,\,({j_{t},...,j_0})\in [d]^{t+1}}{\sum}\left(\prod_{k=0}^{t} W^{(k)}_{j_{k} j_{k+1}}\right) D^{(t)}_{j_{t+1}}P^{(t)} ... D^{(0)}_{j_1}P^{(0)}X^{(0)}_{\cdot j_0}\\
& \leq C\|(|W^{(0)}|...|W^{(t)}|)_{\cdot i}\|_1 \underset{\substack{D^{(n)}\in \mathcal{D}\\0\leq n \leq t}}{\sup}\left\|BD^{(t)}P^{(t)} ... D^{(0)}P^{(0)}\right\|_2 \\
     & \leq C'\underset{\substack{D^{(n)}\in \mathcal{D}\\0\leq n \leq t}}{\sup}\left\|BD^{(t)}P^{(t)} ... D^{(0)}P^{(0)}\right\|_2
\end{align*}

where $C = \underset{j\in[d]}{\max}\|X_{\cdot j}^{(0)}\|_2$, and the last inequality is due to the assumption \textbf{A3} as it implies that there exists $C''\in\R$ such that for all $t\geq 0$, $i\in[d]$, $\|(|W^{(0)}|...|W^{(t)}|)_{\cdot i}\|_1 \leq C''$.

It is then evident that the term 
\begin{equation}
    \underset{\substack{D^{(n)}\in \mathcal{D}\\0\leq n \leq t}}{\sup}\left\|BD^{(t)}P^{(t)} ... D^{(0)}P^{(0)}\right\|_2
    \label{eq: key_term}
\end{equation}
determines both the convergence and its rate, i.e. if $\eqref{eq: key_term}$ converges to zero, so does $\mu(X^{(t)})$; if $\eqref{eq: key_term}$ converges to zero exponentially fast, the same applies for $\mu(X^{(t)})$.

Lemma~\ref{lem:JSR_less_than_1_P} and Lemma~\ref{lem:JSR_less_than_1_DP} on the joint spectral radius give sufficient conditions to satisfy the above key condition on~\eqref{eq: key_term}. Specifically, applying \eqref{eq:JSR_exponential_rate} to the recursive expansion of $X^{(t+1)}_{\cdot i}$ in \eqref{eq:trajs} using the $2$-norm, we can prove  the exponential convergence of $\mu(X^{(t)})$ to zero for the similarity measure $\mu(\cdot)$ defined in \eqref{eq::mu}, which in turn implies the convergence of node representations to a common representation at an exponential rate. This completes the proof of the main result of this paper, which states that oversmoothing defined in (\ref{eq:def_oversmoothing}) is unavoidable for attention-based GNNs, and that an exponential convergence rate can be attained under general conditions.

\begin{theorem}
Under assumptions~\textup{\textbf{A1}}-\textup{\textbf{A4}}, if in addition,
\begin{description}[font=$\bullet$~\normalfont]
    \item [(linear)] $\sigma(\cdot)$ is the identity map, or
    \item [(nonlinear)]there exists $K\in\N$ and $0 < \delta < 1$ for which the following holds: For all $m\in \N_{\geq 0}$, there is $n_m \in \{0\} \cup [K-1]$ such that for any $c \in [d]$, $\sigma(X^{(mK+n_m)}_{r_c c})/X^{(mK+n_m)}_{r_c c} \leq \delta$ for some $r_c \in [N]$, \hfill ($\star$)
\end{description}
then there exists $q<1$ and $C_1(q)>0$ such that
\[\mu(X^{(t)}) \leq C_1q^t\,, \forall t\geq 0\,.\]

 As a result, node representations $X^{(t)}$ exponentially converge to the same value as the model depth $t\to \infty$.

\label{thm:exp_convergence}
\end{theorem}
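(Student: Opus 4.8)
The plan is to derive the bound on $\mu(X^{(t)})$ directly from the chain of inequalities already displayed before the theorem statement, combined with the joint-spectral-radius lemmas. Recall that $\mu(X^{(t)}) = \|X^{(t)} - \1\gamma_{X^{(t)}}\|_F = \|BX^{(t)}\|_F$, so it suffices to control $\|BX^{(t)}_{\cdot i}\|_2$ for each column $i\in[d]$ and then sum over $i$. The displayed computation reduces this, via~\eqref{eq:trajs} and assumption \textbf{A3}, to bounding the key term~\eqref{eq: key_term}, namely $\sup_{D^{(n)}\in\cD}\|BD^{(t)}P^{(t)}\cdots D^{(0)}P^{(0)}\|_2$. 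So the whole proof comes down to showing this quantity decays exponentially under either the (linear) or the (nonlinear) hypothesis, using Lemmas~\ref{lem:JSR_less_than_1_P} and~\ref{lem:JSR_less_than_1_DP} together with~\eqref{eq:JSR_exponential_rate}.

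For the \textbf{linear case}, the $D^{(n)}$ are all the identity, so the key term is just $\|BP^{(t)}\cdots P^{(0)}\|_2 = \|\tilde P^{(t)}\cdots\tilde P^{(0)} B\|_2 \le \|\tilde P^{(t)}\cdots\tilde P^{(0)}\|_2\,\|B\|_2$ using property 3 of $B$ from Section~\ref{sec:ergodicity}, where each $\tilde P^{(i)}\in\tilde\cP_{\cG,\epsilon}$ by Lemma~\ref{lem:connectivity_no_change}. Since $\JSR(\tilde\cP_{\cG,\epsilon})<1$ by Lemma~\ref{lem:JSR_less_than_1_P}, picking any $q$ with $\JSR(\tilde\cP_{\cG,\epsilon})<q<1$, \eqref{eq:JSR_exponential_rate} gives $\|\tilde P^{(t)}\cdots\tilde P^{(0)}y\|_2\le Cq^{t+1}\|y\|_2$ for all $y$, hence $\|BX^{(t+1)}_{\cdot i}\|_2 \le C'C q^{t+1}$, and summing over the $d$ columns yields $\mu(X^{(t)})\le C_1 q^t$.

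For the \textbf{nonlinear case}, the subtlety is that a generic $D^{(n)}\in\cD$ may have entries equal to $1$, so it need not lie in $\cD_\delta$, and thus the product need not be a product of matrices from $\cM_{\cG,\epsilon,\delta}$ — Lemma~\ref{lem:JSR_less_than_1_DP} does not apply term-by-term. The device is hypothesis $(\star)$: it guarantees that in every block of $K$ consecutive layers there is one layer $mK+n_m$ at which, for each feature column $c$, at least one diagonal entry of the relevant $D$-matrix is $\le\delta$. More precisely, recall from~\eqref{eq:nonlinearity_matrix_form} that $D^{(t)}_i$ has $j$th diagonal entry $\sigma((X^{(t)}W^{(t)})_{ji})/(X^{(t)}W^{(t)})_{ji}$; I would need to first observe — this is the one genuine calculation — that $(\star)$, stated in terms of $X^{(mK+n_m)}_{r_c c}$, translates into a statement about these $D$-matrices having a small entry, by tracking which column index $c$ and row index $r_c$ controls the relevant diagonal slot. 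Then the trick is to factor the product $Q_{0,t}$ into blocks of length $K$; in each block one can peel off one $D$-matrix with a $\le\delta$ entry in the right position and bound the block's contribution using a variant of Lemma~\ref{lem:JSR_less_than_1_DP} (or re-run the contraction argument behind it), getting a factor strictly below $1$ per block, uniformly. Multiplying over the $\lfloor t/K\rfloor$ blocks gives the exponential decay $q^t$ with $q = (\text{block factor})^{1/K}<1$, and again summing over columns gives $\mu(X^{(t)})\le C_1 q^t$.

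The main obstacle is precisely this bookkeeping in the nonlinear case: matching the column-index structure of hypothesis $(\star)$ — which is phrased about a single scalar entry $X^{(mK+n_m)}_{r_c c}$ for each $c$ — to the tensor of diagonal matrices $D^{(n)}_{j_{n+1}}$ appearing in the nested sum~\eqref{eq:trajs}, where the index $j_{n+1}$ ranges over all of $[d]$ as one traverses the product, and then arranging the block factorization so that at each block the guaranteed small entry actually acts on the relevant subspace. One has to be careful that the "$\le\delta$ in one position" is enough to contract $\|B(\cdot)\|_2$ by a definite amount when combined with the strict positivity of the $P$-factors over $E(\cG)$ (assumption \textbf{A1} ensuring long products become strictly positive); this is where Lemma~\ref{lem:JSR_less_than_1_DP}'s proof machinery, rather than just its statement, is invoked. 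Once that is set up, the remaining steps — invoking \eqref{eq:JSR_exponential_rate}, using \textbf{A3} to absorb the weight-matrix norms into a constant, and summing the $d$ columns — are routine, and the final conclusion that $X^{(t)}$ converges to a common vector follows since $\mu(X^{(t)})\to 0$ forces all rows equal by axiom~1 of the node similarity measure.
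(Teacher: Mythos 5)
Your linear-case argument is essentially the paper's: expand $\|BX^{(t+1)}_{\cdot i}\|_2$ via~\eqref{eq:trajs}, commute $B$ past each $P^{(k)}$ using property 3 of $B$, invoke Lemma~\ref{lem:JSR_less_than_1_P} with~\eqref{eq:JSR_exponential_rate}, and absorb $\|(|W^{(0)}|\cdots|W^{(t)}|)_{\cdot i}\|_1$ with \textbf{A3}. That part is complete.

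The nonlinear case has a genuine gap. First, a small but real misstatement: the $j$th diagonal entry of $D^{(t)}_i$ is $\sigma\bigl((P^{(t)}X^{(t)}W^{(t)})_{ji}\bigr)/(P^{(t)}X^{(t)}W^{(t)})_{ji}$, not $\sigma\bigl((X^{(t)}W^{(t)})_{ji}\bigr)/(X^{(t)}W^{(t)})_{ji}$ --- in~\eqref{eq:nonlinearity_matrix_form} the nonlinearity acts \emph{after} multiplication by $P^{(t)}$. More substantively, blocks of length $K$ cannot by themselves yield a per-block contraction. The engine is the appendix Lemma~\ref{lem:contraction}: one must first wait $T$ layers so that $P^{(t_1+T)}\cdots P^{(t_1+1)}$ becomes strictly positive (Lemma~\ref{lem:batch_P_positive}), and only then does a sub-unity diagonal entry of $D^{(t_1)}$ lower the maximum row sum, giving the $\infty$-norm contraction $\|\hat Q_{t_0,t_1+T}\|_\infty \le (1-c\delta_{t_1})\|\hat Q_{t_0,t_1}\|_\infty$. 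Since $T$ and $K$ are unrelated, the paper writes $T=a_1K+a_2$ with $a_2\in\{0\}\cup[K-1]$, uses ($\star$) to locate one small-$D$ layer in each window of $K$, and obtains one factor $(1-c\delta')$ (with $\delta'=1-\delta$) over every stretch of $(a_1+2)K$ layers, hence $\|Q_{0,n(a_1+2)K}\|_\infty\le(1-c\delta')^n$ and rate $\underline q=(1-c\delta')^{1/((a_1+2)K)}$. Two further points diverge from your plan: (i) the paper does \emph{not} invoke Lemma~\ref{lem:JSR_less_than_1_DP} here --- that lemma requires every $D$-factor to lie in $\cD_\delta$, which only holds for strict nonlinearities like GELU/SiLU (the paper says so explicitly in a remark), whereas ($\star$) gives a small entry only at scattered layers, forcing the argument back through Lemma~\ref{lem:contraction}; and (ii) the contraction is of $\|Q_{0,t}\|_\infty$ itself, not of $\|B(\cdot)\|_2$ --- in the nonlinear case the entire product $Q_{0,t}$ goes to zero, so one never needs to project onto the orthogonal complement of $\1$, and the bound on $\|BX^{(t+1)}_{\cdot i}\|_2$ follows trivially via $\|B\|_2=1$. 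Your instinct to ``re-run the contraction argument'' is correct, but the $(a_1+2)K$ block arithmetic and the row-sum (rather than subspace) mechanism are the missing ingredients.
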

\vspace{-1ex}
Theorem~\ref{thm:exp_convergence} establishes that oversmoothing is asymptotically inevitable for attention-based GNNs with general nonlinearities. Despite similarity-based importance assigned to different nodes via the aggregation operator $P^{(t)}$, \emph{such attention-based  mechanisms are yet unable to fundamentally change the connectivity structure of $P^{(t)}$}, resulting in node representations converging to a common vector. Our results hence indirectly support the emergence of alternative ideas for changing the graph connectivity structure such as edge-dropping~\cite{Rong2019DropEdgeTD, Hasanzadeh2020BayesianGN} or graph-rewiring~\cite{Klicpera2019DiffusionIG}, in an effort to mitigate oversmoothing. 

\begin{Remark}
    For nonlinearities such as SiLU or GELU, the condition~($\star$) is automatically satisfied under \textbf{A3}-\textbf{A4}.
    For ReLU and LeakyReLU, this is equivalent to requiring that there exists $K\in\N$ such that for all $m\in \N_{\geq 0}$, there exists $n_m \in \{0\}\cup [K-1]$ where for any $c\in[d]$, $X^{(mK+n_m)}_{r_c c} < 0$ for some $r_c \in[d]$.
\end{Remark}

\begin{Remark}
    We note that our results are not specific to the choice of node similarity measure $\mu(X) = \|X - \1_{\gamma_X}\|_F$ considered in our analysis. In fact, exponential convergence of any other Lipschitz node similarity measure $\mu'$ to $0$ is a direct corollary of Theorem~\ref{thm:exp_convergence}. To see this, observe that for a node similarity measure $\mu'$ with a Lipschitz constant $L$, it holds that
\begin{align*}
    \mu'(X) = |\mu'(X) - \mu'(\1_{\gamma_X})| \leq L\|X-\1_{\gamma_X}\|_F = L\mu(X).
\end{align*}
In particular, the Dirichlet energy is Lipschitz given that the input $X$ has a compact domain, established in Lemma~\ref{lem:bounded_traj}. Hence our theory directly implies the exponential convergence of Dirichlet energy.
\end{Remark}

Besides utilizing the properties of specific nonlinearity functions $\sigma(\cdot)$, another way to derive the convergence of $\mu(X^{(t)})$ to zero under general class of nonlinearities $\cD$  is to restrict the class of weights $W^{(t)}$. To see this, write the update rule of $X^{(t+1)}$ in the vectorized form:
\begin{equation}
    \vect\left(X^{(t+1)}\right) = \tilde{D}^{(t)}\left({\left(W^{(t)}\right)}^\top \otimes P^{(t)}\right) \vect\left(X^{(t)}\right)\,,
\end{equation}
where $\tilde{D}^{(t)}\in\R^{Nd\times Nd}$ represents the effect of $\sigma(\cdot)$ on each entry of  $\vect\left(X^{(t)}\right)$.
We restrict the class of $W^{(t)}$ to satisfy the following stricter condition:

\textbf{Alternative A3 (A3')} \quad For any $t\in\N_{\geq 0}$, $W^{(t)}\in\R^{d\times d}$ is a column substochastic matrix\footnote{Column substochastic means that the sum of entries in each column is no greater than one.}. Furthermore, there exists $0< \xi < 1$ such that for all $t\in\N_{\geq 0}$,
\begin{enumerate}[leftmargin=5ex]
    \item $W^{(t)}_{ii} \geq \xi,\,\text{for all } i\in [d]$;\\\
    \vspace{-1ex}
    \item If $W^{(t)}_{ij} > 0$, $W^{(t)}_{ij} \geq \xi$ and $W^{(t)}_{ji} \geq \xi$ for all $i\neq j\in[d]$.\\ 
\end{enumerate}

Then Lemma~\ref{lem:hetero_prod_convergence_w_D} directly implies the convergence of $\mu(X^{(t)})$ to zero as follows:
\begin{theorem}
    Under assumptions \textup{\textbf{A1}}, \textup{\textbf{A2}}, \textup{\textbf{A3'}} and \textup{\textbf{A4}},
    \[\lim_{t\to\infty}\mu(X^{(t)}) = 0\,.\]
    \label{thm: convergence_stronger_A3}
\end{theorem}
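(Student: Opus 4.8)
The plan is to pass to the vectorized dynamics and reduce the claim to an ergodicity statement of the same type as Lemma~\ref{lem:hetero_prod_convergence_w_D}, but carried out on the enlarged index set $[d]\times[N]$. Writing the layer update in the form
\[
\vect\bigl(X^{(t+1)}\bigr)=\tilde D^{(t)}\Bigl(\bigl(W^{(t)}\bigr)^\top\otimes P^{(t)}\Bigr)\vect\bigl(X^{(t)}\bigr),
\]
the matrix $\tilde D^{(t)}$ is block-diagonal with $i$-th $N\times N$ block equal to the diagonal matrix $D^{(t)}_i\in\cD$ capturing the effect of $\sigma$ on the $i$-th column of $P^{(t)}X^{(t)}W^{(t)}$, exactly as in~\eqref{eq:nonlinearity_matrix_form}. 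Using \textbf{A3'} I would factor $\bigl(W^{(t)}\bigr)^\top=C^{(t)}\bar W^{(t)}$, where $C^{(t)}=\Diag\bigl(\1^\top W^{(t)}\bigr)$ collects the column sums of $W^{(t)}$ (each in $[\xi,1]$, since $W^{(t)}$ is column-substochastic and $W^{(t)}_{ii}\ge\xi$) and $\bar W^{(t)}=\bigl(C^{(t)}\bigr)^{-1}\bigl(W^{(t)}\bigr)^\top$ is row-stochastic with $\bar W^{(t)}_{ii}\ge\xi$. By the mixed-product rule this gives $\tilde D^{(t)}\bigl((W^{(t)})^\top\otimes P^{(t)}\bigr)=\hat D^{(t)}\hat P^{(t)}$, with $\hat D^{(t)}\vcentcolon=\tilde D^{(t)}\bigl(C^{(t)}\otimes I_N\bigr)$ a diagonal matrix with entries in $[0,1]$ and $\hat P^{(t)}\vcentcolon=\bar W^{(t)}\otimes P^{(t)}$ row-stochastic. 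Since \textbf{A3'} implies \textbf{A3}, Lemma~\ref{lem:connectivity_no_change} applies and $P^{(t)}\in\cP_{\cG,\epsilon}$, so each of the $d$ diagonal $N\times N$ blocks of $\hat P^{(t)}$ equals $\bar W^{(t)}_{ii}P^{(t)}$, which is supported on $E(\cG)$ with every edge-entry at least $\xi\epsilon$, while its off-diagonal blocks are entrywise nonnegative.

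Let $\hat B\vcentcolon= I_d\otimes B$, so that $\hat B\,\vect(X)=\vect(BX)$ and hence $\|\hat B\,\vect(X)\|_2=\|BX\|_F=\mu(X)$ by~\eqref{eq::mu} and property~2 of $B$ from Section~\ref{sec:ergodicity}. Thus the theorem reduces to showing $\hat B\,\hat D^{(t)}\hat P^{(t)}\cdots\hat D^{(0)}\hat P^{(0)}\to 0$, i.e.\ the ergodicity conclusion of Lemma~\ref{lem:hetero_prod_convergence_w_D} for the sequence $\{\hat D^{(t)}\hat P^{(t)}\}$ with $B$ replaced by $\hat B$. I would establish this by re-running the proof of Lemma~\ref{lem:hetero_prod_convergence_w_D} through the contraction estimate behind Lemma~\ref{lem:AAS_two_cases}: the graph enters there only to ensure that a fixed-length product of the stochastic factors becomes uniformly positive, and here --- because each diagonal block of every $\hat P^{(t)}$ is supported on $E(\cG)$ with a uniform positive lower bound and $\cG$ is connected and non-bipartite by \textbf{A1} --- there exist $T_0\in\N$ and $c_1>0$ such that for every $t$ the product $\hat P^{(t+T_0)}\cdots\hat P^{(t)}$ has each of its $d$ diagonal blocks bounded below entrywise by $c_1\,\1\1^\top$, with all off-diagonal blocks still entrywise nonnegative. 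Tracking the ``deficit'' $I-\hat D^{(t)}$ separately on each of the $d$ feature blocks, this block-wise uniform positivity plays the role of the uniform positivity used in the original contraction argument and yields $\hat B\,\hat D^{(t)}\hat P^{(t)}\cdots\hat D^{(0)}\hat P^{(0)}\to 0$; consequently $\mu\bigl(X^{(t)}\bigr)\le\bigl\|\hat B\,\hat D^{(t-1)}\hat P^{(t-1)}\cdots\hat D^{(0)}\hat P^{(0)}\bigr\|_2\,\bigl\|\vect(X^{(0)})\bigr\|_2\to 0$.

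I expect the adaptation of the contraction argument in the second step to be the main obstacle. Unlike the nonlinear case of Theorem~\ref{thm:exp_convergence}, which uses condition~($\star$) to make the diagonal matrices uniformly bounded away from the identity and thereby invoke $\JSR(\cM_{\cG,\epsilon,\delta})<1$, under \textbf{A3'} the matrices $\hat D^{(t)}$ carry no such uniform gap, so one cannot shortcut through Lemma~\ref{lem:JSR_less_than_1_DP} and must genuinely re-derive ergodicity. The subtlety is that the graph underlying $\hat P^{(t)}$ need not be connected across feature coordinates (for instance when every $W^{(t)}$ is diagonal, in which case $\hat P^{(t)}=I_d\otimes P^{(t)}$), so the global $\infty$-norm contraction $\|Q_{t,t+T_0}\|_\infty\le 1-c\delta_t$ can fail; the resolution is that \textbf{A3'} still forces $d$ uniformly connected, uniformly non-bipartite diagonal blocks, which is precisely enough to drive consensus within each feature coordinate --- all that $\mu$ records --- and whatever cross-coordinate coupling is present in $\bar W^{(t)}$ only accelerates mixing and never obstructs it. Once this block-wise contraction is in hand, the remaining bookkeeping is identical to that in the proofs of Lemma~\ref{lem:AAS_two_cases} and Lemma~\ref{lem:hetero_prod_convergence_w_D}.
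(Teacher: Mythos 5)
Your route is essentially the paper's: vectorize the update, pull out the column-sum diagonal $C^{(t)}$ so that $\tilde D^{(t)}\bigl((W^{(t)})^\top\otimes P^{(t)}\bigr)$ becomes $\hat D^{(t)}\hat P^{(t)}$ with $\hat P^{(t)}=\bar W^{(t)}\otimes P^{(t)}$ row-stochastic and $\hat D^{(t)}$ diagonal in $[0,1]$, and then feed this into the ergodicity machinery behind Lemma~\ref{lem:hetero_prod_convergence_w_D}. Your explicit factorization through $C^{(t)}$ is a correct spelling-out of a step the paper only gestures at (the paper merely notes that $(W^{(t)})^\top\otimes P^{(t)}$ is row-substochastic with nonzero entries bounded below by $\xi\epsilon$).

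Where you and the paper part ways is in handling the connectivity of the meta-system, and there your proposal has a genuine gap. The paper resolves it cleanly: it reads off the sparsity pattern of $(W^{(t)})^\top\otimes P^{(t)}$ as a fixed meta-graph $\cG_m$ on $Nd$ nodes, observes via \textbf{A3'} and \textbf{A1} that every connected component of $\cG_m$ is connected and non-bipartite, and then applies Lemma~\ref{lem:hetero_prod_convergence_w_D} to each component separately --- each component being a closed row-stochastic subsystem once the $W$-sparsity is regarded as time-invariant, so no block-wise surgery on the contraction argument is needed at all. You instead keep only the $d$ within-feature diagonal blocks and propose to re-derive the contraction block by block, asserting that ``cross-coordinate coupling only accelerates mixing and never obstructs it.'' That assertion is doing all the work and is not obviously true: the diagonal block $\bar W^{(t)}_{ii}P^{(t)}$ is row-\emph{sub}stochastic on its own, probability mass leaks to other feature blocks through $\bar W^{(t)}_{ij}$, and the $\infty$-norm contraction estimate of Lemma~\ref{lem:contraction} simply does not localize to a block without a new argument tracking how the per-block deficits $\delta^{(i)}_t$ propagate through a coupled, non-closed system. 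You correctly diagnose that the global $\|Q_{t,t+T}\|_\infty\le 1-c\delta_t$ estimate fails when $\cG_m$ is disconnected, but the paper never attempts that global bound --- it restricts to each component, where the estimate is intact. The fix is to follow the paper: build the meta-graph from $\cG$ and the (common) sparsity pattern of the $W^{(t)}$, verify each component is non-bipartite, and invoke Lemma~\ref{lem:hetero_prod_convergence_w_D} component by component, rather than re-deriving a block-wise contraction.
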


\begin{Remark}
    We note that the above condition requires $W^{(t)}$ to have a symmetric sparsity pattern (symmetric entries do not need to be equal). Such a symmetric pattern is necessary for the result to hold. To see the necessity, consider the following counterexample:

    \textbf{\textup{Counterexample}} \quad Let $N=2, d=2$, and $X_{\cdot1}^{(t)}$ and $X_{\cdot2}^{(t)}$ satisfy the following update rule:
\begin{equation}
    \begin{cases}
        X_{\cdot1}^{(t+1)} = D_1 \left(W_{11} P X_{\cdot1}^{(t)} + W_{21} P X_{\cdot2}^{(t)}\right)\\
        X_{\cdot2}^{(t+1)} = D_2 \left(W_{12} P X_{\cdot1}^{(t)} + W_{22} P X_{\cdot2}^{(t)}\right)\\
    \end{cases}
    \label{eq: counterexample}
\end{equation}
where
\[D_1 = \begin{bmatrix}
    1 & 0\\
    0 & 1 
\end{bmatrix}  D_2 = \begin{bmatrix}
    1/2 & 0\\
    0 & 1 
\end{bmatrix}  P= \begin{bmatrix}
    0.5 & 0.5\\
    0.5 & 0.5 
\end{bmatrix}\,,\]
and \[W = \begin{bmatrix}
    W_{11}& W_{12}\\
    W_{21} & W_{22} 
\end{bmatrix} = \begin{bmatrix}
    2/3& 0\\
    1/3 & 1
\end{bmatrix}\,.\]
Then 
\[X = \begin{bmatrix}
    1/3 & 1 \\
    2/3 & 1
\end{bmatrix}\]

is a  fixed point for the system defined in~\eqref{eq: counterexample}. As a result, if $X^{(0)}$ starts at $X$, $\mu(X^{(t)})$ will not converge to zero.
\end{Remark}
\begin{Remark}
    If the sparsity pattern of $W^{(t)}$ can be represented as a connected graph $\cG'$ with self-loops at each node, then the convergence of $\mu(X^{(t)})$ happens in a stronger sense: each entry of $X^{(t)}$ converges to the same value --- more than just each row of $X^{(t)}$ converging to the same vector. Such a result, might also be related to the feature overcorrelation phenomenon observed in GNNs~\cite{Jin2022FeatureOI}.
\end{Remark}

\subsection{Comparison with the GCN}\label{sec:comparison_GCN}
Computing or approximating the joint spectral radius for a given set of matrices is known to be hard in general~\cite{Tsitsiklis1997TheLE}, yet it is straightforward
to lower bound $\JSR(\tilde\cP_{\cG,\epsilon})$ as stated in the next proposition.

\begin{proposition}
    Let $\lambda$ be the second largest eigenvalue of $D_{\deg}^{-1/2}AD_{\deg}^{-1/2}$. Then under assumptions \textup{\textbf{A1}}-\textup{\textbf{A4}}, it holds that
    $\lambda \leq \JSR(\tilde\cP_{\cG,\epsilon})$.
\label{prop:better_than_GCN}
\end{proposition}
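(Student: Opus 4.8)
The plan is to lower-bound $\JSR(\tilde{\cP}_{\cG,\epsilon})$ by the spectral radius of a single, carefully chosen member of $\tilde{\cP}_{\cG,\epsilon}$. The elementary fact I would lean on is that the joint spectral radius of a set dominates the spectral radius of each of its elements: for any $M\in\tilde{\cP}_{\cG,\epsilon}$, taking $A_1=\cdots=A_k=M$ in the definition of $\JSR$ and invoking Gelfand's formula gives $\JSR(\tilde{\cP}_{\cG,\epsilon})\geq\limsup_k\|M^k\|^{1/k}=\rho(M)$. The natural candidate is the random walk operator $P_{\mathrm{rw}}\vcentcolon=D_{\deg}^{-1}A$, together with the unique $\tilde{P}_{\mathrm{rw}}\in\R^{(N-1)\times(N-1)}$ with $BP_{\mathrm{rw}}=\tilde{P}_{\mathrm{rw}}B$ guaranteed by the third property of $B$ in Section~\ref{sec:ergodicity}. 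It then suffices to show $\tilde{P}_{\mathrm{rw}}\in\tilde{\cP}_{\cG,\epsilon}$ and $\rho(\tilde{P}_{\mathrm{rw}})\geq\lambda$.

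First I would check that $P_{\mathrm{rw}}\in\cP_{\cG,\epsilon}$. The matrix $P_{\mathrm{rw}}$ is row-stochastic, and its $(i,j)$ entry equals $1/d_i$ if $(i,j)\in E(\cG)$ and $0$ otherwise, where $d_i\vcentcolon=(D_{\deg})_{ii}$. On the other hand, any $P\in\cP_{\cG,\epsilon}$ has exactly $d_i$ nonzero entries on its $i$-th row, each at least $\epsilon$ and summing to $1$; hence nonemptiness of $\cP_{\cG,\epsilon}$ — which holds here because Lemma~\ref{lem:connectivity_no_change} places every attention matrix in it — forces $\epsilon\leq1/\max_i d_i\leq1/d_i$ for all $i$. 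Therefore $1/d_i\geq\epsilon$ on every edge, so $P_{\mathrm{rw}}\in\cP_{\cG,\epsilon}$ and $\tilde{P}_{\mathrm{rw}}\in\tilde{\cP}_{\cG,\epsilon}$.

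Next I would pin down the spectrum of $\tilde{P}_{\mathrm{rw}}$. Let $U\vcentcolon=[\,\1/\sqrt{N}\mid B^\top\,]\in\R^{N\times N}$, which is orthogonal since the columns of $B^\top$ form an orthonormal basis of $\Span\{\1\}^\perp$. Using $P_{\mathrm{rw}}\1=\1$ and $B\1=0$, the first column of $U^\top P_{\mathrm{rw}}U$ is $(1,\0^\top)^\top$, and since $B$ has orthonormal rows we have $BB^\top=I_{N-1}$, so the lower-right $(N-1)\times(N-1)$ block is $BP_{\mathrm{rw}}B^\top=\tilde{P}_{\mathrm{rw}}BB^\top=\tilde{P}_{\mathrm{rw}}$. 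Thus $U^\top P_{\mathrm{rw}}U$ is block upper triangular with diagonal blocks $1$ and $\tilde{P}_{\mathrm{rw}}$, so the spectrum of $P_{\mathrm{rw}}$ is $\{1\}$ together with that of $\tilde{P}_{\mathrm{rw}}$, counted with multiplicity. Since $P_{\mathrm{rw}}=D_{\deg}^{-1/2}\big(D_{\deg}^{-1/2}AD_{\deg}^{-1/2}\big)D_{\deg}^{1/2}$ is similar to the symmetric matrix $D_{\deg}^{-1/2}AD_{\deg}^{-1/2}$, its eigenvalues are real, and by \textbf{A1} together with Perron--Frobenius the top eigenvalue $1$ is simple while $\lambda$ is the next largest. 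Hence $\lambda$ is an eigenvalue of $\tilde{P}_{\mathrm{rw}}$, so $\rho(\tilde{P}_{\mathrm{rw}})\geq|\lambda|\geq\lambda$, and combining with the first paragraph yields $\JSR(\tilde{\cP}_{\cG,\epsilon})\geq\rho(\tilde{P}_{\mathrm{rw}})\geq\lambda$.

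I expect the only mildly delicate points to be the verification that $P_{\mathrm{rw}}$ genuinely lies in $\cP_{\cG,\epsilon}$ for the relevant $\epsilon$ (handled above through the nonemptiness constraint $\epsilon\leq1/\max_i d_i$) and the block-triangularization identifying $\mathrm{spec}(\tilde{P}_{\mathrm{rw}})$ as $\mathrm{spec}(P_{\mathrm{rw}})$ with one copy of the eigenvalue $1$ removed; everything else is a direct application of the elementary $\JSR$ lower bound and of the properties of $B$ already recorded in Section~\ref{sec:ergodicity}.
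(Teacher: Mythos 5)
Your proposal is correct and follows essentially the same route as the paper: take $D_{\deg}^{-1}A$ (noting it lies in $\cP_{\cG,\epsilon}$ because $\epsilon\le 1/d_{\max}$), pass to the corresponding $\tilde P\in\tilde\cP_{\cG,\epsilon}$, and lower-bound $\JSR(\tilde\cP_{\cG,\epsilon})$ by $\rho(\tilde P)=\lambda$ via Gelfand's formula. You additionally spell out the block-triangularization of $U^\top P_{\mathrm{rw}}U$ to justify that $\mathrm{spec}(\tilde P)$ is $\mathrm{spec}(P_{\mathrm{rw}})$ with one copy of the eigenvalue $1$ removed, a step the paper asserts without detail, so your write-up is if anything more complete.
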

\vspace{-1ex}
In the linear case, the upper bound $q$ on the convergence rate that we get for graph attention in Theorem~\ref{thm:exp_convergence} is lower bounded by $\JSR(\tilde\cP_{\cG,\epsilon})$.  A direct consequence of the above result is that $q$ is at least as large as $\lambda$}. On the other hand, previous work has already established that in the graph convolution case, the convergence rate of $\mu(X^{(t)})$ is $O(\lambda^t)$~\cite{Oono2019GraphNN,Cai2020ANO}. It is thus natural to expect attention-based GNNs to potentially have better expressive power at finite depth than GCNs, even though they both inevitably suffer from oversmoothing. This is also evident from the numerical experiments that we present in the next section.

\section{Numerical Experiments}\label{sec:exp}
\label{sec:numeric}
In this section, we validate our theoretical findings via numerical experiments using the three commonly used homophilic benchmark datasets: Cora, CiteSeer, and PubMed~\cite{Yang2016RevisitingSL} and the three commonly used heterophilic benchmark datasets: Cornell, Texas, and Wisconsin~\cite{Pei2020GeomGCNGG}. We note that our theoretical results are developed for generic graphs and thus hold for datasets exhibiting either homophily or heterophily and even those that are not necessarily either of the two. More details about the experiments are provided in Appendix~\ref{app:numerical}.

\begin{figure}[h]
\hspace*{-0.2cm} 
    \centering
    \includegraphics[width = 0.9\textwidth]{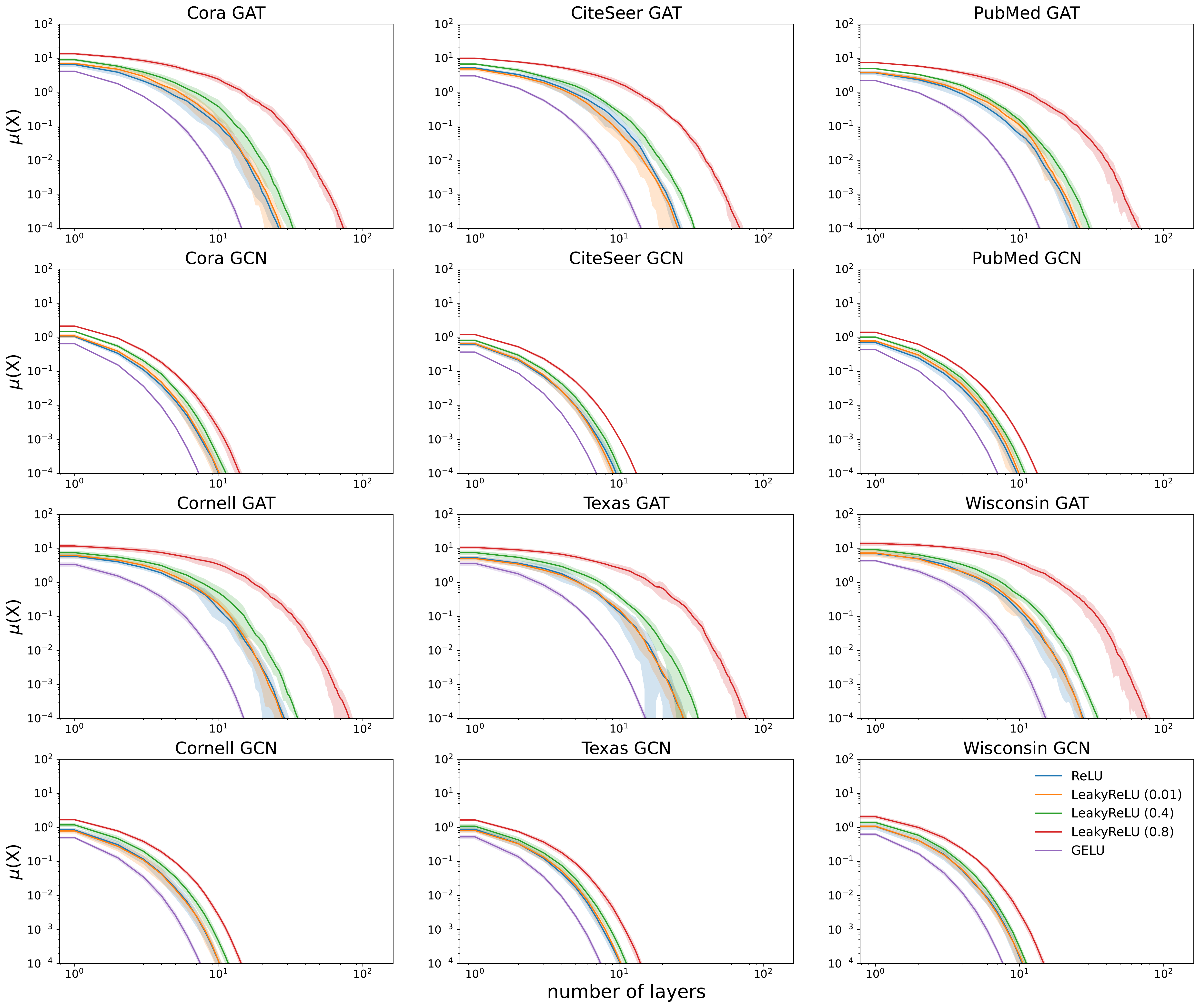}
    \caption{Evolution of $\mu(X^{(t)})$ (in log-log scale) on the largest connected component of each dataset (top 2 rows: homophilic graphs; bottom 2 rows: heterophilic graphs). Oversmoothing happens exponentially in both GCNs and GATs with the rates varying depending on the choice of activation function. Notably, GCNs demonstrate faster rates of oversmoothing compared to GATs.}
    \label{fig:numerical}
    \vspace{-2ex}
\end{figure}

For each dataset, we trained a $128$-layer single-head GAT and a $128$-layer GCN with the random walk graph convolution $D_{\deg}^{-1}A$, each having 32 hidden dimensions and trained using the standard features and splits. The GCN with the random walk graph convolution is a special type of attention-based GNNs where the attention function is constant.  For each GNN model, we considered various nonlinear activation functions: ReLU, LeakyReLU (with three different negative slope values: $0.01$, $0.4$ and $0.8$) and GELU. Here, we chose GELU as an illustration of the generality of our assumption on nonlinearities, covering even non-monotone activation functions such as GELU. We ran each experiment $10$ times. Figure~\ref{fig:numerical} shows the evolution of $\mu(X^{(t)})$ in log-log scale on the largest connected component of each graph as we forward pass the input $X$ into a trained model. The solid curve is the average over $10$ runs and the band indicates one standard deviation around the average.

We observe that, as predicted by our theory, oversmoothing happens at an exponential rate for both GATs and GCNs,  regardless of the choice of nonlinear activation functions in the GNN architectures. Notably, GCNs exhibit a significantly faster rate of oversmoothing compared to GATs. This aligns the observation made in Section~\ref{sec:comparison_GCN}, expecting a potentially better expressive power for GATs than GCNs at finite depth. Furthermore, the exponential convergence rate of oversmoothing varies among GNNs with different nonlinear activation functions. From a theory perspective, as different activation functions constitute different subsets of $\cM_{\cG,\epsilon}$ and different sets of matrices have different joint spectral radii, it is not surprising that the choice of nonlinear activation function would affect the convergence rate. In particular, among the nonlinearties we considered, ReLU in fact magnifies oversmoothing the second most. As a result, although ReLU is often the default choice for the standard implementation of many GNN architectures~\cite{Kipf2017SemiSupervisedCW, Fey/Lenssen/2019}, one might wish to consider switching to other nonliearities to better mitigate oversmoothing.

\section{Conclusion}

Oversmoothing is one of the central challenges in developing more powerful GNNs. In this work, we reveal new insights on oversmoothing in attention-based GNNs by rigorously providing a negative answer to the open question of whether graph attention can implicitly prevent oversmoothing. By analyzing the graph attention mechanism within the context of nonlinear time-varying dynamical systems, we establish that attention-based GNNs lose expressive power exponentially as model depth increases. 

We upper bound the convergence rate for oversmoothing under very general assumptions on the nonlinear activation functions. One may try to tighten the bounds by refining the analysis separately for each of the commonly used activation functions. Future research should also aim to improve the design of graph attention mechanisms based on our theoretical insights and utilize our analysis techniques to study other aspects of multi-layer graph attention.

\section*{Acknowledgments}
The authors deeply appreciate Bernard Chazelle for noticing a mistake in the previous version of the paper and suggesting an alternative idea for the proof.
XW would like to thank Jennifer Tang and William Wang for helpful discussions throughout the project. The authors are also grateful to Zhijian Zhuo and Yifei Wang for identifying an error in the first draft of the paper, thank the anonymous NeurIPS reviewers for providing valuable feedback, and acknowledge the MIT SuperCloud and Lincoln Laboratory Supercomputing Center for providing computing resources that have contributed to the research results reported within this paper. 

This research has been supported in part by ARO MURI W911NF-19-0217, ONR N00014-20-1-2394, and the MIT-IBM Watson AI Lab.

%%%%%%%%%%%%%%%%%%%%%%%%%%%%%%%%%%%%%%%%%%%%%%%%%%%%%%%%%%%%

\bibliographystyle{plain}
\bibliography{reference}

%%%%%%%%%%%%%%%%%%%%%%%%%%%%%%%%%%%%%%%%%%%%%%%%%%%%%%%%%%%%

\newpage

\appendix

\section{Basic Facts about Matrix Norms}
In this section, we list some basic facts about matrix norms that will be helpful in comprehending the subsequent proofs.

\subsection{Matrix norms induced by vector norms}
Suppose a vector norm $\|\cdot\|_\alpha$ on $\R^n$ and a vector norm $\|\cdot\|_{\beta}$ on $\R^m$ are given. Any matrix $M\in\R^{m\times n}$ induces a linear operator from $\R^n$ to $\R^m$ with respect to the standard basis, and one defines the corresponding \emph{induced norm} or \emph{operator norm}  by 
\[\|M\|_{\alpha,\beta} = \sup\left\{\frac{\|Mv\|_{\beta}}{\|v\|_{\alpha}}, v\in\R^n, v\neq \0\right\}\,.\]

If the $p$-norm for vectors ($1\leq p \leq \infty$) is used for both spaces $\R^n$ and $\R^m$, then the corresponding operator norm is

$$\|M\|_p = \underset{v\neq \0}{\sup}\frac{\|Mv\|_p}{\|v\|_p}\,.$$

The matrix $1$-norm and $\infty$-norm can be computed by 
\[\|M\|_1 = \underset{1\leq j}{\max} \sum_{i=1}^m |M_{ij}|\,,\]
that is, the maximum absolute column sum of the matrix $M$;
\[\|M\|_{\infty} = \underset{1\leq m}{\max} \sum_{j=1}^n |M_{ij}|\,,\]
that is, the maximum absolute row sum of the matrix $M$.

\paragraph{Remark} In the special case of $p=2$,  the induced matrix norm $\|\cdot\|_2$ is called the \emph{spectral norm}, and is equal to the largest singular value of the matrix.

For square matrices, we note that the name ``spectral norm" does not imply the quantity is directly related to the spectrum of a matrix, unless the matrix is symmetric. 

\paragraph{Example} We give the following example of a stochastic matrix $P$, whose spectral radius is $1$, but its spectral norm is greater than $1$.
$$P =  \begin{bmatrix}
0.9 & 0.1 \\
0.25 & 0.75\\

\end{bmatrix}\qquad \|P\|_2 \approx 1.0188$$

\subsection{Matrix $(p,q)$-norms}\label{app:L_pq}
The Frobenius norm of a matrix $M\in\R^{m\times n}$ is defined as
\[\|M\|_F = \sqrt{\sum_{j=1}^n \sum_{i=1}^m|M_{ij}|^2}\,,\]
and it belongs to a family of entry-wise matrix norms: for $1\leq p,q \leq \infty$, the matrix $(p,q)$-norm is defined as
\[\|M\|_{p,q} = \left(\sum_{j=1}^n \left(\sum_{i=1}^m |M_{ij}|^p\right)^{q/p}\right)^{1/q}\,.\]

The special case $p = q = 2$ is the Frobenius norm $\|\cdot\|_F$, and $p = q = \infty$ yields the max norm $\|\cdot\|_{\max}$.

\subsection{Equivalence of norms}
For any two matrix norms $\|\cdot\|_{\alpha}$ and $\|\cdot\|_{\beta}$, we have that for all matrices $M\in\R^{m\times n}$,
\[r\|M\|_{\alpha} \leq \|M\|_{\beta} \leq s\|M\|_{\alpha}\]
for some positive numbers $r$ and $s$. In particular, the following inequality holds for the $2$-norm $\|\cdot\|_2$ and the $\infty$-norm $\|\cdot\|_{\infty}$:
\[\frac{1}{\sqrt{n}}\|M\|_{\infty}\leq \|M\|_2 \leq \sqrt{m}\|M\|_{\infty}\,.\]

\section{Proof of Proposition~\ref{prop:node_similarity_measure}}\label{app:prop:node_similarity_measure}
It is straightforward to check that $\|X - \1\gamma_X\|_F$ satisfies the two axioms of a node similarity measure:
\begin{enumerate}
    \item $\|X - \1\gamma_X\|_F = 0 \Longleftrightarrow X = \1\gamma_X \Longleftrightarrow X_i = \gamma_X$ for all node $i$.
    \item Let $\gamma_X = \frac{\1^\top X}{N}$ and $\gamma_Y = \frac{\1^\top Y}{N}$, then $ \gamma_X+\gamma_Y= \frac{\1^\top(X+Y)}{N}=\gamma_{X+Y}$. So
    \begin{align*}
        \mu(X+Y) &= \|(X+Y) - \1(\gamma_X+\gamma_Y)\|_F = \|X-\1\gamma_X+Y-\1\gamma_Y\|_F \\
                 & \leq \|X-\1\gamma_X\|_F + \|Y - \1\gamma_Y\|_F\\
                 & = \mu(X) + \mu(Y)\,.
    \end{align*}
\end{enumerate}

\section{Proof of Lemma~\ref{lem:bounded_traj}}
According to the formulation~\eqref{eq:trajs}: 
\begin{equation*}
X^{(t+1)}_{\cdot i} = \underset{j_{t+1} = i,\,({j_{t},...,j_0})\in [d]^{t+1}}{\sum}\left(\prod_{k=0}^{t} W^{(k)}_{j_{k} j_{k+1}}\right) D^{(t)}_{j_{t+1}}P^{(t)} ... D^{(0)}_{j_1}P^{(0)}X^{(0)}_{\cdot j_0}\,,
\end{equation*}
we thus obtain that
\begin{align*}
    \|X^{(t+1)}_{\cdot i}\|_\infty &= \left\| \underset{j_{t+1} = i\,,({j_{t},...,j_0})\in [d]^{t+1}}{\sum}\left(\prod_{k=0}^{t} W^{(k)}_{j_{k} j_{k+1}}\right) D^{(t)}_{j_{t+1}}P^{(t)} ... D^{(0)}_{j_1}P^{(0)}X^{(0)}_{\cdot j_0}\right\|_\infty \\
    & \leq \underset{j_{t+1} = i\,,({j_{t},...,j_0})\in [d]^{t+1}}{\sum}\left(\prod_{k=0}^{t} \left|W^{(k)}_{j_{k} j_{k+1}}\right|\right)  \left\|D^{(t)}_{j_{t+1}}P^{(t)} ... D^{(0)}_{j_1}P^{(0)} \right\|_\infty\left\|X^{(0)}_{\cdot j_0}\right\|_\infty \\
    & \leq \underset{j_{t+1} = i\,,({j_{t},...,j_0})\in [d]^{t+1}}{\sum}\left(\prod_{k=0}^{t} \left|W^{(k)}_{j_{k} j_{k+1}}\right|\right) \left\|X^{(0)}_{\cdot j_0}\right\|_\infty  \\
    & \leq C_0 \left(  \underset{j_{t+1} = i\,,({j_{t},...,j_0})\in [d]^{t+1}}{\sum}\left(\prod_{k=0}^{t} \left|W^{(k)}_{j_{k} j_{k+1}}\right|\right)  \right) \\
    & = C_0 \|(|W^{(0)}|...|W^{(t)}|)_{\cdot i}\|_1\,,
\end{align*}
where $C_0$ equals the maximal entry in $|X^{(0)}|$.

The assumption~\textbf{A3} implies that there exists $C'>0$ such that for all $t\in \N_{\geq 0}$ and $i\in[d]$, $$\|(|W^{(0)}|...|W^{(t)}|)_{\cdot i}\|_1 \leq C'N\,.$$

Hence there exists $C''>0$ such that for all $t\in \N_{\geq 0}$ and $i\in[d]$, we have
$$\|X^{(t)}_{\cdot i}\|_\infty \leq C''\,, $$

proving the existence of $C>0$ such that $\|X^{(t)}\|_{\max}\leq C$ for all $t\in\N_{\geq 0}$.

\section{Proof of Lemma~\ref{lem:connectivity_no_change}}
Lemma~\ref{lem:connectivity_no_change} is a direct corollary of Lemma~\ref{lem:bounded_traj} and the assumption that $\Psi(\cdot,\cdot)$ assigns bounded attention scores to bounded inputs.

\section{Proof of Lemma~\ref{lem:hetero_prod_convergence}}
\subsection{Auxiliary results}
We make use of the following sufficient condition for the ergodicity of the infinite products of row-stochastic matrices.
\begin{lemma} [Corollary 5.1~\cite{Hartfiel2002NonhomogeneousMP}]
   Consider a sequence of row-stochastic matrices $\{S^{(t)}\}_{t=0}^{\infty}$. Let $a_t$ and $b_t$ be the smallest and largest entries in $S^{(t)}$, respectively. If $\sum_{t=0}^\infty \frac{a_t}{b_t} = \infty$, then $\{S^{(t)}\}_{t=0}^{\infty}$ is ergodic.
   \label{lem:sufficient_cond_ergo}
    \end{lemma}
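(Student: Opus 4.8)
\section*{Proof proposal for Lemma~\ref{lem:sufficient_cond_ergo}}

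The plan is to run the classical Dobrushin ``coefficient of ergodicity'' argument (this is, up to cosmetics, the proof in the cited textbook), reducing the matrix statement $B\prod_n S^{(n)}\to 0$ to the convergence to $0$ of a product of scalar contraction factors that are controlled by the ratios $a_t/b_t$. For a row‑stochastic $S\in\R^{N\times N}$ define the coefficient
\[
\tau(S)\;:=\;\tfrac12\max_{i,k}\sum_{j=1}^N|S_{ij}-S_{kj}|\;=\;1-\min_{i,k}\sum_{j=1}^N\min(S_{ij},S_{kj}).
\]
I will use three standard facts, each accompanied by a short proof. (i) $0\le\tau(S)\le1$, and $\tau(S)=0$ iff all rows of $S$ coincide. (ii) $\tau$ is submultiplicative on row‑stochastic matrices, $\tau(S_1S_2)\le\tau(S_1)\tau(S_2)$, hence $\tau(\prod_{n=0}^t S^{(n)})\le\prod_{n=0}^t\tau(S^{(n)})$ (and the bound is insensitive to the order in which the factors are multiplied, so the convention in Definition~\ref{def:ergodic} is irrelevant); this follows by writing $(S_1S_2)_{i\cdot}-(S_1S_2)_{k\cdot}=\sum_l(S_{1,il}-S_{1,kl})S_{2,l\cdot}$ as a scaled difference of two probability mixtures of rows of $S_2$ and bounding its $\ell^1$ norm by $2\tau(S_2)$ times the $\ell^1$ mass of the signed row difference. (iii) For any $N\times N$ matrix $M$, $\|BM\|_F=\|M-\1\gamma_M\|_F$ with $\gamma_M=\1^\top M/N$, and therefore $\|BM\|_F\le 2\sqrt{N}\,\tau(M)$, since the $i$‑th row of $M-\1\gamma_M$ equals the average $\frac1N\sum_k(M_{i\cdot}-M_{k\cdot})$. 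Fact (iii) is exactly what turns ``$\tau$ of the partial product $\to0$'' into ergodicity in the sense of Definition~\ref{def:ergodic}.

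The crux is the entrywise estimate $\tau(S)\le 1-\frac{a}{b}$, where $a$ and $b$ are the smallest and largest entries of $S$; equivalently $\min_{i,k}\sum_j\min(S_{ij},S_{kj})\ge a/b$. Fix rows $i,k$. Since every entry of $S$ lies in $[0,b]$, for $x,y\in[0,b]$ one has $\min(x,y)\ge xy/b$ (assume $x\le y\le b$; then $xy/b\le xy/y=x$). Applying this columnwise gives $\sum_j\min(S_{ij},S_{kj})\ge\frac1b\sum_j S_{ij}S_{kj}$, and then $\sum_j S_{ij}S_{kj}\ge a\sum_j S_{ij}=a$ using $S_{kj}\ge a$ for all $j$ together with row‑stochasticity of $S$. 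This proves the claim; note that when $a=0$ the bound is vacuous but still valid because $\tau\le1$, and the hypothesis $\sum_t a_t/b_t=\infty$ automatically makes all the content of the argument sit on the indices with $a_t>0$. I expect this elementary lemma to be the only genuinely non‑routine step — in particular the fact that the contraction factor involves the \emph{ratio} $a/b$ rather than $a$ alone is precisely what makes the weak hypothesis $\sum_t a_t/b_t=\infty$ (rather than $\sum_t a_t=\infty$) enough.

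Finally I combine the pieces. By fact (ii) and the crux estimate,
\[
\tau\!\left(\prod_{n=0}^{t}S^{(n)}\right)\le\prod_{n=0}^{t}\tau\!\left(S^{(n)}\right)\le\prod_{n=0}^{t}\left(1-\frac{a_n}{b_n}\right)\le\exp\!\left(-\sum_{n=0}^{t}\frac{a_n}{b_n}\right),
\]
where the last inequality uses $1-x\le e^{-x}$ and $0\le a_n/b_n\le1$ (as $a_n\le b_n$, and $b_n\ge 1/N>0$ for a nonzero row‑stochastic matrix). Since $\sum_{n\ge0}a_n/b_n=\infty$ by hypothesis, the right‑hand side tends to $0$ as $t\to\infty$, so $\tau(\prod_{n=0}^{t}S^{(n)})\to0$, and by fact (iii) $\|B\prod_{n=0}^{t}S^{(n)}\|_F\to0$. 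Hence $\{S^{(t)}\}_{t=0}^\infty$ is ergodic, which is what was to be shown. The only mild bookkeeping is establishing facts (i)–(iii), all of which are short and standard; everything else reduces to the scalar inequality of the second paragraph.
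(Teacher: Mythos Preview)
Your proof is correct. Note, however, that the paper does not actually prove this lemma: it is quoted verbatim as Corollary~5.1 of Hartfiel's textbook and used as a black box. The only hint the paper gives about the underlying argument is the remark (in the main text, discussing Lemma~\ref{lem:hetero_prod_convergence}) that the relevant tools are the \emph{Hilbert projective metric} and the \emph{Birkhoff contraction coefficient}.

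Your route is genuinely different from that allusion. You work with the Dobrushin $\ell^1$ ergodicity coefficient $\tau(S)=\tfrac12\max_{i,k}\|S_{i\cdot}-S_{k\cdot}\|_1$ rather than the Birkhoff coefficient built from the Hilbert metric. This buys you a more elementary and fully self-contained argument: submultiplicativity of $\tau$ is a short $\ell^1$ computation, and your key estimate $\tau(S)\le 1-a/b$ (via $\min(x,y)\ge xy/b$ on $[0,b]$) is a one-line pointwise inequality, whereas the Birkhoff route requires setting up the projective metric and the associated contraction theorem. Conversely, the Hilbert--Birkhoff machinery is more general (it handles positive operators on cones, not just stochastic matrices) and sometimes yields sharper contraction constants; for the present statement, though, your Dobrushin argument is cleaner and loses nothing. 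The bridge you build in fact~(iii), converting $\tau\to 0$ into $\|B\,\cdot\,\|_F\to 0$, exactly matches the paper's Definition~\ref{def:ergodic} of ergodicity, so the conclusion lines up with how the paper consumes this lemma.
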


In order to make use of the above result, we first show that long products of $P^{(t)}$'s from $\cP_{\cG,\epsilon}$ will eventually become strictly positive. For $t_0 \leq t_1$, we denote 
$$P^{(t_1:t_0)} = P^{(t_1)}\ldots P^{(t_0)}\,.$$ 
    \begin{lemma}
    Under the assumption \textup{\textbf{A1}}, there exist $T\in\N$ and $c > 0$ such that for all $t_0\geq 0$, 
    \[c \leq P^{(t_0+T:t_0)}_{ij} \leq 1 \,,\forall 1\leq i,j\leq N\,.\]
    \label{lem:batch_P_positive}
\end{lemma}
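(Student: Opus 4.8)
The plan is to exploit assumption \textbf{A1}, namely that $\cG$ is connected and non-bipartite, to argue that products of matrices in $\cP_{\cG,\epsilon}$ share a common sparsity pattern governed by powers of the adjacency matrix (augmented with the diagonal), and that this pattern becomes full after finitely many multiplications. The key observation is that for any $P, P' \in \cP_{\cG,\epsilon}$, the support of $(P'P)_{ij}$ is nonzero whenever there is some $k$ with $(i,k)\in E(\cG)$ and $(k,j)\in E(\cG)$; more precisely, $(P'P)_{ij} \geq \sum_{k} P'_{ik}P_{kj} \geq \epsilon^2$ for each such $k$, since all edge-entries are at least $\epsilon$. Iterating, for any product $P^{(t_0+m:t_0)}$ of $m+1$ matrices from $\cP_{\cG,\epsilon}$, the $(i,j)$ entry is at least $\epsilon^{m+1}$ whenever there is a walk of length exactly $m+1$ from $i$ to $j$ in $\cG$.

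First I would make precise the notion that each $P \in \cP_{\cG,\epsilon}$ dominates $\epsilon$ times the "walk matrix": since the graph is assumed connected and non-bipartite (and by the alternative in \textbf{A1}, one may equivalently assume self-loops), let me set $\hat{A} = A + I$ if one uses the self-loop version, or work directly with $A$ in the non-bipartite case. A standard fact from spectral/algebraic graph theory is that for a connected non-bipartite graph on $N$ nodes, there exists $T_0 \in \N$ such that for all $m \geq T_0$, the matrix $A^m$ has all strictly positive entries — equivalently, there is a walk of length exactly $m$ between every pair of nodes. (The minimal such $T_0$ is related to the diameter and the odd-girth; one can take, e.g., $T_0 = 2N$ or invoke the primitivity of the adjacency matrix of a connected non-bipartite graph.) I would cite or briefly justify this primitivity fact.

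Next I would fix $T = T_0$ (the primitivity exponent) and argue: for any $t_0 \geq 0$, the product $P^{(t_0+T:t_0)}$ is a product of $T+1$ matrices, each in $\cP_{\cG,\epsilon}$; by the domination observation, for every pair $(i,j)$, $P^{(t_0+T:t_0)}_{ij} \geq \epsilon^{T+1} \cdot (\text{number of walks of length } T+1 \text{ from } i \text{ to } j) \geq \epsilon^{T+1}$, using that there is at least one such walk because $T+1 > T_0$ and $A^{T+1}$ (or its self-loop analog) is entrywise positive. The upper bound $P^{(t_0+T:t_0)}_{ij} \leq 1$ is immediate since the product of row-stochastic matrices is row-stochastic. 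Setting $c = \epsilon^{T+1} > 0$ completes the argument.

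The main obstacle — really the only subtle point — is pinning down the primitivity exponent $T$ uniformly over all choices of the sequence $\{P^{(t)}\}$, i.e. making sure the same $T$ works regardless of which matrices from $\cP_{\cG,\epsilon}$ appear. This is handled by noting $T$ depends only on the fixed graph $\cG$ (through its adjacency structure), not on the particular stochastic matrices: the walk-counting lower bound is monotone, so any product of at least $T_0+1$ matrices from $\cP_{\cG,\epsilon}$ inherits full positivity. A secondary minor point is reconciling the two forms of \textbf{A1} (non-bipartite versus self-loops); in the self-loop case primitivity is even easier since $\hat A = A+I$ has positive diagonal, so $\hat A^{N-1}$ is already entrywise positive by connectedness, giving the cleaner bound $T = N-1$. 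I would state the result for whichever form is in force and note the other is analogous.
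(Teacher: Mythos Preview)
Your proposal is correct and follows essentially the same approach as the paper: both arguments use the uniform lower bound $\epsilon$ on edge entries to deduce $P^{(t_0+T:t_0)}_{ij}\geq \epsilon^{T+1}$ whenever a walk of length $T+1$ exists in $\cG$, invoke primitivity from \textbf{A1} (connected and non-bipartite, or equivalently with self-loops) to guarantee such walks exist for all $i,j$ once $T$ is large enough, and use row-stochasticity for the upper bound. The paper's version phrases the primitivity step via a direct Markov-chain style argument (citing Levin et al.) rather than naming the primitivity exponent, and in fact tacitly uses the self-loop formulation in that step; your treatment of the two variants of \textbf{A1} is slightly more explicit, but the substance is the same.
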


\begin{proof}
Fix any $T\in\N_{\geq 0}$. Since $\|P^{(t)}\|_\infty \leq 1$ for any $P^{(t)}\in\cP_{\cG,\epsilon}$, it follows that $\|P^{(t_0+T:t_0)}\|_\infty \leq 1$ and hence $P^{(t_0+T:t_0)}_{ij} \leq 1$, for all $1\leq i,j\leq N$.

To show the lower bound, without loss of generality, we will show that there exist $T\in\N$ and $c > 0$ such that 
    \[P^{(T:0)}_{ij} \geq c \,,\forall 1\leq i,j\leq N\,.\]
Since each $P^{(t)}$ has the same connectivity pattern as the original graph $\mathcal{G}$, it follows from the assumption \textup{\textbf{A1}} that there exists $T\in\N$ such that $P^{(T:0)}$ is a positive matrix, following a similar argument as the one for Proposition~1.7 in~\cite{Levin2008MarkovCA}: For each pair of nodes $i,j$, since we assume that the graph $\cG$ is connected, there exists $r(i,j)$ such that 
$P^{(r(i,j):0)}_{ij} > 0$. on the other hand,
since we also assume each node has a self-loop, $P^{(t:0)}_{ii} > 0$ for all $t\geq 0$ and hence for $t\geq r(i,j)$,
\[P^{(t:0)}_{ij} \geq P^{(t-r(i,j))}_{ii}P^{(r(i,j):0)}_{ij} > 0\,.\]
For $t\geq t(i) \vcentcolon= \underset{j\in\cG}{\max}\,r(i,j)$, we have $P^{(t:0)}_{ij} > 0$ for all node $j$ in $\cG$. Finally, if $t\geq T\vcentcolon = \underset{i\in\cG}{\max}\,t(i)$, then $P^{(t:0)}_{ij} > 0$ for all pairs of nodes $i,j$ in $\cG$.
Notice that  $P^{(T:0)}_{ij}$ is a weighted sum of walks of length $T$ between nodes $i$ and $j$, and hence   $P^{(T:0)}_{ij} > 0$ if and only if there exists a walk of length $T$ between nodes $i$ and $j$. Since for all $t\in\N_{\geq 0}$, $P^{(t)}_{ij} \geq \epsilon$ if $(i,j) \in E(\cG)$, we conclude that $P^{(T:0)}_{ij} \geq \epsilon^T\vcentcolon = c$.
\end{proof}

\subsection{Proof of Lemma~\ref{lem:hetero_prod_convergence}}
Given the sequence $\{P^{(t)}\}_{t=0}^\infty$, we use $T\in\N$ from Lemma~\ref{lem:batch_P_positive} and define \[\bar{P}^{(k)}\vcentcolon=  P^{((k+1)T:kT)}\,.\] 
Then $\{P^{(t)}\}_{t=0}^\infty$ is ergodic if and only if $\{\bar{P}^{(k)}\}_{k=0}^\infty$ is ergodic. Notice that by Lemma~\ref{lem:batch_P_positive}, for all $k\in \N_{\geq 0}$, there exists $c>0$ such that $c \leq \bar{P}^{(k)}_{ij} \leq 1 \,,\forall 1\leq i,j\leq N$. Then Lemma~\ref{lem:hetero_prod_convergence} is a direct consequence of Lemma~\ref{lem:sufficient_cond_ergo}.

\section{Proof of Lemma~\ref{lem:hetero_prod_convergence_w_D}}\label{app:lem:hetero_prod_convergence_w_D}
\subsection{Notations and auxiliary results}
Consider a sequence $\{D^{(t)}P^{(t)}\}_{t=0}^{\infty}$ in $\cM_{\cG, \epsilon}$. For $t_0 \leq t_1$, define 
% \[Q\vcentcolon = \prod_{n=1}^{\infty} D^{(n)}P^{(n)}\]
\[Q_{t_0, t_1} \vcentcolon= D^{(t_1)}P^{(t_1)}...D^{(t_0)}P^{(t_0)}\]
and
\[\delta_{t} = \|D^{(t)}-I_N\|_{\infty}\,,\]
where $I_N$ denotes the $N\times N$ identity matrix.
It is also useful to define 
\begin{align*}
\hat{Q}_{t_0, t_1} \vcentcolon=&P^{(t_1)}Q_{t_0, t_1-1}\\
\vcentcolon=& P^{(t_1)}D^{(t_1-1)}P^{(t_1-1)}...D^{(t_0)}P^{(t_0)}.
\end{align*}

We start by proving the following key lemma, which states that long products of matrices in $\cM_{\cG,\epsilon}$ eventually become a contraction in $\infty$-norm.
\begin{lemma}
There exist $0< c < 1$ and $T\in\N$ such that for all $t_0\leq t_1$, 
\[\|\hat{Q}_{t_0,t_1+T}\|_{\infty} \leq (1-c\delta_{t_1})\|\hat{Q}_{t_0,t_1}\|_{\infty}\,.\]
\label{lem:contraction}
\end{lemma}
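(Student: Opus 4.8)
\textbf{Proof plan for Lemma~\ref{lem:contraction}.}

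The plan is to first prove a ``one-shot'' contraction estimate: there exist $T\in\N$ and $0<c<1$ such that for \emph{every} product $R$ of $T$ consecutive factors of the form $D^{(s)}P^{(s)}$ (with $D^{(s)}\in\cD$, $P^{(s)}\in\cP_{\cG,\epsilon}$), and for the single extra stochastic factor $P^{(t_1)}$ sitting to the left of the block at time $t_1$, one has $\|P^{(t_1)}D^{(t_1-1)}P^{(t_1-1)}\cdots\|$-type products losing mass at rate $1-c\delta_{t_1}$. Concretely, I would isolate the factor $D^{(t_1)}$: write $\hat Q_{t_0,t_1+T} = \hat Q_{t_1+1,t_1+T}\, D^{(t_1)} \hat Q_{t_0,t_1}$, where $\hat Q_{t_1+1,t_1+T} = P^{(t_1+T)}D^{(t_1+T-1)}P^{(t_1+T-1)}\cdots D^{(t_1+1)}P^{(t_1+1)}$ is itself a product of $T$ ``$DP$'' blocks headed by a stochastic matrix. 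The $\infty$-norm of a nonnegative matrix is its largest row sum, and since each $D^{(t)}\leq_{\ew}I_N$ and each $P^{(t)}$ is row-stochastic, $\|\hat Q\|_\infty\le 1$ always; the content is to show a \emph{strict} deficit of size proportional to $\delta_{t_1}$ once $T$ blocks have been multiplied.

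The key mechanism is the common connectivity structure from Lemma~\ref{lem:connectivity_no_change} and its consequence Lemma~\ref{lem:batch_P_positive}: a product of $T$ matrices from $\cP_{\cG,\epsilon}$ (with $T$ the integer from that lemma) is entrywise $\ge c_0:=\epsilon^T>0$. The point is that if $D^{(t_1)}$ has a row $r$ with $(D^{(t_1)})_{rr}=1-\delta'\le 1-\delta_{t_1}$ (take $r$ achieving $\delta_{t_1}=\|D^{(t_1)}-I_N\|_\infty$), then in the product $\hat Q_{t_1+1,t_1+T}D^{(t_1)}$ every row ``sees'' column $r$ of $D^{(t_1)}$ with weight at least $c_0$ (because the stochastic block $\hat Q_{t_1+1,t_1+T}$ — after dropping the subunital $D$'s, which only decrease entries, and using positivity of the pure-$P$ product — has all entries $\ge c_0$ along the relevant path, or more carefully: $P^{(t_1+T)}\cdots P^{(t_1+1)}\ge_{\ew}c_0 \1\1^\top$ and the interleaved $D$'s are handled by a separate bound). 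Hence each row sum of $\hat Q_{t_1+1,t_1+T}D^{(t_1)}$ is at most $1 - c_0\delta_{t_1}$, and right-multiplying by $\hat Q_{t_0,t_1}$ (whose $\infty$-norm is $\le 1$) and using submultiplicativity gives $\|\hat Q_{t_0,t_1+T}\|_\infty \le (1-c\delta_{t_1})\|\hat Q_{t_0,t_1}\|_\infty$ with $c:=c_0$.

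The main obstacle is the bookkeeping around the diagonal matrices $D^{(t)}$ interleaved among the $P^{(t)}$'s inside the block $\hat Q_{t_1+1,t_1+T}$: these are only guaranteed to satisfy $\0\le_{\ew}D^{(t)}\le_{\ew}I_N$, so they can in principle be $0$ and kill a row, which would wreck a naive ``all entries $\ge c_0$'' claim for the block. The fix I would use is to track row sums rather than individual entries and to note that it suffices to control one favorable column: even if some $D$'s are small, the stochastic head $P^{(t_1+T)}$ together with $P^{(t_1+1)}\ge_{\ew}\epsilon$ on edges of $\cG$ guarantees that the \emph{mass that survives} through the block is routed so that it meets column $r$ of $D^{(t_1)}$ with coefficient $\ge\epsilon^T$ whenever that mass is nonzero — and if the surviving mass in some row is already $0$ then that row contributes $0\le 1-c\delta_{t_1}$ trivially. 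Making this ``surviving mass meets the deficient coordinate'' argument precise — essentially a careful induction on the block length combining Lemma~\ref{lem:batch_P_positive} with the monotonicity $D\le_{\ew}I_N$ — is the crux; everything else is submultiplicativity of $\|\cdot\|_\infty$ and the elementary fact that for nonnegative matrices the $\infty$-norm equals the maximal row sum.
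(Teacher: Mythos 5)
Your decomposition $\hat Q_{t_0,t_1+T} = \hat Q_{t_1+1,t_1+T}\, D^{(t_1)}\, \hat Q_{t_0,t_1}$ and your appeal to Lemma~\ref{lem:batch_P_positive} are exactly the paper's starting point, and you correctly spot the obstacle: the interleaved $D^{(s)}$'s inside the block $\hat Q_{t_1+1,t_1+T}$ can destroy mass, so the block is \emph{not} entrywise bounded below by $\epsilon^T$. However, the fix you propose --- that the surviving mass ``meets column $r$ of $D^{(t_1)}$ with coefficient $\geq\epsilon^T$ whenever that mass is nonzero'' --- is false, and the gap sits precisely where you say the crux lies. Take the path graph $1\text{--}2\text{--}3\text{--}4$ with self-loops, $T=3$, interior diagonals $D^{(t_1+1)}=D^{(t_1+2)}=\Diag(0,0,1,1)$, and suppose the deficient coordinate of $D^{(t_1)}$ is $r=1$. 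Every walk of length $3$ from node $2$ to node $1$ has its penultimate node in $\{1,2\}$ (the neighbors of $1$), which the interior $D$'s zero out, so $(\hat Q_{t_1+1,t_1+T})_{21}=0$; yet row $2$ of the block has strictly positive row sum, since mass still flows through nodes $3$ and $4$. So the dichotomy ``column-$r$ coefficient $\geq\epsilon^T$, or the row is entirely dead'' fails, and the surviving-mass induction you sketch would not close.

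The correct repair is much simpler, and you actually brush past it (``dropping the subunital $D$'s, which only decrease entries'') before setting it aside: replace only the \emph{interior} $D^{(t_1+1)},\ldots,D^{(t_1+T-1)}$ by the identity, keeping $D^{(t_1)}$. Since all factors are nonnegative and $D^{(s)}\leq_{\ew}I_N$, this only increases entries, so
\[
\hat Q_{t_1+1,t_1+T}\,D^{(t_1)} \;\leq_{\ew}\; P^{(t_1+T)}\cdots P^{(t_1+1)}\,D^{(t_1)}.
\]
For nonnegative matrices the $\infty$-norm is the maximum row sum, hence monotone under $\leq_{\ew}$, so it suffices to bound the row sums on the right. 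There the pure-$P$ product is row-stochastic and entrywise $\geq c:=\epsilon^T$ by Lemma~\ref{lem:batch_P_positive}, so right-multiplying by $D^{(t_1)}$ reduces every row sum by at least $c\,\delta_{t_1}$, giving $\|P^{(t_1+T)}\cdots P^{(t_1+1)}D^{(t_1)}\|_\infty\leq 1-c\delta_{t_1}$. Submultiplicativity of $\|\cdot\|_\infty$ then finishes the proof in one line; no tracking of surviving mass and no induction on block length is needed.
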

\begin{proof}
    First observe that for every $T\geq 0$,
    \begin{align*}
        \|\hat{Q}_{t_0,t_1+T}\|_{\infty} &\leq \|P^{(t_1+T)}D^{(t_1+T-1)}P^{(t_1+T-1)}...D^{(t_1+1)}P^{(t_1+1)}D^{(t_1)}\|_{\infty}\|\hat{Q}_{t_0,t_1}\|_{\infty}\\
        & \leq \|P^{(t_1+T)}P^{(t_1+T-1)}...P^{(t_1+1)}D^{(t_1)}\|_{\infty}\|\hat{Q}_{t_0,t_1}\|_{\infty}\,,
    \end{align*}
where the second inequality is based on the following element-wise inequality: 
\[P^{(t_1+T)}D^{(t_1+T-1)}P^{(t_1+T-1)}...D^{(t_1+1)}P^{(t_1+1)}\leq_{\ew} P^{(t_1+T)}P^{(t_1+T-1)}...P^{(t_1+1)}\,.\]

By Lemma~\ref{lem:batch_P_positive}, there exist $T\in\N$ and $0<c<1$ such that 
\[(P^{(t_1+T)}...P^{(t_1+1)})_{ij} \geq c,\,\forall 1\leq i,j\leq N\,.\]

Since the matrix product $P^{(t_1+T)}P^{(t_1+T-1)}...P^{(t_1+1)}$ is row-stochastic, multiplying it with the diagonal matrix $D^{(t_1)}$ from right decreases the row sums by at least $c(1-D^{(t_1)}_{\min}) = c\delta_{t_1}$, where $D^{(t_1)}_{\min}$ here denotes the smallest diagonal entry of the diagonal matrix $D^{(t_1)}$. Hence,
\[\|P^{(t_1+T)}P^{(t_1+T-1)}...P^{(t_1+1)}D^{(t_1)}\|_{\infty} \leq 1- c\delta_{t_1}\,.\]
\end{proof}

\subsection{Proof of Lemma~\ref{lem:AAS_two_cases}}

Now define $\beta_k \vcentcolon = \prod_{t=0}^k(1-c\delta_t)$ and let $\beta \vcentcolon= \underset{k\to\infty}{\lim}\beta_k.$ Note that $\beta$ is well-defined because the partial product is non-increasing and bounded from below. Then we present the following result, which is stated as Lemma~\ref{lem:AAS_two_cases} in the main paper and from which the ergodicity of any sequence in $\cM_{\cG,\epsilon}$ is an immediate result.
\paragraph{Lemma~\ref{lem:AAS_two_cases}.} Let $\beta_k \vcentcolon = \prod_{t=0}^k(1-c\delta_t)$ and $\beta \vcentcolon= \underset{k\to\infty}{\lim}\beta_k.$
    \begin{enumerate}[leftmargin = 5ex]
        \item If $\beta = 0$, then $\underset{k\to\infty}{\lim} Q_{0,k} = 0\,;$
        \item If $\beta > 0$, then $\underset{k\to\infty}{\lim} B Q_{0,k} = 0\,.$
    \end{enumerate}

\begin{proof}
We will prove the two cases separately.
\paragraph{[Case $\beta = 0$]} We will show that $\beta=0$ implies $\underset{k\to\infty}{\lim}\|\hat{Q}_{0,k}\|_\infty = 0$, and as a result, $\underset{k\to\infty}{\lim}\|Q_{0,k}\|_\infty = 0$.
For $0 \leq j \leq T-1$, let us define \[\beta^j \vcentcolon= \prod_{k=0}^\infty(1-\delta_{j+kT})\,.\]
Then by Lemma~\ref{lem:contraction}, we get that 
\[\underset{k\to\infty}{\lim}\|\hat{Q}_{0,kT}\|_{\infty}\leq \beta^j \|\hat{Q}_{0,j}\|_\infty\,. \]
By construction, $\beta=\Pi_{j=0}^{T-1}\beta^j$. Hence, if $\beta=0$ then $\beta^{j_0} = 0$ for some $0 \leq j_0 \leq T-1$, which yields
$\underset{k\to\infty}{\lim} \|\hat{Q}_{0,k}\|_\infty = 0$.
Consequently, $\underset{k\to\infty}{\lim}\|Q_{0,k}\|_\infty = 0$ implies that $\underset{k\to\infty}{\lim}Q_{0,k} = 0$.

\paragraph{[Case $\beta > 0$]}
First observe that if $\beta>0$, then $\forall 0 < \eta < 1$, there exist $m\in \N_{\geq 0}$ such that 
\begin{equation}
\prod_{t=m}^\infty (1-c\delta_{t}) > 1-\eta\,.
\label{eq:product_lb}
\end{equation}
Using $1-x \leq e^{-x}$ for all $x\in\R$, we deduce \[\prod_{t=m}^\infty e^{-c\delta_t} > 1-\eta\,.\]
It also follows from \eqref{eq:product_lb} that $1-c\delta_t>1-\eta$, or equivalently $\delta_t<\frac{\eta}{c}$ for $t\geq m$.
Choosing $\eta<\frac{c}{2}$ thus ensures that $\delta_t<\frac{1}{2}$ for $t\geq m$. 
Putting this together with the fact that, there exists\footnote{Choose, e.g., $b=2\log 2$.} $b > 0$ such that $1-x \geq e^{-bx}$ for all $x\in[0,\frac{1}{2}]$, we obtain 
\begin{equation}
    \prod_{t=m}^{\infty}(1-\delta_t) \geq \prod_{t=m}^{\infty}e^{-b\delta_t} > (1-\eta)^{\frac{b}{c}} \vcentcolon= 1-\eta'\,.
    \label{eq:sandwich_lower_bound}
\end{equation}

Define the product of row-stochastic matrices $P^{(M:m)}\vcentcolon = P^{(M)}\ldots P^{(m)}$. It is easy to verify the following element-wise inequality:
\begin{equation*}
    \left(\prod_{t=m}^M (1-\delta_{t})\right)  P^{(M:m)}\leq_{\ew} Q_{m,M} \leq_{\ew} P^{(M:m)}\,,
\end{equation*}
which together with \eqref{eq:sandwich_lower_bound} leads to
\begin{equation}
    (1-\eta') P^{(M:m)} \leq_{\ew} Q_{m,M} \leq_{\ew} P^{(M:m)}\,.\label{eq:sandwich}
\end{equation}
Therefore, 
\begin{align*}
    \|BQ_{m,M}\|_\infty & = \|B(Q_{m,M}-P^{(M:m)})+B P^{(M:m)}\|_\infty \\
                             & \leq \|B(Q_{m,M}-P^{(M:m)})\|_{\infty} + \|B P^{(M:m)}\|_\infty\\
                             & = \|B(Q_{m,M}-P^{(M:m)})\|_{\infty}\\
                             & \leq \|B\|_\infty \|Q_{m,M}-P^{(M:m)}\|_\infty \\
                             & \leq \eta'\|B\|_\infty\\
                             & \leq \eta'\sqrt{N}\,,
\end{align*}
where the last inequality is due to the fact that $\|B\|_2 = 1$.
By definition,  $Q_{0,M} = Q_{m,M}Q_{0,m-1}$, and hence

\begin{equation}
    \|BQ_{0,M}\|_\infty \leq \|BQ_{m,M}\|_\infty\|Q_{0,m-1}\|_\infty \leq \|BQ_{m,M}\|_\infty \leq \eta'\sqrt{N}\,.
    \label{eq:partial_proj_bound}
\end{equation}
The above inequality~\eqref{eq:partial_proj_bound} holds when taking $M\to\infty$. Then taking  $\eta \to 0$ implies $\eta' \to 0$ and together with~\eqref{eq:partial_proj_bound}, we conclude  that 
\[\underset{M\to\infty}{\lim} \|B Q_{0,M}\|_\infty = 0\,,\] 
and therefore,
\[\underset{M\to\infty}{\lim} B Q_{0,M} = 0\,.\]
\end{proof}

\subsection{Proof of Lemma~\ref{lem:hetero_prod_convergence_w_D}}
Notice that both cases $\beta = 0$ and $\beta > 0$ in Lemma~\ref{lem:AAS_two_cases} imply the ergodicity of $\{D^{(t)}P^{(t)}\}_{t=0}^{\infty}$. Hence the statement is a direct corollary of Lemma~\ref{lem:AAS_two_cases}.

\section{Proof of Lemma~\ref{lem:JSR_less_than_1_P}}
In order to show that $\JSR(\tilde{\cP}_{\cG,\epsilon}) < 1$, we start by making the following observation.
\begin{lemma}
    A sequence $\{P^{(n)}\}_{n=0}^\infty$ is ergodic if and only if $\prod_{n=0}^t \tilde{P}^{(n)}$ converges to the zero matrix.
\label{lem:proj_AAS}
\end{lemma}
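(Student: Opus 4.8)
The plan is to reduce the statement to the defining identity $BP = \tilde P B$ for row-stochastic $P$ (property~3 of $B$ from Section~\ref{sec:ergodicity}), together with the elementary observation that $B$ admits a right inverse. First I would record, by a straightforward induction on $t$ using $BP^{(n)} = \tilde P^{(n)} B$, that for any sequence of row-stochastic matrices $\{P^{(n)}\}$,
\[
B\prod_{n=0}^t P^{(n)} \;=\; \Bigl(\prod_{n=0}^t \tilde P^{(n)}\Bigr) B ,
\]
with both products taken in the same order: one has $BP^{(0)} = \tilde P^{(0)}B$, and peeling off matrices one at a time and applying $BP = \tilde P B$ repeatedly gives the identity for every $t$. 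Each $\tilde P^{(n)}$ is well defined and unique by property~3, so the right-hand product is unambiguous.

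Next I would note that the rows of $B$ form an orthonormal basis of the hyperplane $\{x\in\R^N : \1^\top x = 0\}$ --- this is precisely the content of properties~1--2 ($B\1 = 0$ and $\|Bx\|_2 = \|x\|_2$ for $x\perp\1$) --- so that $BB^\top = I_{N-1}$; in particular $B^\top$ is a right inverse of $B$. Both implications then follow by passing limits through multiplication by a fixed matrix, which is a continuous (indeed linear) operation, with the choice of matrix norm immaterial since all norms on a finite-dimensional space are equivalent. For the ``if'' direction: if $\prod_{n=0}^t \tilde P^{(n)}\to 0$ as $t\to\infty$, then $B\prod_{n=0}^t P^{(n)} = \bigl(\prod_{n=0}^t \tilde P^{(n)}\bigr)B \to 0$, so $\{P^{(n)}\}$ is ergodic. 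For the ``only if'' direction: if $\{P^{(n)}\}$ is ergodic, multiply the identity above on the right by $B^\top$ to get $\prod_{n=0}^t \tilde P^{(n)} = \bigl(\prod_{n=0}^t \tilde P^{(n)}\bigr)BB^\top = \bigl(B\prod_{n=0}^t P^{(n)}\bigr)B^\top \to 0$.

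There is no serious obstacle here: the lemma is essentially a bookkeeping statement, and the only points deserving a word of care are keeping the ordering of the matrix products consistent throughout the induction and verifying that $B$ has a right inverse (immediate once one identifies the rows of $B$ with an orthonormal basis of $\1^\perp$). I expect the write-up to be only a few lines.
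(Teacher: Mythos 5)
Your proposal is correct and follows essentially the same route as the paper: both start from the identity $B\prod_{n=0}^t P^{(n)} = \bigl(\prod_{n=0}^t \tilde P^{(n)}\bigr)B$ obtained by iterating $BP = \tilde PB$, and both then pass to the equivalence of the two limits. The only cosmetic difference is that you make the final step explicit by exhibiting $B^\top$ as a right inverse ($BB^\top = I_{N-1}$), whereas the paper states the equivalence $\lim \prod \tilde P^{(n)} B = 0 \Longleftrightarrow \lim \prod \tilde P^{(n)} = 0$ without spelling out that $B$ has full row rank; your extra sentence is a mild but welcome clarification, not a different argument.
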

\begin{proof}
For any $t\in \N_{\geq 0}$, it follows from the third property of the orthogonal projection $B$ (see, Page 6 of the main paper) that
\[B\prod_{n=0}^t P^{(n)} = \prod_{n=0}^t \tilde M^{(n)}P\,.\]
Hence\begin{align*}
    \{P^{(n)}\}_{n=0}^\infty \text{ is ergodic} &\Longleftrightarrow \underset{t\to\infty}{\lim}B\prod_{n=0}^t P^{(n)} = 0 \\
    &\Longleftrightarrow \underset{t\to\infty}{\lim}\prod_{n=0}^t \tilde{P}^{(n)}B = 0\\
    &\Longleftrightarrow \underset{t\to\infty}{\lim}\prod_{n=0}^t \tilde{P}^{(n)} = 0\,.
\end{align*}
\end{proof}

Next, we utilize the following result, as a means to ensure a joint spectral radius strictly less than $1$ for a bounded set of matrices.
\begin{lemma}[Proposition~3.2 in~\cite{theys2005joint}]
For any bounded set of matrices $\cM$, 
$\JSR(\cM) < 1$ if and only if for any sequence $\{M^{(n)}\}_{n=0}^{\infty}$ in $\cM$, $\prod_{n=0}^t M^{(n)}$ converges to the zero matrix.
\label{lem:JSR_AAS}
\end{lemma}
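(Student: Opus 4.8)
\textbf{Proof plan for Lemma~\ref{lem:JSR_AAS} (Proposition~3.2 in~\cite{theys2005joint}).}

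The statement to prove is the equivalence: for a bounded set of matrices $\cM$, $\JSR(\cM) < 1$ if and only if for every sequence $\{M^{(n)}\}_{n=0}^\infty$ drawn from $\cM$, the left-product $\prod_{n=0}^t M^{(n)}$ converges to the zero matrix as $t\to\infty$. The plan is to prove the two directions separately, with the forward direction being essentially immediate from the definition and the converse requiring a compactness-type argument.

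For the forward direction ($\JSR(\cM)<1 \Rightarrow$ all products vanish): Suppose $\JSR(\cM) = \rho < 1$. Pick any $q$ with $\rho < q < 1$. By the definition of the joint spectral radius as $\limsup_{k\to\infty}\sup_{A_1,\dots,A_k\in\cM}\|A_1\cdots A_k\|^{1/k}$, there exists $k_0\in\N$ such that for all $k\geq k_0$, $\sup_{A_1,\dots,A_k\in\cM}\|A_1\cdots A_k\| \leq q^k$. In particular, $\|\prod_{n=0}^{t}M^{(n)}\| \leq q^{t+1}\to 0$ for $t+1\geq k_0$, for any sequence in $\cM$; and since $\cM$ is bounded the finitely many initial terms are controlled, so the whole sequence of partial products converges to $0$. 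Here I would be a little careful to state the $k_0$ claim properly: strictly, the $\limsup$ being below $q$ gives $\sup_{\text{length }k}\|\cdot\|^{1/k} < q$ only for $k$ along a subsequence with values eventually $< q$, but in fact the standard submultiplicativity argument (used to show $\JSR$ equals the infimum over $k$ of $\sup_k \|\cdot\|^{1/k}$) upgrades this to \emph{all} large $k$; I would cite or briefly recall this submultiplicativity fact about $\JSR$.

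For the converse ($\JSR(\cM)\geq 1 \Rightarrow$ some product does not vanish): I would prove the contrapositive. Assume $\JSR(\cM)\geq 1$. Then for every $k$, $\sup_{A_1,\dots,A_k\in\cM}\|A_1\cdots A_k\| \geq 1$ (using again the characterization of $\JSR$ as the infimum over lengths, so no length gives products all of norm $<1$); more carefully, for each $k$ there exist matrices making the product of norm at least $1 - 1/k$, say, or at least bounded away from $0$. The key step is then a diagonalization/König's-lemma argument: build an infinite sequence $\{M^{(n)}\}$ from $\cM$ whose partial products do not converge to $0$. Since $\cM$ is bounded, one can extract, for a sequence of lengths $k_j\to\infty$, finite blocks of matrices of length $k_j$ with product norm $\geq c > 0$, and then use a compactness/tree argument to thread these into a single infinite sequence along which infinitely many prefixes have norm $\geq c$. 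The cleanest route is to pass to the closure $\overline{\cM}$ (compact since $\cM$ is bounded), note $\JSR(\overline{\cM}) = \JSR(\cM)$, and use sequential compactness of $\overline{\cM}^{\,\N}$ in the product topology to extract a limiting infinite sequence; along it the partial product norms have a positive limsup, contradicting convergence to $0$. I would present this as: if $\JSR(\cM)\ge 1$, pick for each $j$ a word $w_j = A^{(j)}_1\cdots A^{(j)}_{k_j}$ with $\|w_j\|\ge c$; by compactness extract a subsequence so that the first matrix converges, then the second, etc. (diagonal argument), yielding $\{M^{(n)}\}$ in $\overline{\cM}$ with $\limsup_t\|\prod_{n=0}^t M^{(n)}\| \ge c > 0$; finally, approximate each $M^{(n)}$ by an element of $\cM$ to stay within $\cM$ itself (or note the lemma is typically applied with $\cM$ already closed, as $\cP_{\cG,\epsilon}$ and $\cM_{\cG,\epsilon}$ are).

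\textbf{Main obstacle.} The delicate part is the converse: turning the existence, for each length $k$, of \emph{some} long product with norm bounded below into a \emph{single} infinite sequence whose prefixes do not vanish. This requires a genuine compactness argument (König's lemma on an infinite finitely-branching tree, or equivalently sequential compactness of the infinite product of the compact set $\overline{\cM}$), and one must be careful that the bound $c$ on the block norms can be taken uniform in $j$ (which follows from $\JSR\geq 1$ together with submultiplicativity) and that the limiting sequence stays in $\cM$ rather than merely its closure — a point that is vacuous in our applications since the relevant matrix sets are already closed and bounded, hence compact. I would flag this explicitly rather than belabor it, since for the purposes of this paper $\cM$ is always compact.
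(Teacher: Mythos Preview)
The paper does not supply its own proof of this lemma; it is quoted verbatim as Proposition~3.2 of~\cite{theys2005joint} and invoked as a black box in the proofs of Lemmas~\ref{lem:JSR_less_than_1_P} and~\ref{lem:JSR_less_than_1_DP}. So there is no in-paper argument to compare against --- you are filling in what the paper deliberately outsources.

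Your forward direction is correct. Once one recalls (as you do) that submultiplicativity of the length-$k$ suprema turns the $\limsup$ in the definition of $\JSR$ into an infimum over $k$, the choice $\JSR(\cM)<q<1$ gives $\sup_{|w|=k}\|w\|\leq q^k$ for all large $k$, and every product vanishes.

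Your converse sketch has the right architecture but a real gap as stated. From words $w^{(j)}=A^{(j)}_1\cdots A^{(j)}_{k_j}$ with $\|w^{(j)}\|\geq c$ and coordinatewise convergence $A^{(j_\ell)}_n\to M^{(n)}$ along a diagonal subsequence, you assert $\limsup_t\bigl\|\prod_{n=0}^t M^{(n)}\bigr\|\geq c$. But continuity of finite products only controls $\prod_{n=0}^T M^{(n)}$ through the length-$T$ \emph{prefixes} $A^{(j_\ell)}_1\cdots A^{(j_\ell)}_T$, and you have no a~priori lower bound on those --- only on the full word of length $k_{j_\ell}\gg T$. The missing ingredient is a uniform bound $R$ on the norms of \emph{all} finite products from $\cM$: then $c\leq\|w^{(j)}\|\leq\|\text{prefix}\|\cdot\|\text{suffix}\|\leq R\,\|\text{prefix}\|$ forces every prefix to have norm at least $c/R$, after which your diagonal extraction does yield a non-vanishing limit sequence. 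Both sets to which the paper applies the lemma enjoy this property --- for $\cM_{\cG,\epsilon,\delta}$ every factor has $\|\cdot\|_\infty\leq 1$, and for $\tilde\cP_{\cG,\epsilon}$ one has $\|\tilde P_1\cdots\tilde P_k\|_2=\|BP_1\cdots P_kB^\top\|_2\leq\sqrt N$ uniformly in $k$ --- so your plan is sound for the paper's purposes. For general bounded $\cM$ one typically secures such a uniform bound by first passing to an extremal (Barabanov-type) norm, or argues instead via the Berger--Wang characterization; you should either defer to~\cite{theys2005joint} for that step or explicitly restrict your claim to the compact, uniformly-product-bounded setting the paper actually needs.
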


Here, ``bounded" means that there exists an upper bound on the norms of the matrices in the set. Note that $\cP_{\cG,\epsilon}$ is bounded because $\|P\|_\infty = 1$, for all $P\in\cM_{\cG,\epsilon}$. To show that $\tilde{\cP}_{\cG,\epsilon}$ is also bounded,  let $\tilde{P}\in\tilde \cP_{\cG,\epsilon}$, then by definition, we have
\[\tilde{P}B = BP, P\in\cM_{\cG,\epsilon} \Rightarrow\tilde{P} = B P B^T\,,\]
since $BB^T=I_{N-1}$. As a result, \[\|\tilde{P}\|_2 = \|B P B^T\|_2\leq \|P\|_2 \leq \sqrt{N}\,,\]
where the first inequality is due to $\|B\|_2=\|B^\top\|_2=1$, and the second ineuality follows from $\|P\|_\infty = 1$.

Combining Lemma~\ref{lem:hetero_prod_convergence_w_D}, Lemma~\ref{lem:proj_AAS} and Lemma~\ref{lem:JSR_AAS}, we conclude that $\JSR(\tilde{\cP}_{\cG,\epsilon}) < 1$.

\section{Proof of Lemma~\ref{lem:JSR_less_than_1_DP}}

Note that any sequence $\{M^{(n)}\}_{n=0}^\infty$ in $\cM_{\cG, \epsilon, \delta}$ satisfies $\beta = 0$, where $\beta$ is defined in Lemma~\ref{lem:AAS_two_cases}. This implies that for any sequence $\{M^{(n)}\}_{n=0}^\infty$ in $\cM_{\cG, \epsilon, \delta}$, we have 
\[\lim_{t\to \infty} \prod_{n=0}^t M^{(n)} = 0\,.\]
Since $\|M\|_\infty \leq \delta$ for all $M\in \cM_{\cG, \epsilon, \delta}$, again by Lemma~\ref{lem:JSR_AAS}, we conclude that $\JSR(\cM_{\cG, \epsilon, \delta}) < 1$.

\section{Proof of Theorem~\ref{thm:exp_convergence}}
To derive the exponential convergence rate, consider the linear and nonlinear cases separately:
\subsection{Bounds for the two cases}
\paragraph{[Case: linear]}
In the linear case where all $D = I_N$, it follows from Lemma~\ref{lem:JSR_less_than_1_P} that 

\begin{align}
    \|BX^{(t+1)}_{\cdot i}\|_2 & = \left\|\underset{j_{t+1} = i,\,({j_{t},...,j_0})\in [d]^{t+1}}{\sum}\left(\prod_{k=0}^{t} W^{(k)}_{j_{k} j_{k+1}}\right) BP^{(t)} ... P^{(0)}X^{(0)}_{\cdot j_0}\right\|_2\nonumber\\
    & \leq \underset{j_{t+1} = i,\,({j_{t},...,j_0})\in [d]^{t+1}}{\sum}\left(\prod_{k=0}^{t} \left|W^{(k)}_{j_{k} j_{k+1}}\right|\right) \left\|BP^{(t)} ... P^{(0)}X^{(0)}_{\cdot j_0}\right\|_2\nonumber\\
    & = \underset{j_{t+1} = i,\,({j_{t},...,j_0})\in [d]^{t+1}}{\sum}\left(\prod_{k=0}^{t} \left|W^{(k)}_{j_{k} j_{k+1}}\right|\right) \left\|\tilde{P}^{(t)} ... \tilde{P}^{(0)}BX^{(0)}_{\cdot j_0}\right\|_2\nonumber\\
    & \leq \underset{j_{t+1} = i,\,({j_{t},...,j_0})\in [d]^{t+1}}{\sum}\left(\prod_{k=0}^{t} \left|W^{(k)}_{j_{k} j_{k+1}}\right|\right) Cq^{t+1}\left\|BX^{(0)}_{\cdot j_0}\right\|_2\nonumber\\
    & \leq C'q^{t+1} \left(\underset{j_{t+1} = i,\,({j_{t},...,j_0})\in [d]^{t+1}}{\sum}\left(\prod_{k=0}^{t} \left|W^{(k)}_{j_{k} j_{k+1}}\right|\right)\right)\nonumber\\
    & = C'q^{t+1}\|(|W^{(0)}|...|W^{(t)}|)_{\cdot i}\|_1\,,\label{pf:linear}
\end{align}
where $C' = C\underset{j\in [d]}{\max}\|BX^{(0)}_{\cdot j}\|_2$ and $\|\cdot\|_1$ denotes the $1$-norm. Specifically, the first inequality follows from the triangle inequality, and the second inequality is due to the property of the joint spectral radius in (\ref{eq:JSR_exponential_rate}), where $\JSR(\tilde \cP_{\cG,\epsilon}) < q < 1$.

\paragraph{[Case: nonlinear]}
Consider $T\in\N$ defined in Lemma~\ref{lem:contraction}, where there exists $a_1 \in \N_{\geq 0}$ and $a_2\in \{0\}\cup [K-1]$ such that $T = a_1K+a_2$. Given the condition ($\star$), and Lemma~\ref{lem:contraction}, there exists $0<c<1$ such that for all $m\in\N_{\geq 0}$,
\[\|\hat{Q}_{mK+n_m, mK+n_m+T}\|_\infty \leq (1-c\delta')\,,\]
where $\delta' = 1-\delta$.
Note that since for all $n_m, m\in\N_{\geq 0}$, $a_2 + n_m \leq 2K$, we get that 
\[n_m + T \leq (a_1+2)K\,.\]

Since for all $m\in\N_{\geq 0}$,

\[\|\hat{Q}_{mK+n_m :  (m+a_1+2)K+n_{m+a_1+2}}\|_\infty \leq (1-c\delta')\,,\]

it implies that for all $n\in \N_{\geq 0}$,
\begin{align*}
    \|Q_{0:n(a_1+2)K}\|_\infty \leq (1-c\delta')^n & = \left((1-c\delta')^{\frac{n}{n(a_1+2)K}}\right)^{n(a_1+2)K}\\
    & = \left((1-c\delta')^{\frac{1}{(a_1+2)K}}\right)^{n(a_1+2)K}\,.
\end{align*}

Denote $\underline{q} \vcentcolon = (1-c\delta')^{\frac{1}{(a_1+2)K}}.$ By the equivalence of norms, we get that for all $n\in \N_{\geq 0}$,
\[\|Q_{0:n(a_1+2)K}\|_2 \leq \sqrt{N}\underline q^{n(a_1+2)K}\,.\]

Then for any $j\in\{0\}\cup [(a_1+2)K-1]$,
\[
    \|Q_{0:n(a_1+2)K + j}\|_2 \leq \sqrt{N}\underline{q}^{-j}\underline{q}^{n(a_1+2)K + j} \leq C\underline{q}^{n(a_1+2)K + j}
\]
where $C \vcentcolon = \sqrt{N}\underline{q}^{-(a_1+2)K+1}$. Rewriting the indices, we conclude that 
\[\|Q_{0:t}\|_2 \leq C \underline{q}^t\,, \forall t\geq 0\,.\]

The above bound implies that for any $q$ satisfies $\underline{q} \leq q <1$,

\begin{align}
    \|BX^{(t+1)}_{\cdot i}\|_2 & = \left\|\underset{j_{t+1} = i,\,({j_{t},...,j_0})\in [d]^{t+1}}{\sum}\left(\prod_{k=0}^{t} W^{(k)}_{j_{k} j_{k+1}}\right) BD^{(t)}_{j_{t+1}}P^{(t)} ... D^{(0)}_{j_1}P^{(0)}X^{(0)}_{\cdot j_0}\right\|_2\nonumber\\
    & \leq \underset{j_{t+1} = i,\,({j_{t},...,j_0})\in [d]^{t+1}}{\sum}\left(\prod_{k=0}^{t} \left|W^{(k)}_{j_{k} j_{k+1}}\right|\right) \left\|BD^{(t)}_{j_{t+1}}P^{(t)} ... D^{(0)}_{j_1}P^{(0)}X^{(0)}_{\cdot j_0}\right\|_2\nonumber\\
     & \leq \underset{j_{t+1} = i,\,({j_{t},...,j_0})\in [d]^{t+1}}{\sum}\left(\prod_{k=0}^{t} \left|W^{(k)}_{j_{k} j_{k+1}}\right|\right) \left\|D^{(t)}_{j_{t+1}}P^{(t)} ... D^{(0)}_{j_1}P^{(0)}\right\|\left \|X^{(0)}_{\cdot j_0}\right\|_2\nonumber\\
    & \leq C'q^{t+1} \left(\underset{j_{t+1} = i,\,({j_{t},...,j_0})\in [d]^{t+1}}{\sum}\left(\prod_{k=0}^{t} \left|W^{(k)}_{j_{k} j_{k+1}}\right|\right)\right)\nonumber\\
    & = C'q^{t+1}\|(|W^{(0)}|...|W^{(t)}|)_{\cdot i}\|_1\,,\label{pf:nonlinear}
\end{align}
where again, $C' = C\underset{j\in [d]}{\max}\|BX^{(0)}_{\cdot j}\|_2$.

\subsection{Proof of the exponential convergence}
Based on the inequality extablished for both linear and nonlinear case in~\eqref{pf:linear} and~\eqref{pf:nonlinear}, we derive the rest of the proof. Since $\|Bx\|_2 = \|x\|_2$ if $x^\top \1 = 0$ for $x\in\R^{N}$, we also have that if $X^\top \1 = 0$ for $X\in\R^{N\times d}$, then
$$\|BX\|_F = \|X\|_F\,,$$

using which we obtain that 
\begin{align*}
    \mu(X^{(t+1)}) &= \|X^{(t+1)} - \1\gamma_{X^{(t+1)}}\|_F = \|BX^{(t+1)}\|_F = \sqrt{\sum_{i=1}^d\|BX^{(t+1)}_{\cdot i}\|_2^2}\\
    & \leq C'q^{t+1}\sqrt{\sum_{i=1}^d \|(|W^{(0)}|...|W^{(t)}|)_{\cdot i}\|^2_1} \\
    & \leq  C'q^{t+1}\sqrt{\left(\sum_{i=1}^d \||(W^{(0)}|...|W^{(t)}|)_{\cdot i}\|_1\right)^2}\\
    & = C'q^{t+1}\||(W^{(0)}|...|W^{(t)}|\|_{1,1}\,,\\
\end{align*}

where $\|\cdot\|_{1,1}$ denotes the matrix $(1,1)$-norm (recall from Section~\ref{app:L_pq} that for a matrix $M\in\R^{m\times n}$, we have $\|M\|_{1,1} = \sum_{i=1}^m \sum_{j=1}^n |M_{ij}|$). The assumption~\textbf{A3} implies that there exists $C''$ such that for all $t\in \N_{\geq 0}$,
$$\|(|W^{(0)}|...|W^{(t)}|)\|_{1,1} \leq C''d^2\,.$$

Thus we conclude that there exists $C_1$ such that for all $t\in \N_{\geq 0}$, 
$$\mu(X^{(t)}) \leq C_1 q^{t}\,.$$

\begin{Remark}
Similar to the linear case, one can also use Lemma~\ref{lem:JSR_less_than_1_DP} to establish exponential rate for oversmoothing when dealing with nonlinearities for which Assumption 4 holds in the strict sense, i.e. $0\leq\frac{\sigma(x)}{x}<1$ (e.g., GELU and SiLU nonlinearities). Here, we presented an alternative proof requiring weaker conditions, making the result directly applicable to nonlinearities such as ReLU and Leaky ReLU.
\end{Remark}

%%%%%%%%%%%%%%%%%%%%%%%%%%%%%%%%%%%%%%%%%%%%%%%%%%%%%%%%%%%%%%%%
\section{Proof of Theorem~\ref{thm: convergence_stronger_A3}}
Note that 
\[(W^{(t)})^\top \otimes P^{(t)} = \begin{bmatrix}
    W^{(t)}_{11}P^{(t)} & W^{(t)}_{21}P^{(t)} & ... & W^{(t)}_{d1}P^{(t)}\\
    W^{(t)}_{12}P^{(t)} & W^{(t)}_{22}P^{(t)} & ... & W^{(t)}_{d2}P^{(t)}\\
    \vdots &  \vdots &  \vdots &  \vdots \\
    W^{(t)}_{1d}P^{(t)} & W^{(t)}_{2d}P^{(t)} & ... & W^{(t)}_{dd}P^{(t)}\\
\end{bmatrix}\in\R_{\geq 0}^{Nd\times Nd}\,\]
is a row substochastic matrix such that for all $t\geq 0$, the nonzero entries of $(W^{(t)})^\top \otimes P^{(t)}$ are uniformly bounded below by $\xi\epsilon$. 

Moreover, We can interpret the sparsity pattern of this matrix as connectivities of a meta graph $\cG_m$ with $Nd$ nodes, where the node $(f-1)N + n$ represents the feature $f$ of node $n$ in the original graph $\cG$. We denote such a node as $(n,f)$, where $n\in[N], f\in[d]$. Then:
\begin{itemize}
    \item When only $W_{ii}^{(t)} > 0$ for all $i\in [d]$, together with \textbf{A1}, it ensures that for a fixed feature $i$, all the nodes $(n,i), n\in [N]$ fall into the same connected component of $\cG_m$, which has the same connectivity pattern as the original graph $\cG$. As a result, the connected component is non-bipartite.
    \item  If in addition, $W_{ij}^{(t)}, W_{ji}^{(t)} > 0$ for $i\neq j\in[d]$, the two connected components representing the features $i$ and $j$ will merge into one connected component, which is subsequently non-bipartite as well.
\end{itemize}

Then we apply Lemma~\ref{lem:hetero_prod_convergence_w_D} to each connected component separately and conclude the statement of the theorem.

%%%%%%%%%%%%%%%%%%%%%%%%%%%%%%%%%%%%%%%%%%%%%%%%%%%%%%%%%%%%%%%%

\section{Proof of Proposition~\ref{prop:better_than_GCN}}
Since $D^{-1}_{\deg}A$ is similar to $D^{-1/2}_{\deg }AD^{-1/2}_{\deg}$, they have the same spectrum. For $D^{-1}_{\deg}A$, the smallest nonzero entry has value $1/d_{\max}$, where $d_{\max}$ is the maximum node degree in $\cG$. On the other hand, 
it follows from the definition of $\cP_{\cG, \epsilon}$ that \[\epsilon d_{\max} \leq 1\,.\]
Therefore,  $\epsilon \leq 1/d_{\max}$ and thus $D^{-1}_{\deg}A\in\cP_{\cG,\epsilon}$.

We proceed by proving the following result.
\begin{lemma}
    For any $M$ in $\cM$, the spectral radius of $M$ denoted by $\rho(M)$, satisfies 
    \[\rho(M) \leq \JSR(\cM)\,.\]
    \label{lem:spectral_radius_ineq}
\end{lemma}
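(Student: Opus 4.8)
The plan is to exploit the spectral-radius formula $\rho(M) = \lim_{k\to\infty}\|M^k\|^{1/k}$ (Gelfand's formula), which holds for any single matrix $M$ and any submultiplicative matrix norm, and compare it term-by-term with the definition of the joint spectral radius. First I would fix an arbitrary $M\in\cM$ and an arbitrary $k\in\N$. Choosing the particular product $A_1 = A_2 = \cdots = A_k = M$ among all length-$k$ products of elements of $\cM$, we immediately get
\[
\|M^k\|^{1/k} \;\leq\; \sup_{A_1,\dots,A_k\in\cM}\|A_1 A_2\cdots A_k\|^{1/k}\,.
\]
Taking $\limsup_{k\to\infty}$ on both sides, the left-hand side converges to $\rho(M)$ by Gelfand's formula, while the right-hand side is exactly $\JSR(\cM)$ by definition. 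Hence $\rho(M)\leq\JSR(\cM)$, and since $M\in\cM$ was arbitrary, the claim follows.

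There is essentially no obstacle here — the only point requiring a word of care is that the norm in the definition of $\JSR$ and the norm in Gelfand's formula must be taken to be the same submultiplicative norm; this is harmless because $\JSR(\cM)$ is independent of the norm used (as already noted in the definition of the joint spectral radius in the excerpt), so we may simply pick one submultiplicative norm and use it throughout. One should also note $\cM$ is bounded (as verified when applying Lemma~\ref{lem:JSR_AAS}), so $\JSR(\cM)$ is finite and all quantities above are well defined.

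Once Lemma~\ref{lem:spectral_radius_ineq} is in hand, the proof of Proposition~\ref{prop:better_than_GCN} concludes quickly: having shown $D_{\deg}^{-1}A\in\cP_{\cG,\epsilon}$, let $\tilde{P}$ be the associated matrix in $\tilde\cP_{\cG,\epsilon}$ with $B(D_{\deg}^{-1}A) = \tilde{P}B$. Since $\tilde P = B(D_{\deg}^{-1}A)B^\top$ and $D_{\deg}^{-1}A$ (equivalently $D_{\deg}^{-1/2}AD_{\deg}^{-1/2}$) has the all-ones vector as the eigenvector for eigenvalue $1$, restricting to the orthogonal complement of $\Span\{\1\}$ removes exactly that eigenvalue, so the second largest eigenvalue $\lambda$ of $D_{\deg}^{-1/2}AD_{\deg}^{-1/2}$ is an eigenvalue of $\tilde P$, giving $\lambda\le\rho(\tilde P)$. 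Applying Lemma~\ref{lem:spectral_radius_ineq} with $\cM = \tilde\cP_{\cG,\epsilon}$ then yields $\lambda \le \rho(\tilde P)\le \JSR(\tilde\cP_{\cG,\epsilon})$, as desired.
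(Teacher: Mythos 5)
Your proof is correct and follows essentially the same route as the paper's: invoke Gelfand's formula $\rho(M)=\lim_{k\to\infty}\|M^k\|^{1/k}$ and compare term-by-term against the supremum defining $\JSR(\cM)$ by taking all factors equal to $M$. The extra remarks about norm-independence and boundedness are fine but not strictly needed; the paper's one-line comparison suffices.
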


\begin{proof}
    Gelfand's formula states that 
    $\rho(M) =\underset{k\to \infty}{\lim}\|M^k\|^{\frac{1}{k}}$, where the quantity is independent of the norm used~\cite{Lax2002FA}. Then comparing with the definition of the joint spectral radius, we can immediately conclude the statement.
\end{proof}

Let $B(D_{\deg}^{-1}A) = \tilde{P}B$. By definition, $\tilde{P}\in \tilde{\cP}_{\cG,\epsilon}$ since $D^{-1}_{\deg}A\in\cP_{\cG,\epsilon}$ as shown before the lemma. Moreover, the spectrum of $\tilde{P}$ is the spectrum of $D_{\deg}^{-1}A$ after reducing the multiplicity of eigenvalue $1$ by one. Under the assumption~\textbf{A1}, the eigenvalue $1$ of $D_{\deg}^{-1}A$ has multiplicity $1$, and hence $\rho(\tilde{P}) = \lambda$, where $\lambda$ is the second largest eigenvalue of $D_{\deg}^{-1}A$.
Putting this together with Lemma~\ref{lem:spectral_radius_ineq}, we conclude that 
\[\lambda \leq \JSR(\tilde \cP_{\cG,\epsilon})\]
as desired.

\section{Numerical Experiments}\label{app:numerical}
Here we provide more details on the numerical experiments presented in Section~\ref{sec:exp}. All models were implemented with PyTorch~\cite{Paszke2019PyTorchAI} and PyTorch Geometric~\cite{Fey/Lenssen/2019}.

\paragraph{Datasets} 
\begin{itemize}
    \item We used \texttt{torch\_geometric.datasets.planetoid} provided in PyTorch Geometric for the three homophilic datasets: Cora, CiteSeer, and PubMed with their default training and test splits. 
    \item We used \texttt{torch\_geometric.datasets.WebKB} provided in PyTorch Geometric for the three heterophilic datasets: Cornell, Texas, and Wisconsin with their default training and test splits. 
    \item Dataset summary statistics are presented in Table~\ref{tab:dataset}.
\end{itemize}

\begin{table}[h]
    \centering
    \begin{tabular}{cccc}
    \toprule
        Dataset & Type & $\#$Nodes & $\%$Nodes in LCC   \\
        \midrule
         Cora& & 2,708 & 91.8 $\%$ \\
         CiteSeer& homophilic & 3,327 & 63.7 $\%$  \\
         PubMed & & 19,717 & 100 $\%$ \\
         \midrule
         Cornell& & 183 & 100 $\%$ \\
         Texas & heterophilic & 183 & 100 $\%$ \\
         Wisconsin & & 251 & 100 $\%$ \\
         \midrule
         Flickr& large-scale & 89,250 & 100 $\%$ \\
    \bottomrule
    \end{tabular}
    \vspace{2ex}
    \caption{Dataset summary statistics. LCC: Largest connected component.}
    \label{tab:dataset}
\end{table}

\paragraph{Model details}
\begin{itemize}
    \item For GAT, we consider the architecture proposed in Veli\v{c}kovi\'{c} et al.~\cite{Velickovic2017GraphAN} with each attentional layer sharing the parameter $a$ in $\LeakyReLU(a^\top[W^\top X_i||W^\top X_j]), a\in\R^{2d'}$ to compute the attention scores.
    \item For GCN, we consider the standard random walk graph convolution $D^{-1}_{\deg}A$. That is, the update rule of each graph convolutional layer can be written as 
    \[X' = D^{-1}_{\deg}AXW\,,\]
    where $X$ and $X'$ are the input and output node representations, respectively, and $W$ is the shared learnable weight matrix in the layer. 
\end{itemize}

\begin{figure}[b]
\centering
\hspace*{-0.2cm} 
    \includegraphics[width = \textwidth]{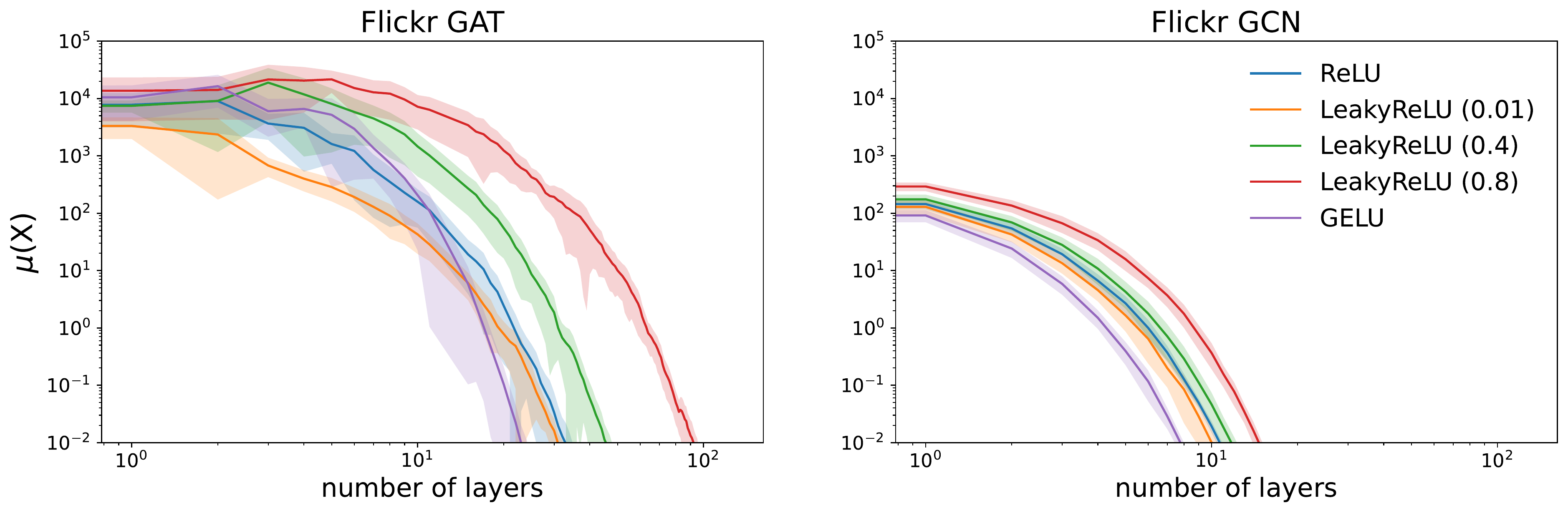}
    \caption{Evolution of $\mu(X^{(t)})$ (in log-log scale) on the largest connected component of the large-scale benchmark dataset: Flickr. }
    \label{fig:flickr}
\end{figure}

\paragraph{Compute} We trained all of our models on a Tesla V100 GPU.

\paragraph{Training details} In all experiments, we used the Adam optimizer using a learning rate of $0.00001$ and $0.0005$ weight decay and trained for $1000$ epoch

\paragraph{Results on large-scale dataset} In addition to the numerical results presented in Section~\ref{sec:numeric}, we also conducted the same experiment on a large-scale dataset Flickr~\cite{Zeng2019GraphSAINTGS} using \texttt{torch\_geometric.datasets.Flickr}. Figure~\ref{fig:flickr} visualizes the results.

\end{document}